\documentclass{article}
\usepackage{iclr2027_conference,times}
\usepackage{amsmath, amssymb, amsthm, mathtools}
\usepackage{bm}
\usepackage{microtype}
\usepackage[colorlinks=true,linkcolor=blue!50!black,urlcolor=blue!60!black,citecolor=blue!60!black]{hyperref}
\usepackage{booktabs}
\usepackage{graphicx}
\usepackage{xcolor}
\usepackage{enumitem}
\setlist{itemsep=2pt, topsep=2pt}

\theoremstyle{plain}
\newtheorem{theorem}{Theorem}
\newtheorem{proposition}[theorem]{Proposition}
\newtheorem{lemma}[theorem]{Lemma}
\newtheorem{corollary}[theorem]{Corollary}
\theoremstyle{definition}

\newtheorem{assumption}[theorem]{Assumption}

\theoremstyle{remark}
\newtheorem*{remark}{Remark}

\newcommand{\R}{\mathbb{R}}
\newcommand{\E}{\mathbb{E}}
\newcommand{\Var}{\operatorname{Var}}
\newcommand{\Cov}{\operatorname{Cov}}

\newcommand{\shat}{\hat{s}}
\newcommand{\skappa}{s_\kappa}
\newcommand{\M}{\mathcal{M}}
\newcommand{\PT}{P_T}
\newcommand{\PN}{P_N}
\newcommand{\II}{\mathrm{I\!I}}
\newcommand{\shape}{S}

\newcommand{\tpt}{\text{T-PT}}

\title{Is your uncertainty map wrong, or is its \\ 
target? Exact diagnostics for the Tweedie \\
diagonal, and a gradient-free alternative}

\author{%
  Vicent Ribas$^{1}$ \qquad Anna Oliveras Tous$^{2,1}$ \\[3pt]
  {\normalfont\normalsize $^{1}$Eurecat \qquad
   $^{2}$University of Barcelona}
}

\date{}
\iclrfinalcopy   

\begin{document}
\maketitle
\lhead{Preprint.  Under review.}

\begin{abstract}
Diffusion models can predict a follow-up medical scan from a baseline, and a clinician reading such a prediction needs a per-voxel map of where it should be trusted.  Many of these maps approximate the diagonal of the Tweedie posterior covariance, and they are evaluated by comparing one approximation against another, so it remains unclear whether the estimator or the target is the limiting factor.  We therefore compute the exact diagonal on six checkpoints across fourteen model--corpus conditions.  A Hutchinson estimator at $M{=}200$ tracks it at rank agreement of at least $0.92$ in every condition.  However, in four of the fourteen the exact diagonal is anti-correlated with the denoising error, reaching $-0.13$, so a faithful estimator reproduces that reversal.  All four are real-image conditions, and in these checkpoints the reversal does not appear on the models' own samples, so evaluation on model-generated samples gives a favourable reading of this family.  From this one may conclude that the limiting factor is the target rather than the estimator.

We then introduce Tweedie Probe-Tangent (\tpt{}), a gradient-free residual probe that corrupts one model-supported prediction repeatedly and measures the voxel-wise variance of the denoiser's response.  \tpt{} reads a different functional of the same Jacobian, and its exact second-order form ranks identically with the diagonal wherever the diagonal reverses.  At the working budget of thirty probes, however, \tpt{} returns a map too unstable to reproduce that ranking, whilst a Hutchinson estimator at $M{=}5$ is already stable enough to reproduce it; \tpt{} at that budget is therefore too unstable to be read as evidence against the reversal.  Where the diagonal is not reversed, \tpt{} tracks the diagonal closely in most conditions, at a small fraction of the cost of the exact computation.

We therefore offer \tpt{} as an instrument rather than as a better approximation, and we evaluate it on clinical follow-up prediction from a single reverse chain, on brain magnetic resonance imaging (MRI) and on lung computed tomography (CT).  On brain MRI at full resolution, where every Jacobian-based estimator we test exhausts memory, \tpt{} leads the twenty-chain Monte-Carlo ensemble on five of the eight endpoints measured inside tissue and trails it on none, at $16\times$ fewer network evaluations.  Over the whole volume the ensemble leads instead, on a comparison carried by the background that fills three quarters of it, and a fifty-chain ensemble closes the tissue gap on a thirty-volume subset.  On lung CT the ensemble is ahead on every endpoint, whilst a Hutchinson estimate of that diagonal ranks the error well below \tpt{}.  Both estimators lose most of their discrimination in the regions of longitudinal change, which remains an open problem.
\end{abstract}

\section{Introduction}
\label{sec:intro}

Diffusion models can predict a follow-up medical image from a baseline
scan \citep{oliveras2026mumins, puglisi2024brlp, litrico2026tadm,
liu2025tadpm, pinaya2022brain}, but a plausible prediction does not tell a clinician
where it should be trusted.  The practical object is therefore a
spatial uncertainty map: a per-voxel ranking that directs attention
toward locations where the prediction is most likely to be wrong.  We
study such rankings rather than calibrated predictive standard
deviations (Sec.~\ref{sec:limitations}).

Existing approaches obtain these maps in three ways.  Monte-Carlo
samplers measure variation across reverse chains and multiply
inference cost by the number of chains \citep{wolleb2022seg,
kou2024bayesdiff}: the $20$-chain reference used here costs $1{,}220$
network evaluations per volume.  Analytical methods learn
uncertainty in the noise prediction and propagate it through
approximate reverse updates along a single chain
\citep{capellera2025u2diff, capellera2026hetero, oliveras2026mumins}.
A third family approximates the diagonal of the Tweedie posterior
covariance from the denoiser Jacobian \citep{boys2024tweedie,
peng2024optimal, rout2024beyond, rissanen2025freehunch, ou2025optimal,
schioppa2026covariance}, and its effort has gone into making that
diagonal cheaper to estimate.  Because the exact diagonal requires one
Jacobian--vector product per input coordinate, evaluations normally
compare one approximation with another.  This leaves a basic ambiguity
unresolved: when an uncertainty map fails, is the estimator
inaccurate, or is it faithfully estimating an unsuitable target?

We resolve that ambiguity in two stages.  First, we treat the exact
diagonal as a diagnostic instrument.  On six checkpoints small
enough for exact computation, one of them a public medical model,
fourteen model--corpus conditions in all, we compare it with common
approximations and ask whether the diagonal itself ranks denoising
error.  The approximation is not the bottleneck: Hutchinson estimates
\citep{hutchinson1990stochastic} track the exact diagonal at rank
agreement of at least $0.92$ in every condition, while in four conditions
the exact target is anti-correlated with the error, reaching
$-0.13$, so that a faithful estimator reproduces the reversal.  In
these checkpoints, evaluation on model-generated samples does not
reveal the reversal observed on real images.  Posterior variance need not equal reconstruction
error, and the works that construct the Tweedie diagonal mostly serve
a sampler rather than claim it ranks error
(Sec.~\ref{sec:related}).  But an error-ranking overlay is what a
clinician reading a predicted scan needs, and maps of every family
are deployed that way \citep{kou2024bayesdiff, wolleb2022seg}.  We
therefore ask not whether the field's object was built for that use
but whether it would serve it, which is what decides if a better
approximation of it is worth building.

Second, we respond to the diagnosis with Tweedie Probe-Tangent
(\tpt{}).  \tpt{} repeatedly corrupts one model-supported clean
prediction and measures the voxel-wise variance of the denoiser's
residual response.  \textbf{\emph{Its estimand is a different functional whose ranking
follows the diagonal's where the diagonal reverses; at the working
budget the probe resolves that ranking only where the diagonal is
informative}}, most negative correlation $-0.016$ against the
diagonal's $-0.13$, while Hutchinson at a smaller budget already
reproduces the reversal.  On
longitudinal brain MRI at the full $128^3$ resolution, where the
Jacobian-based estimators do not fit in memory, a single reverse
chain and $K$ perturbations rank the error inside brain tissue as
well as or better than a twenty-chain Monte-Carlo ensemble at
$16\times$ fewer network evaluations.  The natural-image experiments
measure the probe against the exact diagonal (Sec.~\ref{sec:tpt});
the clinical ones test it at the scale the method is for
(Sec.~\ref{sec:exp-setup}).

\section{Related work and evaluation gap}
\label{sec:related}

\paragraph{Sampling-based uncertainty.}
Monte-Carlo diffusion methods estimate spatial uncertainty from the
spread of independent reverse chains \citep{wolleb2022seg,
kou2024bayesdiff, puglisi2024brlp, shaw2025bayes}: a natural empirical
reference, and ours, but linear in the number of chains.  The remaining
families target something other than the spatial ranking of one error
map, whether uncertainty over networks \citep{kou2024bayesdiff,
berry2024shedding, jazbec2025generative, gal2016dropout} or coverage
\citep{angelopoulos2022conformal, vanamersfoort2020duq}, and mostly
need extra training or several models.

\paragraph{Analytical variance propagation.}
Analytical methods propagate uncertainty through the reverse
diffusion process.  Whereas iDDPM learns reverse-transition variances
\citep{nichol2021improved}, U2Diff learns uncertainty in noise
predictions and accumulates it along a DDIM \citep{song2020ddim}
trajectory, neglecting state--prediction covariance and delaying
accumulation to limit variance inflation \citep{capellera2025u2diff};
U2Diffine adds denoiser sensitivity through a first-order expansion
with a diagonal Jacobian approximation \citep{capellera2026hetero}.
MUMINS \citep{oliveras2026mumins} extends U2Diff to 3-D medical
follow-up synthesis along a stochastic reverse chain;
App.~\ref{app:recursion} establishes that the recursion is expansive,
which is why the delayed start is necessary.

\paragraph{Tweedie covariance estimators.}
Recent methods estimate or approximate posterior covariance from the
denoiser Jacobian: a row-sum surrogate from one Jacobian--vector
product \citep{boys2024tweedie}, a wavelet diagonal
\citep{peng2024optimal}, an isotropic approximation
\citep{rout2024beyond}, low-rank running estimates
\citep{rissanen2025freehunch}, trained heads \citep{ou2025optimal},
and DCT block diagonals \citep{schioppa2026covariance}.  Analytic-DPM
derives an optimal diagonal under an imperfect mean
\citep{bao2022estimating}, and CA-DPS uses finite differences for
inverse problems \citep{hamidi2024cadps}.  These works improve access
to an intractable covariance object, and each serves a sampler, where
off-diagonal structure matters because one draws from the matrix.  We
claim no part of the estimand and no improvement to its approximation;
what we add is the exact object it is approximating, and what that
object turns out to say.

Algebraically \tpt{} is a perturbation-based sensitivity estimate, in
the family of test-time augmentation for aleatoric uncertainty
\citep{ayhan2018tta, wang2019aleatoric}: repeated corruptions of one
input, and the spread of the response.  Closest of all,
\citet{devita2025pixelwise} denoise to $\hat x_0$, re-noise it $M$ times
and take the per-pixel variance of the \emph{scores} $\epsilon_\theta$,
to guide a sampler and filter samples by FID; we take the variance of
the residual $\xi - \epsilon_\theta$ instead, which reads a different
functional of the same Jacobian (App.~\ref{app:openq}) and a different
map on both clinical tasks (App.~\ref{app:controlled}).  The
corruption is the forward process itself,
at the timestep $t^\ast$ the denoiser was trained on, which gives the
residual the geometric reading of Prop.~\ref{prop:pt}; keeping the
residual rather than the reconstruction is not a further difference,
the two rank identically (App.~\ref{app:controlled}).  \citet{stanczuk2024intrinsic} also
perturb a point repeatedly, but apply an SVD to score vectors to
estimate one intrinsic-dimension scalar; \tpt{} retains the per-voxel
residual variance and forms no matrix.

\section{Diagnosis: is the exact diagonal a useful target?}
\label{sec:exact}

\paragraph{What is computed.}
For a denoiser $\epsilon_\theta$ and a noised state $x_t$, Tweedie's
identity gives the posterior mean $\hat{x}_0(x_t)$ and, differentiating
it, the posterior covariance up to schedule factors; the per-voxel map
in use is its diagonal,
$\sigma_i^2 = \frac{1-\bar\alpha}{\sqrt{\bar\alpha}}\,
\partial \hat{x}_{0i}/\partial x_{ti}$ (App.~\ref{app:budgets}).
The exact diagonal costs
$d$ Jacobian--vector products, one per one-hot tangent; we compute
all $d$: $12{,}288$ for a $64^2$ RGB image, $3{,}072$ at $32^2$,
$12{,}288$, $4{,}096$ and $4{,}096$ in the three latent spaces.  Forward-mode
automatic differentiation is used throughout, under \texttt{no\_grad}
(App.~\ref{app:checkpointing} explains why none of the three public
codebases can be traversed with \texttt{torch.func} as released).

\paragraph{Design.}
Six checkpoints: \texttt{inet} and \texttt{cifar}, the unconditional
ImageNet-64 \citep{deng2009imagenet} and CIFAR-10
\citep{krizhevsky2009cifar} models of \citet{nichol2021improved};
\texttt{cond}, the class-conditional ImageNet-64 model of
\citet{dhariwal2021diffbeatgans}; \texttt{ffhq} and
\texttt{church}, latent-diffusion models \citep{rombach2022ldm} for
FFHQ \citep{karras2019stylegan} and LSUN churches
\citep{yu2015lsun}; and \texttt{brats}, a public MONAI
latent-diffusion bundle \citep{pinaya2023monaigen} for axial MRI brain slices from BraTS dataset
\citep{menze2015brats} (App.~\ref{app:conditions}).  Each is evaluated under up to three
corpora: \texttt{coco} (held-out photographs
\citep{lin2014coco}), \texttt{self} (the model's own samples) and
\texttt{real} (held-out images from the training domain, Imagenette
\citep{howard2019imagenette} for the ImageNet checkpoints, and
unavailable for the latent models).  A condition such as \texttt{cond/self}
therefore names a checkpoint--corpus pair, and fourteen such pairs
exist.  These checkpoints denoise single images; the
baseline--follow-up structure of Sec.~\ref{sec:exp-setup} plays no
part in this track.  The three corpora are the experiment, not a robustness check
(Sec.~\ref{sec:res-sign}).  One hundred images per condition, a seeded random subset; the probe
timestep matches the clinical runs' forward marginal scale
($\sigma_{t^\ast}= 0.32$, from $t^\ast=60$ under their $T{=}300$ cosine
schedule; Sec.~\ref{sec:exp-setup}), so the two halves of the paper measure at
comparable noise.  Every estimator's cost is counted in the same unit,
forward-equivalent network evaluations, with the conversion fixed in
App.~\ref{app:budgets} before any score was read.  The denoising
error $e$ is $|\hat x_0(x_{t^\ast}) - x_0|$ per pixel, averaged over
twenty noise draws (App.~\ref{app:defs}).
\subsection{Validity of the implied covariance}
The bracket $1-\sqrt{1-\bar\alpha}\,\operatorname{diag}(J)$ can be
negative.  In the one-hot runs it never is on the two pixel-space
checkpoints, and it is on up to $5.7\%$ of coordinates on the
class-conditional one (Table~\ref{tab:conditions}).  A negative entry
there is not an estimation error, because the diagonal is exact; it
says that the covariance implied by the learned denoiser is not a
covariance.  We clamp at zero and report the clamped fraction
(App.~\ref{app:defs}).  The probe introduced in Sec.~\ref{sec:tpt} cannot produce this, since it reports a sample standard deviation directly; its clamped fraction
is exactly zero.

\subsection{Fidelity is not the binding constraint}
\label{sec:res-fidelity}

The Hutchinson estimator converges to the exact diagonal; the convergence, however,
buys nothing.  On the class-conditional ImageNet checkpoint scored on
its own samples (\texttt{cond/self}, Table~\ref{tab:conditions}),
Hutchinson's Spearman correlation with the denoising error rises with
$M = 5, 15, 50, 200$ probes as $0.298$, $0.359$, $0.410$, $0.435$,
approaching the $0.448$ that the exact diagonal itself achieves
(Table~\ref{tab:mgrid}).  The
pattern holds in all fourteen conditions: at $M{=}200$ Hutchinson
matches the exact diagonal at rank correlation $0.92$--$0.996$, and
its correlation with the error climbs to a limit that is the
diagonal's own and no better.  Fidelity to the diagonal does not
guarantee usefulness of the diagonal: the approximation is loosest on
the class-conditional checkpoint, where the diagonal ranks the error
best (relative $L_2$ of $8\times10^{-2}$), and tightest on CIFAR,
where the diagonal is anti-correlated with the error on both
real-image corpora ($3\times10^{-4}$).  On \texttt{cond/self},
\tpt{} reaches $0.451$ at $30$ forwards, against roughly $49{,}000$
measured forward-equivalents for the exact object
(App.~\ref{app:budgets}).  The row-sum surrogate fails differently:
it does not converge to the diagonal at all, and its correlation with
the error is negative or null in ten of the fourteen conditions,
reaching $-0.232$ (Table~\ref{tab:conditions}).  A better approximation of this diagonal is
therefore not a route to a better map.

\subsection{The failure mode, and what it depends on}
\label{sec:res-sign}

Table~\ref{tab:conditions} reports the Spearman correlation between
the exact diagonal and the denoising error in each of the fourteen
model--corpus conditions; every $95\%$ interval excludes zero except
\texttt{ffhq/self}, which straddles it.  The four negative rows are the failure mode of
Sec.~\ref{sec:intro} (a map that ranks error-prone voxels below
error-free ones), and the table narrows where it lives.  Being out of distribution does not explain the
sign: Imagenette, the unconditional checkpoint's own training domain,
gives $-0.131$ next to held-out COCO's $-0.132$, so on the one
checkpoint where both corpora exist, the domain leaves the sign
untouched.  What separates the
conditions is real images against the model's own samples: the two
real-image corpora agree closely, while on their own output five of
the six checkpoints turn clearly positive and the sixth stays
non-negative.  For the two pixel-space
unconditional checkpoints the flip is visible within a single model
across all three corpora (\texttt{inet}: $-0.132$, $-0.131$, $+0.058$;
\texttt{cifar}: $-0.091$, $-0.077$, $+0.029$).

The reversal belongs to one estimand: it is a property of the
diagonal, inherited by any estimator of it but not by Monte-Carlo
sampling or by the variance recursion, which estimate different
objects.  On the two clinical checkpoints, where the exact diagonal is
out of reach, every computable estimator correlates positively with
the error (Table~\ref{tab:clinical}).

\begin{figure}[t]
\centering
\includegraphics[width=\linewidth]{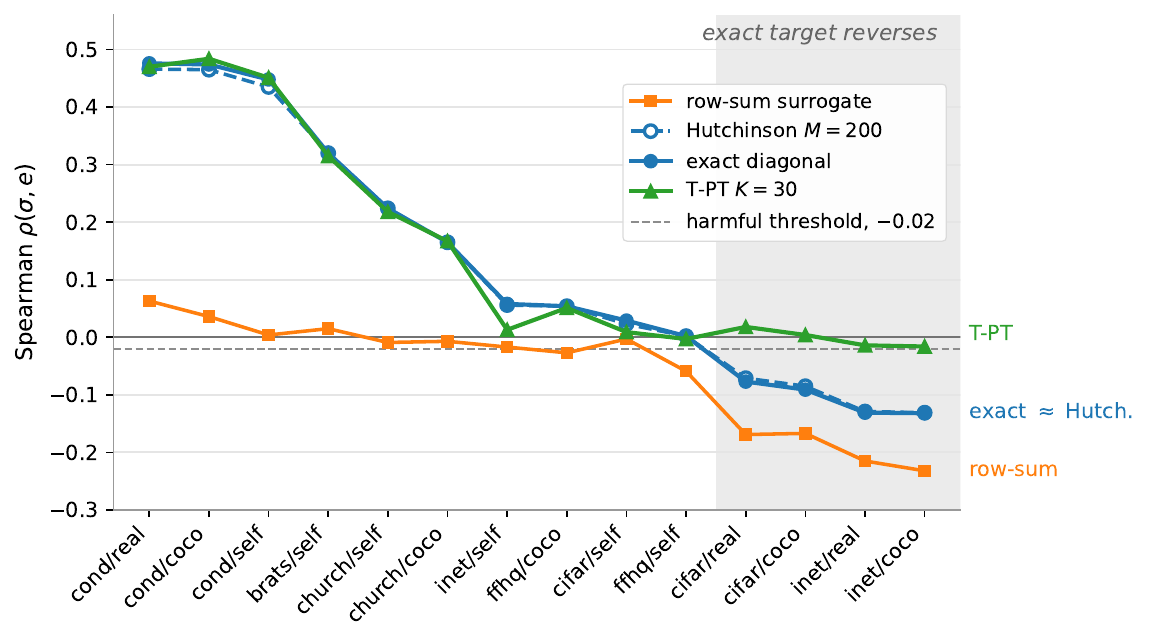}
\caption{Spearman correlation with the denoising error, per condition,
mean over $100$ images, ordered by the exact diagonal
(Table~\ref{tab:conditions} gives the numbers and the intervals,
in its own ordering).  In the shaded block the exact diagonal is
\emph{anti}-correlated with the error, and Hutchinson at $M{=}200$
follows it there as everywhere else, so what fails is the target and
not its approximation.  A condition is \emph{harmful} when its
interval lies entirely below $-0.02$ (dashed), a threshold fixed on
the synthetic manifolds of App.~\ref{app:toy} before any
natural-image score was read: four of fourteen for the exact diagonal,
four for Hutchinson, six for the row-sum, and none for \tpt{} at $K =
30$, whose map runs flat along zero there rather than tracking the
diagonal.  Sec.~\ref{sec:res-robust} and App.~\ref{app:openq} show
that this flatness is a property of that budget: with $4$--$16\times$
the probes \tpt{} is harmful on the two \texttt{inet} conditions too.}
\label{fig:conditions}
\end{figure}

\section{Response: Tweedie Probe-Tangent, a one-chain residual probe}
\label{sec:tpt}

If the target is what fails, a better approximation of it is not the
remedy, and the estimator we introduce is not one either: it is a
different functional whose second-order expansion ranks with the
diagonal where the diagonal reverses.  This section defines it, gives it a
geometric reading, and returns to the fourteen conditions to show
what it delivers at its budget.

\subsection{Estimator and cost}
\label{sec:tpt-estimator}

Given a model-supported clean state $x_0 \in \M$, fix a probe timestep
$t^\ast$, draw $K$ independent $\xi_k \sim \mathcal{N}(0, I)$ and form
\begin{equation}
    X_{t^\ast}^{(k)} := \sqrt{\bar\alpha_{t^\ast}}\, x_0
        + \sqrt{1 - \bar\alpha_{t^\ast}}\, \xi_k.
    \label{eq:xt}
\end{equation}
The \emph{Tweedie residual} is
$r_k := \xi_k - \epsilon_\theta(X_{t^\ast}^{(k)}, t^\ast, c)$, with
$c$ the conditioning where the model has one (Sec.~\ref{sec:tpt-indist});
the estimator is the per-voxel sample variance across draws,
\begin{equation}
    \widehat{\Var}(r_i) := \frac{1}{K-1}
    \sum_{k=1}^K \bigl(r_k[i] - \bar r[i]\bigr)^2,
    \qquad
    \bar r[i] := \frac{1}{K}\sum_{k=1}^{K} r_k[i],
    \label{eq:tpt}
\end{equation}
and the reported map is
$\sigma_{\text{TW}}[i] := \sqrt{\widehat{\Var}(r_i)}$.
Throughout, $K$ is the number of independent draws
$\xi_1, \dots, \xi_K \sim \mathcal{N}(0, I_d)$ at the single
timestep $t^\ast$: each costs one evaluation of $\epsilon_\theta$
and yields one residual $r_k \in \mathbb{R}^d$, and the map is the
unbiased sample variance of $\{r_k[i]\}_{k=1}^K$ at every voxel, so
$K$ is at once that variance's sample size and the number of network
evaluations the probe adds to its chain.  One set of $K$ draws serves
every voxel of a map; runs at different $K$ use independent draws.
One timestep suffices: the probe reads local sensitivity at one
noise scale, where Prop.~\ref{prop:pt} holds, and the admissible
band is wide (App.~\ref{app:tstar}).  The draws batch into one
call; nothing is differentiated, and the estimator runs under
\texttt{no\_grad}.  The probe costs one
reverse chain of $T/\zeta + 1$ evaluations plus $K$ forwards.  On the
two clinical tasks of Sec.~\ref{sec:exp-setup} that is $91$ forwards
on lung Computed Tomography (CT) at $K{=}30$ and $76$ on brain Magnetic Resonance Imaging (MRI) at $K{=}15$, against
$J(T/\zeta+1) = 1{,}220$ for MC-DDIM at $J{=}20$;
App.~\ref{app:kablation} sweeps $K$.  We count network evaluations
throughout; batching is an implementation option, not a measured
speed-up.

\subsection{Geometric interpretation}
\label{sec:tpt-geom}

\noindent
Two measures are in play, and we keep them apart.  Under the
\emph{joint} law $X_0 \sim p_{\text{data}}$, $\xi \sim
\mathcal{N}(0,I)$, conditional expectation is the orthogonal
projection in $L^2$, so $r = \xi - \E[\xi \mid X_t]$ is the component
of $\xi$ orthogonal to the square-integrable
$\sigma(X_t)$-measurable functions, and $\Cov(r)$ is the covariance
of the injected noise that $X_t$ does not identify.  The estimator,
however, \emph{fixes} $x_0$ and varies only $\xi$, so we write
$C_t(x_0) := \Cov_\xi\bigl[\xi - \epsilon^\ast(X_t,t)\bigr]$ for that
conditional object, with $\epsilon^\ast$ still the population-optimal
denoiser under the joint law; the probe estimates $C_t(x_0)$.

\begin{proposition}[The residual is the unidentifiable part of the noise]
    \label{prop:pt}
    On a flat manifold $\M = x_0 + T$, $\operatorname{range}(C_t)
    \subseteq T$ for any tangent law.  If the tangent coordinate is
    Gaussian, $u \sim \mathcal{N}(0,\tau^2\PT)$, then
    \begin{equation}
        C_t(x_0) = \gamma_t^2\,\PT,
        \qquad
        \gamma_t = \frac{\bar\alpha_t\tau^2}{\bar\alpha_t\tau^2+\sigma_t^2}.
        \label{eq:flat-exact}
    \end{equation}
    The sample covariance of $K$ residuals converges almost surely to
    $C_t(x_0)$.  Proofs and the hypotheses they need are in
    App.~\ref{app:proof}.
\end{proposition}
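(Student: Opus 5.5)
The plan is to set up the flat model explicitly and compute the population-optimal denoiser in closed form, splitting every vector into its tangent and normal components. Write $\M = \bar x + T$ with $T$ a linear subspace, $\PT$ the orthogonal projector onto it and $\PN = I - \PT$. Under the joint law, $X_0 = \bar x + u$ with $u \in T$ drawn from the tangent law, $\xi \sim \mathcal{N}(0,I)$ independent, and $X_t = \sqrt{\bar\alpha_t}\,X_0 + \sigma_t \xi$ with $\sigma_t^2 = 1-\bar\alpha_t$. The optimal denoiser is $\epsilon^\ast(X_t,t) = \E[\xi \mid X_t]$, so $r = \xi - \E[\xi\mid X_t]$. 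Because $\xi$ is isotropic Gaussian, $\PT\xi$ and $\PN\xi$ are independent, and both are independent of $u$. This decomposition drives both parts of the statement.

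\emph{Range inclusion, for any tangent law.} Since $\PN X_0 = \PN \bar x$ is deterministic, we have $\PN X_t = \sqrt{\bar\alpha_t}\,\PN\bar x + \sigma_t \PN\xi$. Hence $\PN\xi = (\PN X_t - \sqrt{\bar\alpha_t}\PN\bar x)/\sigma_t$ is $\sigma(X_t)$-measurable. It follows that $\PN\E[\xi\mid X_t] = \E[\PN\xi\mid X_t] = \PN\xi$, so $\PN r = 0$ identically. In particular this holds for every fixed $x_0$ and every draw of $\xi$, so $\PN C_t(x_0) = C_t(x_0)\PN = 0$, which is $\operatorname{range}(C_t)\subseteq T$.

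\emph{Gaussian case.} When $u \sim \mathcal{N}(0,\tau^2\PT)$, the pair $(\PT\xi, \PT X_t)$ is jointly Gaussian and independent of the normal block. The tangent block of $X_t$ therefore carries all the information about $\PT\xi$. Standard Gaussian conditioning, with $\Cov(\PT\xi,\PT X_t) = \sigma_t\PT$ and $\Cov(\PT X_t) = (\bar\alpha_t\tau^2 + \sigma_t^2)\PT$, gives
\[
\PT\E[\xi\mid X_t] = \frac{\sigma_t}{\bar\alpha_t\tau^2+\sigma_t^2}\,\PT\bigl(X_t - \sqrt{\bar\alpha_t}\,\bar x\bigr).
\]
Now fix $x_0 = \bar x + u_0$ as the estimator does and substitute $\PT X_t = \sqrt{\bar\alpha_t}\PT(\bar x + u_0) + \sigma_t\PT\xi$. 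The residual becomes $\PT r = \gamma_t\PT\xi - c(u_0)$, where $\gamma_t = 1 - \sigma_t^2/(\bar\alpha_t\tau^2+\sigma_t^2)$ matches \eqref{eq:flat-exact} and $c(u_0)$ is deterministic. Its covariance over $\xi$ is then $\gamma_t^2\PT$.

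\emph{Almost-sure convergence.} With $x_0$ fixed, the $K$ residuals are i.i.d.\ functions of the i.i.d.\ $\xi_k$, and the sample covariance is a continuous function of the empirical first and second moments. The strong law of large numbers therefore gives convergence to $C_t(x_0)$ once $\E_\xi\|r\|^2 < \infty$. In the Gaussian case this is immediate, since $r$ is affine in $\xi$.

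The main obstacle is this integrability for a general tangent law, where $\epsilon^\ast$ has no closed form and $C_t(x_0)$ must even be shown to be finite. Conditional expectation is an $L^2$ contraction under the joint law, so $\E\|r\|^2 \le \E\|\xi\|^2 = d$. By Fubini, $\E_\xi\|r\|^2 < \infty$ for $p$-almost every $x_0$, which I would state as the hypothesis (holding for almost every $x_0$, or for every $x_0$ when $\epsilon^\ast$ has at most linear growth). A second subtlety is that $\E[\xi\mid X_t]$ is defined only up to null sets. I would fix a version by Bayes' formula with the Gaussian kernel, which is continuous in $x_t$, so that evaluating it at a fixed $x_0$ is meaningful.
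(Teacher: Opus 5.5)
Your proposal is correct and follows essentially the paper's route: you split $\xi$ into tangent and normal parts, note that $\PN\xi$ is a measurable function of $X_t$ so the normal residual vanishes for any tangent law, and apply Gaussian conditioning to the tangent block; the paper probes at the prior mean $u=0$, and your general $u_0$ only adds a deterministic offset that drops out of the covariance. Your strong-law and version-of-conditional-expectation remarks make explicit what the paper leaves implicit for the sampling claim, and the integrability you flag as the main obstacle in fact holds for every $x_0$, because the Bayes-formula version of $\epsilon^\ast$ grows at most linearly in $x_t$ for any probability tangent law.
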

\noindent
The translation-invariant case, $C_t(x_0) = \PT$ for every $t$, is the
$\tau \to \infty$ limit of Eq.~\eqref{eq:flat-exact}, an improper
idealisation rather than a law: the geometry fixes the leading
eigenspace, while tangent density, noise level, denoiser error and
finite $K$ all move the measured eigenvalues
(App.~\ref{app:toy}).
\paragraph{Two diagonals, and why they are not the same.}
The object of Prop.~\ref{prop:pt} is the diagonal of $C_t(x_0)$, the
covariance of the unidentified component of the injected noise, which on
a flat manifold is $\gamma_t^2\PT$, a projection, up to a scalar, whose
per-voxel value is the tangent leverage $\PT[i,i]$.  The family surveyed
in Sec.~\ref{sec:intro} estimates a different diagonal: that of
$\partial \hat{x}_0/\partial x_t$, the coordinate-wise sensitivity of
the posterior mean that Tweedie's identity \citep{efron2011tweedie}
ties to the full posterior covariance.  When the posterior-mean Jacobian is an orthogonal projector (the flat, affine case), the two diagonals carry the same ranking in any tangent frame; beyond that regime they separate, and to second order the probe additionally carries the row energy of the Jacobian (Sec.~\ref{sec:res-robust}).

This distinction is between estimands and is drawn before any
measurement: a faithful estimator of
$\operatorname{diag}(\partial\hat{x}_0/\partial x_t)$ must reproduce
its sign, while which sign $\operatorname{diag}(C_t)$ takes is an
empirical question that Sec.~\ref{sec:res-robust} answers.

Off the manifold, the leading covariance is governed by the
differential of the nearest-point projection, $\mathrm{D}\pi_\M(x^\ast)
= (I + hW_\nu)^{-1}\PT$ with $W_\nu$ the Weingarten map, so the
tangent eigenvalues of $C_t$ become $(1 + h\kappa_i)^{-2}$ in the
principal curvatures $\kappa_i$: curvature at distance $h$ shrinks the
covariance along the directions it bends
(Thm.~\ref{thm:curved}, App.~\ref{app:curved}, tested against an exact
denoiser in App.~\ref{app:curvtest}).
\paragraph{Why a geometric quantity should order an error map.}
$C_t$ describes what the noisy observation fails to identify, not the
error of any one reconstruction, and nothing above makes the two
equal.  A large $\sigma_{\text{TW}}$ marks a voxel where the implied
clean prediction is unstable to how the observation was corrupted, and
a sampler drawing a clean state can be wrong by more there: that is
why we expect the map to order the error, and it fixes no scale, so
every endpoint here is rank-based.  The probe concerns the model's
posterior rather than the anatomy, so a confidently wrong
reconstruction is probed as confidently as a correct one
(Sec.~\ref{sec:limitations}).
\subsection{Conditional prediction and the probe point}
\label{sec:tpt-indist}

\tpt's probe (Sec.~\ref{sec:tpt}) requires a full clean state
$x_0 \in \M$.  In the conditional setting, $x_0 = (u_0, v_0)$ splits into a known
baseline $u_0$ and the follow-up $v_0$ the model is predicting, so
half of $x_0$ is not directly available, and this section constructs a
usable probe point.  The models here learn the joint state, and at
inference the baseline channel is re-injected onto the noised
observation at every reverse step \citep{oliveras2026mumins}.

Under the local-product hypothesis of App.~\ref{app:fibred} we model
this state as a fibration over plausible baselines: the tangent space
splits into horizontal and vertical parts, and for an ideal denoiser
conditioned on the clean baseline, measurement injection suppresses
the horizontal residual, leaving a progression-conditional component, not ``how uncertain is this image'' but ``given this patient, how
plausible is this progression''.  Our network receives the noised
baseline, so for the implemented residual we claim only predominant
fibre-alignment, with horizontal contamination from conditioning and
model error (Prop.~\ref{thm:fibred}, App.~\ref{app:fibred}).

The probe point is load-bearing.  The training objective constrains
the denoiser on and near the training marginal and does not identify
its off-support behaviour, so off-support residuals carry no licensed
information about $\M$.  Setting the unknown channel to $v_0 = 0$
probes a point no training pair ever visited, and the estimator fails
\emph{quietly}: the map is smooth,
non-negative and spatially varying, yet it correlates with the true
error at $\bar\rho \approx 0.004$ on lung CT.  Using the reconstruction of
one additional DDIM chain as the probe, changing nothing else, moves
$\bar\rho$ to $0.65$ (Table~\ref{tab:probeinput},
App.~\ref{app:controlled}).  Probe adequacy is therefore a condition
we validate against a reference rather than derive from the theory.

\subsection{Two estimands that share a reversal}
\label{sec:res-robust}

\tpt{} estimates the diagonal of $C_t$; the family of
Sec.~\ref{sec:exact} estimates the diagonal of the posterior-mean
Jacobian.  They are different functionals, separated by
Eq.~\eqref{eq:expansion} below, but on the four reversed conditions
their \emph{rankings} coincide: the second-order map of
Eq.~\eqref{eq:expansion} ranks identically to the exact diagonal and
reverses with it (Table~\ref{tab:rowenergy}).  A shared ranking is
not a shared estimand, but there it is enough --- a probe that
resolved that ranking would reverse too.  At $K = 30$ it does not, with no
interval in the harmful region, against four of fourteen for the exact
diagonal and for Hutchinson at $M{=}200$ (Fig.~\ref{fig:conditions}).
The two therefore target the same ranking here and differ in how hard
that ranking is to estimate, which Table~\ref{tab:kbudget} measures.

\paragraph{The two are not the same functional of $J$.}
Let us write
$a=\sqrt{1-\bar\alpha_{t^*}}=\sigma_{t^\ast}$ for the probe's perturbation scale, take
$X^{(k)}=\sqrt{\bar\alpha}\,\hat x_0+a\,\xi_k$ and
$r_k=\xi_k-\epsilon_\theta(X^{(k)}, t^\ast, c)$, and linearise $\epsilon_\theta$
about the operating point with Jacobian
$J := \partial \epsilon_\theta/\partial x$:
\begin{equation}
r_k \;\approx\; (I-aJ)\,\xi_k-c,
\qquad
\operatorname{Var}_k(r_i)\;=\;1-2a\,J_{ii}+a^{2}\,\|J_{i:}\|^{2}+O(a^{3}).
\label{eq:expansion}
\end{equation}
At the probe point $a = 0.32$, the quadratic term is not a
correction to be dropped, and the $O(a^3)$ remainder is smaller than
it by one factor of $a$ only: Eq.~\eqref{eq:expansion} identifies
which functionals the estimators read, not the size of their
difference.  In the ideal
flat case the tangent projector is $I - aJ$, not $J$: then $aJ = \PN$,
$\|J_{i:}\|^2 = J_{ii}/a$, $\operatorname{Var}_k(r_i) = 1 - a\,J_{ii}$,
and the two diagonals rank alike.  The separation measured below is
therefore a measure of the denoiser's departure from projector
structure, consistent with the negative diagonal entries of
Sec.~\ref{sec:exact}, which a projector cannot produce.  Away from
that case the exact diagonal depends only on $J_{ii}$; the row-sum
returns $\sum_j J_{ij}$, in which off-diagonal mass of opposite sign
cancels; the probe adds the row energy $\|J_{i:}\|^{2}$, which on the
reversed conditions ranks with $J_{ii}$ and changes nothing there.  The
row energy alone is a fourth functional, the one a score-variance probe
reads \citep{devita2025pixelwise}, and it mirrors the diagonal rather
than escaping it (App.~\ref{app:openq}).

\paragraph{Where the map resolves, the probe carries more than the diagonal.}
On twenty $64^3$ lung-CT volumes, against a converged Hutchinson
reference at $M{=}500$, the probe retains $+0.409$ conditioned on the
reference and the reference $+0.056$ conditioned on the probe
(Table~\ref{tab:functional}); the same reference at $M{=}50$ adds only
$+0.056$.  The clinical map is the one that resolves, so its contrast
with the natural-image partials of
Table~\ref{tab:functional-natural} is a probe budget and not two
verdicts on the functional (App.~\ref{app:openq}).

\paragraph{The probe does not escape the reversal, and at its working
budget does not resolve its own estimand.}
Eq.~\eqref{eq:expansion} is a constant modulated by $aJ_{ii}$, and at
$K = 30$ the sample variance carries a $26\%$ relative error.  Where
the Jacobian is strong, that resolves the modulation: on
\texttt{cond/self} two disjoint groups of $30$ probes agree at
$0.86$.  Where it is weak, which is where the diagonal reverses, it
does not: the same agreement is $-0.013$ to $0.017$
(Table~\ref{tab:kbudget}).  As the budget grows, reproducibility
rises on the two \texttt{inet} conditions and the anti-correlation
emerges with it, $-0.016$, $-0.041$, $-0.079$ at $K = 30$, $120$,
$480$; on \texttt{cifar} neither happens at any budget tried, and an
approximate attenuation model accounts for all four within $0.02$
(App.~\ref{app:openq}).  Hutchinson at $M = 5$, about $20$
forward-equivalents, is already harmful on all four, so at matched
budget one estimator reproduces the reversal, and the other returns a
map too unstable to reproduce the ranking it estimates
(Table~\ref{tab:kbudget}).

\section{Clinical evaluation: brain MRI and lung CT}
\label{sec:experiments}\label{sec:exp-setup}\label{sec:results}

\subsection{Design}
We evaluate two 3-D medical diffusion models from the MUMINS
follow-up-prediction pipeline \citep{oliveras2026mumins}, distinct
from the six single-image checkpoints of Sec.~\ref{sec:exact}.
Both were retrained on public data using their shared code.  Each is a 3-D U-Net \citep{cicek2016unet} trained
with an $\ell_1$ noise-prediction loss plus a Gaussian noise-prediction
NLL, adapted from U2Diff \citep{capellera2025u2diff}, on a
two-channel joint state (Sec.~\ref{sec:tpt-indist}).  One predicts
CT lung-nodule progression at $64^3$ on the PNG pulmonary-nodule growth
dataset, released with \citet{tang2026ngpnet}, with $130$ test pairs
from $47$ participants;
the other predicts MRI brain progression at $128^3$ on OASIS-3
\citep{lamontagne2019oasis3}, with $211$ scored pairs from $63$
participants.  Both test splits are patient-disjoint from the training
cohorts, so no participant contributes volumes to both.  The two
models jointly predict the noise $\epsilon$ and a per-voxel variance,
following the same U2Diff-derived NLL head
\citep{capellera2025u2diff}; that variance head feeds the analytical
baseline $\sigma_\theta$ below and is not used by \tpt{}, and the
baseline channel $u_0$ is never predicted, only re-injected at
inference (Sec.~\ref{sec:tpt-indist}).  Within each task, every
estimator uses the same network.  The two tasks use different U-Nets,
because the architecture's depth follows the input size.

MC-DDIM is the per-voxel standard deviation of $J{=}20$ reverse
chains, generated for this work in the deployable conditioning arm
with twenty distinct seeds.  The analytical baseline $\sigma_\theta$
propagates the variance head through the reverse update, with the
standard delayed-start mask $\shat = 15$, rechecked on our
retrained checkpoints' own held-out validation likelihood following
the MUMINS selection protocol.  The delay is forced rather than
tuned, because within the affine additive class the forcing term
cancels from the comparison, so no variance head, injected-noise
schedule or additive correction makes the recursion contractive, and a
delayed start can only bound the accumulated gain
(Prop.~\ref{prop:expansive}, App.~\ref{app:recursion}).  \tpt{} uses
$t^\ast = 60$, $K = 30$ on lung CT and $15$ on brain MRI, and one
additional DDIM chain as its probe; both were fixed before any
participant-level score was read, the sweeps over $t^\ast$ and $K$ were
run on lung CT, and probe-seed sensitivity is in
App.~\ref{app:controlled} (Apps.~\ref{app:tstar},~\ref{app:kablation}).

Every map is scored against one common error map, the error of the
$J{=}20$ ensemble-mean prediction, so that the three estimators are
ranked against the same target.  That map is built from the same
twenty chains whose spread is $\sigma_{\text{MC}}$, which favours the
reference; App.~\ref{app:controlled} therefore also scores each
estimator against the error of the prediction it would accompany at
deployment (one chain for \tpt{}, the $J$-chain mean for MC-DDIM),
where that advantage is absent.  The endpoint is per-volume Spearman
$\rho(\sigma,e)$: rank, because $\sigma$ and $e$ live on different
scales; per-volume, because pooling lets global intensity differences
pass as localisation.  Tables write this correlation as $\rho$ and a
paired difference of it between two estimators as $\Delta\rho$.  A
bracket is the $95\%$ interval of the number to its left, never of the
row.  OASIS additionally uses a prespecified
\emph{tissue} endpoint (App.~\ref{app:defs}), with the background being three
quarters of a skull-stripped volume.  Paired intervals resample \emph{participants} on both tasks, up to
$21$ pairs from one participant on OASIS, and a volume-level
bootstrap covers a known simulated effect $57\%$ of the time against
a nominal $95\%$.

\begin{table}[t]
\centering\scriptsize
\setlength{\tabcolsep}{2.2pt}
\begin{tabular}{l rrr|l rr|l}
\toprule
 & \multicolumn{4}{c}{Lung CT (PNG, $N{=}130$)} & \multicolumn{3}{c}{Brain MRI (OASIS, $N{=}211$)} \\
\cmidrule(lr){2-5}\cmidrule(lr){6-8}
 & \tpt{} & MC & Hutch. & $\Delta$ [95\% CI] & \tpt{} & MC & $\Delta$ [95\% CI] \\
\midrule
$\rho$, full volume       & 0.669 & \textbf{0.693} & 0.404 & $-0.022\,[-0.037,-0.007]$ & 0.518 & \textbf{0.559} & $-0.038\,[-0.041,-0.035]$ \\
\midrule
$\rho$, tissue            & 0.670 & \textbf{0.694} & 0.405 & $-0.022\,[-0.037,-0.007]$ & \textbf{0.349} & 0.337 & $+0.016\,[+0.006,+0.028]$ \\
$\rho$, interior             & 0.612 & \textbf{0.633} & 0.326 & $-0.023\,[-0.043,-0.005]$ & \textbf{0.334} & 0.318 & $+0.022\,[+0.009,+0.037]$ \\
$\rho$, edges                & 0.352 & \textbf{0.528} & 0.177 & $-0.147\,[-0.186,-0.110]$ & 0.248 & 0.245 & $+0.003\,[-0.005,+0.010]$ \\
$\rho$, change region        & 0.100 & \textbf{0.265} & 0.033 & $-0.134\,[-0.183,-0.085]$ & 0.128 & 0.133 & $-0.003\,[-0.013,+0.008]$ \\
$\rho$, partial (grad.\ rm.) & 0.339 & \textbf{0.409} & 0.143 & $-0.060\,[-0.090,-0.031]$ & \textbf{0.290} & 0.274 & $+0.021\,[+0.009,+0.035]$ \\
AUROC, top-5\% error       & 0.884 & \textbf{0.922} & 0.753 & $-0.027\,[-0.039,-0.017]$ & 0.782 & 0.782 & $-0.001\,[-0.007,+0.004]$ \\
AUSE $\downarrow$          & 0.152 & \textbf{0.133} & 0.383 & $+0.014\,[+0.004,+0.024]$ & \textbf{0.306} & 0.315 & $-0.011\,[-0.019,-0.004]$ \\
AURC $\downarrow$          & 0.030 & \textbf{0.029} & 0.047 & $+0.001\,[+0.000,+0.002]$ & \textbf{0.053} & 0.054 & $-0.001\,[-0.002,-0.000]$ \\
\midrule
forwards per volume        & \textbf{91} & $1{,}220$ & $61{+}200$\,JVP & & \textbf{76} & $1{,}220$ & \\
\bottomrule
\end{tabular}
\caption{Clinical results against the common error map.  Estimator
columns are per-volume means; $\Delta$ is \tpt{} minus MC-DDIM, so on
the two rows marked $\downarrow$ a positive $\Delta$ favours MC-DDIM,
with its paired $95\%$ confidence interval resampling participants
($63$ on OASIS, $47$ on lung CT, where the full volume is $99.5\%$
tissue).  Hutch.\ is Hutchinson at $M{=}200$ on the probe's own chain,
which has no brain column because at $128^3$ it exhausts memory
(Sec.~\ref{sec:res-cost}); on lung CT,
the probe's chain, the reference and the error map share one
conditioning (App.~\ref{app:defs}).  Bold marks the better estimator
where the interval excludes zero.  Volume-weighted contrasts:
Table~\ref{tab:paired-oasis}.}
\label{tab:clinical}\label{tab:masked}
\end{table}

\subsection{Results}
\label{sec:res-clinical}
\label{sec:exp-masked}\label{sec:exp-aggregate}

On brain MRI ($211$ pairs, $63$ participants) each row of
Table~\ref{tab:clinical} is a paired per-volume difference with a
cluster bootstrap over participants, and the two halves of the table
point in opposite directions.  Over the whole volume the $20$-chain
reference is ahead by $0.038$ and wins on $210$ of the $211$
pairs.  Inside tissue the ordering reverses: \tpt{} leads on tissue
Spearman, the interior, the gradient-controlled partial, and AUSE (and
AURC, Table~\ref{tab:paired-oasis}), at $76$ forwards against
$1{,}220$; on edges, in the change region, and on AUROC, the interval
includes zero.  Three quarters of a skull-stripped volume is
background, where both maps and the error are near zero and agree for
free, so which reading matters depends on whether the map is to be
used inside the anatomy.  The analytical baseline $\sigma_\theta$
(App.~\ref{app:controlled}) is not distinguishable from the ensemble
over the full brain volume, last by a wide margin inside tissue, and
below both on lung CT.

On lung CT, \tpt{} loses to the ensemble on every endpoint.  Against
the common error map, the paired tissue difference is
$-0.022\;[-0.037, -0.007]$ in MC-DDIM's favour, and
Table~\ref{tab:clinical} places the reference's advantage where the
structure is: on edges ($0.528$ against $0.352$) and inside the change
region ($0.265$ against $0.100$).  A second lung-CT checkpoint,
retrained without the variance head (App.~\ref{app:nohead}), reaches
$0.633$ against the reference's $0.636$, $-0.020\;[-0.040, -0.001]$
(Table~\ref{tab:paired}), so the probe needs no variance head to run.
At $64^3$ the Jacobian family is computable: Hutchinson at
$M{=}200$ on the probe's own chain reads $0.405$ inside lung tissue
against $0.670$ ($+0.189\;[+0.158, +0.222]$) and trails on every
row; on twenty volumes the probe retains $+0.409$
conditioned on the converged diagonal, against $+0.056$ the other
way (App.~\ref{app:functional}).  Scored against its own single-chain
prediction, the deployment case, \tpt{} loses $0.009$ inside brain
tissue and $0.007$ on lung CT, and over the whole brain volume
MC-DDIM's own-error figure is $0.552$ against \tpt{}'s $0.520$
(Table~\ref{tab:aggregate}).  One interpretation, which these
experiments do not test and uncropped scans would: a lung-CT volume is
$99.5\%$ foreground, so the tissue restriction that reverses the OASIS
ordering has almost no background to exclude, and over the full volume
OASIS orders the same way ($-0.038$).

\subsection{Cost is a feasibility boundary, not a ratio}
\label{sec:res-cost}\label{sec:exp-cost}\label{sec:exp-convergence}

At $128^3$ on a $40$~GB accelerator, run one at a time in clean
processes, the row-sum surrogate and Hutchinson at $M{=}50$ and
$M{=}500$ each exhaust memory before completing one volume, while
\tpt{} peaks at $33.8$~GiB and completes in $66.7$~s
(Table~\ref{tab:memprobe}).  The three that fail are the
Jacobian-based ones: forward-mode differentiation doubles the stored
activations, and the probe differentiates nothing.  The boundary is
that of the released implementations, with activation checkpointing
disabled for the tangent pass (App.~\ref{app:checkpointing});
batching tangents would not move it, since each Jacobian--vector
product already carries one tangent and the peak is that of a single
forward-mode pass.

\section{Discussion and limitations}
\label{sec:limitations}

\paragraph{The estimate loses its information where the change is, and so
does the reference.}
The clinical task is to predict how a structure evolves.  Restricted
to the tissue that actually changed between visits, the top decile of
$|x_{\text{target}} - u_0|$ computed without any model output, the
OASIS tissue Spearman of $0.349$ falls to $0.128$, while random tissue
subsets of the same size score $0.378$, so the drop is specific to
where the change is.  The
$20$-chain reference falls to $0.133$ (Table~\ref{tab:clinical}): the
failure belongs to the family, not to its cheapest member.  Two
untested mechanisms are consistent with it (App.~\ref{app:openq}), one
binding every estimator here: a confident prediction receives low
variance whether it is right.

\paragraph{Scope.}
The OASIS volumes were resampled to $1.5$~mm isotropic before
training, so every clinical number is scoped to that preprocessing
(App.~\ref{app:openq} shows the structure it would have erased is
still present where $\sigma$ collapses).  Why $\operatorname{diag}(J)$ should be \emph{anti}-correlated with the
error rather than merely uninformative remains open (four accounts
refuted in App.~\ref{app:openq}).  Six checkpoints are not a survey;
the clinical evidence is two tasks from the two MUMINS checkpoints,
with the functional separation resting on one of them and on the four
reversed conditions, and the reference is still gaining tissue
discrimination at $J{=}20$, its $J{=}50$ advantage narrowing to
$+0.010$ (Table~\ref{tab:j50}).  The ranking depends on the probe
point lying on the data manifold (Sec.~\ref{sec:tpt-indist}) and on
one timestep.  Where the diagonal reverses the $K = 30$ map is barely
reproducible, so its absence from the harmful column is a fact about
the estimator at that budget, not a guarantee about the map
(Table~\ref{tab:kbudget}); and whether the reversal occurs in clinical
checkpoints these tasks cannot answer, the one medical checkpoint on
which the diagonal is computable being two-dimensional and
unconditional.
\section{Conclusion}
\label{sec:conclusion}

In four of fourteen conditions the exact diagonal is anti-correlated
with the error it is meant to rank, and only exact evaluation shows
that the failure belongs to the target.  We offer the probe as an
instrument rather than as a better approximation, since its
second-order ranking follows the diagonal there, and at thirty probes
its map is too unstable to reproduce that ranking.  What it buys is
feasibility at $128^3$, where every Jacobian-based estimator we test
runs out of memory, and, where its own map resolves, more about the
error than a Hutchinson estimate of that diagonal; the changed region
remains open.

\section*{Reproducibility statement}
Every claim is backed by material we release.
Apps.~\ref{app:proof}--\ref{app:curved} prove the affine and
curved-support results under the assumptions stated there;
App.~\ref{app:probe} derives the schedule-only budget rule, so
Fig.~\ref{fig:blowup} is reproducible from the schedule in a few
lines.  The synthetic manifolds of App.~\ref{app:toy} ship with the
code, including training scripts, seeds, and per-number protocols.  For
the medical experiments, Sec.~\ref{sec:exp-setup} and the
configuration index of App.~\ref{app:defs}
(Table~\ref{tab:configindex}) record checkpoints, samplers, schedules,
probe settings and test splits;
App.~\ref{app:defs} defines every mask and metric operationally;
App.~\ref{app:controlled} states the resampling unit behind every
interval and validates it on simulated data.  Source code for the estimator, the exact-denoiser and
synthetic-manifold checks, the scoring pipeline and the statistical
analysis is included in the supplementary material, together with the
training configurations and the preprocessing.  Weights are not
redistributed: the OASIS-3 data-use agreement does not cover
derivatives of that cohort, so the brain checkpoints have to be
retrained from the released configuration against an approved copy of
the data, and we release the recipe rather than the model.  No image
data are redistributed either; the release documents how to request
access and reproduce the preprocessing.

\section*{Ethics statement}
This work uses two de-identified, publicly released medical imaging
datasets under their respective data-use agreements, and introduces no
new data collection and no human-subject experimentation.  The
estimator produces an uncertainty \emph{ranking}, not a calibrated
predictive standard deviation, and we say so in the body: it indicates
where a model's prediction is least constrained, and it is not
validated for, and should not be used for, clinical decision-making.
We would flag one risk specifically.  An uncertainty map is persuasive
because it looks quantitative, and Sec.~\ref{sec:tpt-indist} shows an
off-manifold probe returning a smooth, plausible map whose correlation
with the true error is $0.004$.  Any deployment therefore needs its
own validation against a reference, not the numbers reported here.

\section*{Use of AI tools}
Large language models were used throughout as a writing and coding
assistant: for drafting and \mbox{revising} prose, for implementing,
refactoring and debugging analysis code, for drafting proof
arguments subsequently checked by the authors, and as an adversarial
reader of both the theoretical claims and the experimental protocol.
The exact-denoiser validation of App.~\ref{app:curvtest} was
implemented with LLM assistance under author direction. 

\bibliographystyle{iclr2027_conference}
\bibliography{references}

\appendix

\section*{Appendix roadmap}
The appendices fall into four groups.
Appendices~\ref{app:recursion}--\ref{app:propagation-ranking} contain
the theory: the DDIM recursion and its delayed start, the proofs of
Prop.~\ref{prop:expansive} and Thm.~\ref{thm:curved}, the conditional
fibred interpretation, the non-optimal denoiser, the delayed-start
rule in closed form, and what a uniform injection term can and cannot
do to a spatial ranking.
Appendices~\ref{app:curvtest} and~\ref{app:toy} test those claims
where the truth is computable, against an exact denoiser and on
synthetic manifolds with known tangent spaces.
Appendices~\ref{app:budgets}--\ref{app:defs} give the experimental
design: the cost accounting, the forward-mode implementation obstacle,
the protocol for the head-free checkpoint, and the operational
definition of every mask and metric.
Appendices~\ref{app:conditions}--\ref{app:kablation} report the
per-condition tables behind Secs.~\ref{sec:exact}
and~\ref{sec:results}, the deployment comparison and its paired
tests, and the sensitivity of the estimates to $K$, to $t^\ast$ and to
the ensemble size $J$.
Each appendix is referenced at the single point in the main text
where its conclusion is used, and none needs to be read in full.

\medskip\noindent\emph{Theory and proofs.}\medskip

\section{The DDIM recursion, the delayed start, and the toy overshoot}
\label{app:recursion}

The body states the consequence; the result itself is this.

\begin{proposition}[Expansivity of additive affine covariance accumulation]
    \label{prop:expansive}
    Let $\mathcal{T}_s(V) = a_s^2 V + Q_s$ be any \emph{affine} recursion
    on the accumulated covariance, with $Q_s \succeq 0$ independent of
    $V$ collecting the variance head, the injected sampler noise and any
    additive correction.  Then
    $\mathcal{T}_s(V) - \mathcal{T}_s(W) = a_s^2 (V-W)$ for all $V, W$,
    so the linear part has spectral radius exactly $a_s^2$,
    \emph{independently of $Q_s$}: wherever $|a_s| > 1$, no variance
    head, no injected-noise schedule and no additive correction makes
    the recursion contractive, and the cumulative gain over a window is
    $\prod_s a_s^2$.
\end{proposition}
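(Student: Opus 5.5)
The argument is direct: the affine structure makes the forcing term cancel from any comparison of two trajectories. I would treat $V$ as ranging over the real vector space of symmetric $d\times d$ matrices, or over per-voxel variance vectors in the diagonal case. First I subtract the two updates. The common additive term $Q_s$, which by hypothesis does not depend on $V$, cancels, and what remains is
\begin{equation*}
\mathcal{T}_s(V)-\mathcal{T}_s(W)=a_s^2 V + Q_s - a_s^2 W - Q_s = a_s^2(V-W).
\end{equation*}
This identity holds for all $V,W$. It also shows that the Fr\'echet derivative of $\mathcal{T}_s$ at every point is the map $L_s : V\mapsto a_s^2 V$.

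Second, I read off the spectrum of $L_s$. Every nonzero $V$ is an eigenvector with eigenvalue $a_s^2$, so the spectral radius equals $a_s^2$ exactly. The operator norm in any norm on the space is also $a_s^2$, since norms are homogeneous. In the per-voxel case where $a_s$ is a vector acting elementwise, the same argument gives $\max_i a_{s,i}^2$. None of these quantities involves $Q_s$.

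Third, I address contractivity. A map $\mathcal{T}_s$ is a contraction in some norm only if its Lipschitz constant is below $1$. The identity above shows that the Lipschitz constant equals $a_s^2$ in every norm, because $\|\mathcal{T}_s(V)-\mathcal{T}_s(W)\| = a_s^2\|V-W\|$ holds with equality. So wherever $|a_s|>1$, no choice of $Q_s \succeq 0$ can make the map contractive. This covers the variance head, the injected noise, and any additive correction, since each of these enters only through $Q_s$.

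Finally, for a window $s=s_0,\dots,s_1$, I induct on the length of the window to get
\begin{equation*}
\mathcal{T}_{s_1}\circ\cdots\circ\mathcal{T}_{s_0}(V)-\mathcal{T}_{s_1}\circ\cdots\circ\mathcal{T}_{s_0}(W)=\Bigl(\prod_{s}a_s^2\Bigr)(V-W),
\end{equation*}
because the composition of affine maps is again affine, with linear part equal to the product of the scalar factors.

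There is no real obstacle in this proof. The only point that needs care is the interpretive one: I must state explicitly that "independent of $V$" is the assumption that makes $Q_s$ cancel. A $V$-dependent correction lies outside the affine additive class, and the proposition says nothing about it. In the elementwise case I also need the coefficients to be scalars or diagonal, so that they commute and the product formula holds.
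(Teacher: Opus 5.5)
Your proof is correct and is essentially the paper's argument: $Q_s$ cancels from $\mathcal{T}_s(V)-\mathcal{T}_s(W)$, so the Lipschitz constant in every norm, the operator norm and the spectral radius of the linear part $a_s^2 I$ all equal $a_s^2$ regardless of $Q_s$, and composing over a window multiplies the scalar factors to give $\prod_s a_s^2$. Your remark that a $V$-dependent correction lies outside the affine class matches the scope caveat the paper attaches to its proof, and your elementwise extension is a harmless addition that the paper's scalar-$a_s$ statement does not need.
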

\noindent
It is stated for the affine additive class, which is the one
\citet{capellera2025u2diff} and \citet{oliveras2026mumins} use, and we
claim nothing about state-dependent or Jacobian-linearised
recursions.  The two equations the proof refers to are
\begin{equation}
    V_{s-\zeta} \;=\; \mathcal{T}_s(V_s) \;=\; a_s^2 V_s + Q_s,
    \qquad Q_s \succeq 0 \ \text{independent of } V_s,
    \label{eq:affine}
\end{equation}
where $Q_s = b_s^2 \hat\Sigma_s + \tilde\sigma_s^2 I + C_s$ collects the
variance head, the injected sampler noise (with its $\eta^2$ factor
inside $\tilde\sigma_s^2$) and any additive correction $C_s$, with
$b_s = c_t - a_s\sigma_t$; and
\begin{equation}
    \mathcal{T}_s(V) - \mathcal{T}_s(W) = a_s^2\,(V - W)
    \quad\text{for all } V, W.
    \label{eq:diffmap}
\end{equation}

We use the DDIM formulation of \citet{song2020ddim} with $T$ steps,
cumulative products $\bar\alpha_t$ and the cosine schedule of
\citet{nichol2021improved}: the forward process sends $x_0$ to
$x_t = \sqrt{\bar\alpha_t}x_0 + \sqrt{1-\bar\alpha_t}\,\xi$ with
$\xi \sim \mathcal{N}(0,I)$, and $\epsilon_\theta(x_t,t,c)$ is trained
to recover $\xi$.  For skip interval $\zeta$, the reverse update is
\begin{equation}
    x_{t-\zeta} = \sqrt{\bar\alpha_{t-\zeta}}\,\hat x_0(x_t)
        + c_t \epsilon_\theta(x_t, t) + \tilde\sigma_t\, \xi_t,
    \qquad
    \hat x_0 = \frac{x_t - \sqrt{1-\bar\alpha_t}\,\epsilon_\theta}{\sqrt{\bar\alpha_t}},
    \label{eq:ddim}
\end{equation}
with, following Eq.~16 of \citet{song2020ddim} and exactly as
implemented,
\[
    \tilde\sigma_t^2
    = \eta^2\,\frac{1-\bar\alpha_{t-\zeta}}{1-\bar\alpha_t}
      \Bigl(1 - \frac{\bar\alpha_t}{\bar\alpha_{t-\zeta}}\Bigr),
    \qquad
    c_t = \sqrt{1-\bar\alpha_{t-\zeta}-\tilde\sigma_t^2},
\]
where $\eta \in [0,1]$ is the DDIM/DDPM interpolation.  Two noise
scales must not be conflated: throughout this paper
$\sigma_t := \sqrt{1-\bar\alpha_t}$ is the \emph{forward marginal}
scale, which is what the geometric theory of
Sec.~\ref{sec:tpt} uses, while $\tilde\sigma_t$ is the
\emph{stochastic-injection} coefficient of the reverse transition.
The transition coefficient is
$a_s := \sqrt{\bar\alpha_{t-\zeta}/\bar\alpha_t}$, where $s$ indexes
the transition $t \to t-\zeta$.
Since $\bar\alpha$ increases as $t$ decreases, $|a_s| > 1$ over the
chain.  Throughout, $\bar\alpha_0 = 1$ and
$\bar\alpha_t \in (0,1)$ for $t > 0$; $\epsilon_\theta$ is
Borel-measurable and square-integrable, and the accumulated covariance
is initialised at $V_T = 0$ and required to stay positive
semi-definite (Assumption~\ref{ass:positivity}).

\noindent
Restating Prop.~\ref{prop:expansive} in full: $Q_s$ cancels from
Eq.~\eqref{eq:diffmap}, and the rest follows by iterating; the size of
the effect it licenses is measured below.  The result is stated for
the affine additive class, which is the one those implementations
use; state-dependent corrections and
Jacobian-linearised recursions change the multiplicative operator and
fall outside it.  App.~\ref{app:proof} gives the proof, the constraint
positivity places on $C_s$, and the scope conditions.
\begin{corollary}[Schedule-calibrated delayed start]
    \label{cor:mask}
    For the cosine schedule
    $a_s^2 = \bar\alpha_{t-\zeta}/\bar\alpha_t > 1$ at \emph{every}
    transition, so no non-expansive accumulation window exists.
    Delaying the start bounds the total inflation, which telescopes to
    \begin{equation}
        \textstyle\prod_{t \le \shat} a_s^2 = 1/\bar\alpha_{\shat},
        \qquad
        \skappa := \max\{\, t \;:\; 1/\bar\alpha_t \le \kappa \,\},
        \label{eq:shat}
    \end{equation}
    so a user-specified amplification budget $\kappa > 1$ determines
    the latest admissible start from the schedule alone, in
    $\mathcal{O}(T/\zeta)$ scalar operations with no denoiser call and
    no data.  This replaces an opaque timestep by an interpretable
    tolerance; it does not make $\kappa$ data-optimal, and delayed
    start is not the only conceivable algorithmic response
    (Rem.~\ref{rem:escapes}).  Derivation in App.~\ref{app:probe}.
\end{corollary}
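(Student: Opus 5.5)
The plan is to prove the three claims of Cor.~\ref{cor:mask} in order: strict expansivity at every transition, the telescoping identity, and the schedule-only characterisation of $\skappa$.

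\textbf{Strict expansivity.} I work directly from the cosine schedule of \citet{nichol2021improved}, $\bar\alpha_t = f(t)/f(0)$ with $f(t)=\cos^2\!\bigl(\tfrac{t/T+s}{1+s}\cdot\tfrac{\pi}{2}\bigr)$. On $t\in[0,T]$ the argument of the cosine lies in $[0,\pi/2)$, so $f$ is strictly decreasing there. (If the implementation clips $\beta_t$ at $0.999$, the schedule is still defined through $\bar\alpha_t=\prod_{u\le t}(1-\beta_u)$ with every $\beta_u\in(0,1)$.) Either way, $\bar\alpha$ is strictly decreasing in $t$. Hence $a_s^2=\bar\alpha_{t-\zeta}/\bar\alpha_t>1$ for every transition $t\to t-\zeta$ with $\zeta\ge1$. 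By Prop.~\ref{prop:expansive}, the linear part of $\mathcal{T}_s$ then has spectral radius $a_s^2>1$ whatever $Q_s$ is. So no transition is non-expansive, and therefore no window of transitions is either, since the gain over a window is a product of factors each exceeding one.

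\textbf{Telescoping.} Suppose accumulation starts at $\shat$ and runs over the transitions $\shat\to\shat-\zeta\to\dots\to0$, assuming $\zeta$ divides $\shat$ (otherwise the last step lands on $0$ by convention). Iterating Eq.~\eqref{eq:diffmap} gives the cumulative gain as the product of the $a_s^2$. Writing each factor as a ratio, the product collapses:
\[
\prod_{t\le\shat} a_s^2 \;=\; \prod_{j} \frac{\bar\alpha_{\shat-(j+1)\zeta}}{\bar\alpha_{\shat-j\zeta}} \;=\; \frac{\bar\alpha_0}{\bar\alpha_{\shat}} \;=\; \frac{1}{\bar\alpha_{\shat}},
\]
using $\bar\alpha_0=1$ as stated in App.~\ref{app:recursion}.

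\textbf{Schedule-only rule for $\skappa$.} Since $t\mapsto1/\bar\alpha_t$ is strictly increasing, the feasible set $\{t: 1/\bar\alpha_t\le\kappa\}$ is a down-set of the timestep grid. It contains $t=0$ because $1/\bar\alpha_0=1<\kappa$, so it is nonempty and its maximum $\skappa$ exists. Any start $\shat\le\skappa$ keeps the total inflation within $\kappa$, and any later start exceeds it. To compute $\skappa$, I scan the $T/\zeta$ grid points, evaluating the closed-form $\bar\alpha_t$ at each (or the cumulative product, updated incrementally). This costs $\mathcal{O}(T/\zeta)$ scalar operations and touches neither the denoiser nor the data.

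\textbf{Expected obstacle.} The only delicate point is strict monotonicity of $\bar\alpha$ under the implemented schedule, including clipping and the offset $s$; this needs $\beta_u>0$ for every $u$. The other claims in the corollary (not data-optimal, other escapes possible) are interpretive remarks and need no proof.
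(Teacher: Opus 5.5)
Your proposal is correct and follows the paper's derivation in App.~\ref{app:probe}. The shared steps are: monotonicity of $\bar\alpha$ makes every $a_s^2=\bar\alpha_{t-\zeta}/\bar\alpha_t$ exceed one, which is the spectral radius by Prop.~\ref{prop:expansive}; the product then telescopes to $\bar\alpha_0/\bar\alpha_{\shat}=1/\bar\alpha_{\shat}$; and $\skappa$ is read off the schedule alone. You go slightly further than the paper in checking strict monotonicity from the cosine formula and the down-set structure of the feasible set, both of which the paper only asserts. One small slip: at $t=T$ the cosine argument equals $\pi/2$, not something below it, so the unclipped $\bar\alpha_T$ is zero. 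It is exactly the $\beta$-clipping you mention that keeps $\bar\alpha_T>0$ and the first factor $a_s^2$ finite.
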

\noindent
At $T = 300$, $\zeta = 5$, an illustrative one-per-cent budget gives
$\skappa = 16$, close to the $15$ selected on validation likelihood;
we do not claim Eq.~\eqref{eq:shat} \emph{predicts} that value, since
the budget was not fixed independently of it.  What the mask buys, and
what it cannot buy, is arithmetic on the schedule alone: unmasked
accumulation ends at $V \approx 1.5\times10^{3}$ against
$2.5\times10^{-3}$ at $\skappa$, while a mask placed late, at $t=64$,
still inflates by a further $40\times$ (App.~\ref{app:proof},
Fig.~\ref{fig:blowup}).  Equation~\eqref{eq:shat} configures the chain
in Sec.~\ref{sec:experiments} once the budget is chosen.
\paragraph{The distinction is material.}
On a diffusion model trained on a one-dimensional analytic manifold in
$\R^{1024}$, where the true per-pixel variance is measurable by
Monte-Carlo, propagating the variance analytically along the same
chains under the same independence assumption gives
\begin{equation}
    \frac{\Var_{\text{MC}}}{\Var_{\text{analytic}}} = 0.073,
    \label{eq:overshoot}
\end{equation}
an overshoot of $13.7\times$, with the delayed-start mask
\emph{already applied} at 60\% of that model's schedule.
This is not the cost of ignoring the accepted remedy; it is what
survives it, consistent with the schedule amplification quantified by
Cor.~\ref{cor:mask}.  Training protocol,
seeds, ground-truth construction and caveats in App.~\ref{app:toy}.

\section{Proof of Prop.~\ref{prop:expansive}}
\label{app:proof}

\begin{figure}[h]
\centering
\includegraphics[width=0.62\linewidth]{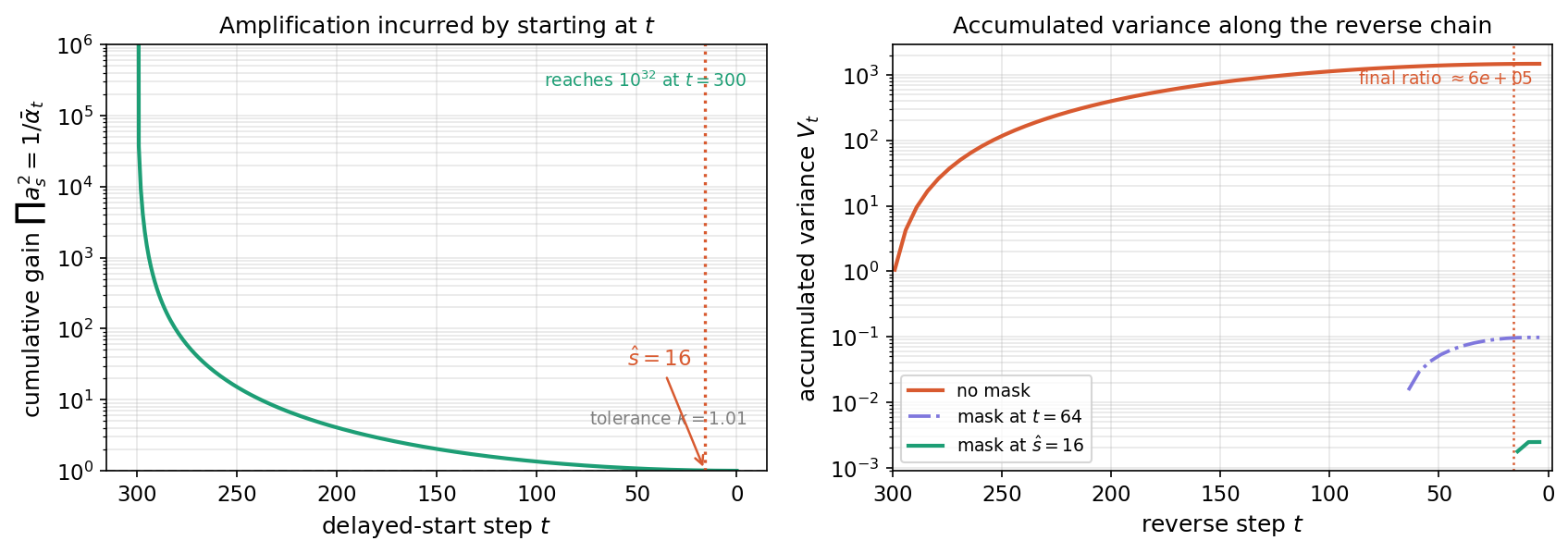}
\caption{Schedule-induced amplification in the additive variance
recursion.  \textbf{Left:} cumulative gain
$\prod a_s^2 = 1/\bar\alpha_t$ from starting accumulation at step $t$;
a one-per-cent budget gives $\skappa = 16$.  \textbf{Right:}
accumulated variance with no mask, a late mask ($t{=}64$), and the
schedule-calibrated mask.  The mask bounds inflation but cannot make
the recursion contractive.  Every coefficient comes from the schedule;
no model or data is involved.}
\label{fig:blowup}
\end{figure}

\begin{assumption}[Positivity]
    \label{ass:positivity}
    We set $\bar\alpha_0 = 1$ and $\bar\alpha_t \in (0, 1)$ for
    $t > 0$; $\epsilon_\theta$ is Borel-measurable and
    square-integrable.  The accumulated
    covariance is initialised at $V_T = 0$ and is required to remain
    positive semi-definite at every step, as any object claiming to be
    a variance must.
\end{assumption}

\begin{proof}
Write the recursion as $\mathcal{T}_s(V) = a_s^2 V + Q_s$ with
$Q_s \succeq 0$ independent of $V$.  For any $V, W$,
\[
    \mathcal{T}_s(V) - \mathcal{T}_s(W)
    = \bigl(a_s^2 V + Q_s\bigr) - \bigl(a_s^2 W + Q_s\bigr)
    = a_s^2 (V - W),
\]
since the forcing term cancels.  Hence, in any norm on symmetric
matrices,
$\|\mathcal{T}_s(V) - \mathcal{T}_s(W)\| = a_s^2 \|V - W\|$, so
$\mathcal{T}_s$ is a similarity whose Lipschitz constant equals
$a_s^2$, this being the operator norm and spectral radius of its linear
part $D\mathcal{T}_s = a_s^2 I$.  This is exact, not a bound,
and it does not involve $Q_s$: the choice of variance head, of
injected-noise schedule, or of any additive correction $C_s$ is
irrelevant to contractivity.

Composing over a window $[t_1, t_2]$ on which $|a_s| > 1$,
\[
    \mathcal{T}_{t_2} \circ \cdots \circ \mathcal{T}_{t_1}(V)
    - \mathcal{T}_{t_2} \circ \cdots \circ \mathcal{T}_{t_1}(W)
    = \Bigl(\prod_{s \in [t_1, t_2]} a_s^2\Bigr)(V - W),
\]
and the gain $\prod_s a_s^2 > 1$ telescopes to
$\bar\alpha_{t_2}/\bar\alpha_{t_1}$ by
Eq.~\eqref{eq:spectral}.  Applied to the trajectory itself with
$V_T = 0$, the same factor multiplies the accumulated covariance, so
$V$ inflates monotonically over the window.

One remark on the hypotheses.  Independence of $Q_s$ from $V_s$ is
what makes the result exact, and it is not a technicality: a
state-dependent correction can restore contractivity while preserving
positivity.  Taking $J(V_s) = -(a_s^2 - c)V_s$ with $0 < c < 1$ gives
$V_{s-\zeta} = c\,V_s + Q_s$, which is a contraction with factor $c$
and maps the positive-semidefinite cone into itself.  Such a
correction simply lies outside the class of Eq.~\eqref{eq:affine}, and
it requires a propagation rule that knows $a_s$ and rescales by it
rather than a closed-form accumulation.  Within the unmodified
additive affine rule, delaying the start (Cor.~\ref{cor:mask}) is the
remedy in use; Rem.~\ref{rem:escapes} lists what leaving that rule
would buy.
\end{proof}

\begin{remark}[What is and is not claimed]
\label{rem:scope}
The remark is deliberately narrower than ``no variance recursion can
work''.  It is exact for the affine family, which is the family those
implementations actually use, and it is exact \emph{because} $Q_s$ drops out of
Eq.~\eqref{eq:diffmap}: no amount of engineering in the forcing term
is relevant to contractivity.  Two things fall outside it.  A state-dependent correction such as $J(V_s) = -(a_s^2 - c)V_s$ is
not an additive forcing independent of $V_s$: it changes the linear
part of the recursion from $a_s^2 I$ to $cI$, and it does render the
map contractive.  It does so, however, by subtracting the schedule's
own gain, which presupposes knowing $a_s$ and is a rescaling rather
than a Jacobian.
\end{remark}

\subsection*{Proposition~\ref{prop:pt}, stated in full}

Fix $t$ and $x_0$, and let $r = \xi - \E[\xi \mid X_t]$.  The body
states the following three claims together; they are separated here
and proved in turn.

\begin{enumerate}[label=(\roman*), leftmargin=2em]
\item \emph{(Sampling.)}  For fixed $t$, the sample covariance of $K$
      residuals converges almost surely as $K \to \infty$ to the
      population covariance $C_t(x_0)$.
\item \emph{(Flat manifold, translation-invariant prior.)}  If
      $\M = x_0 + T$ is an affine subspace and the law of the tangent
      coordinate is translation-invariant along $T$, then the tangent
      component of $\xi$ is unidentifiable and the normal component is
      determined, so $C_t(x_0) = \PT$ for every $t$
      (the $\tau\to\infty$ limit of Eq.~\ref{eq:flat-exact}).
\item \emph{(Flat manifold, proper Gaussian prior.)}  If instead
      $u \sim \mathcal{N}(0, \tau^2 \PT)$, the observation carries
      partial information about $u$ and the projection shrinks:
      \begin{equation}
          C_t(x_0) = \Bigl(\tfrac{\bar\alpha_t \tau^2}
                                 {\bar\alpha_t \tau^2 + \sigma_t^2}\Bigr)^{\!2} \PT .
          \label{eq:flat-gauss}
      \end{equation}
\end{enumerate}

\noindent
The estimator used in this paper is the diagonal of the sample
covariance: a per-voxel scalar, not the full $d \times d$ matrix.
Part (ii) is an idealisation; no translation-invariant probability
measure exists on a non-compact subspace, and should be read as the
improper-prior limit of (iii) rather than as a statement about data.
To be explicit about what is and is not proved: $K \to \infty$ buys
only (i), convergence to a population covariance, which is not yet a
projector; (ii) and (iii) are exact but flat; departures from flatness
are the signature of curvature; and Thm.~\ref{thm:curved} gives the
curved extension to second order.

\subsection*{Proof of Proposition~\ref{prop:pt}(ii) and (iii)}

\begin{proof}
Split $\R^d = T \oplus N$ orthogonally and write $\xi = \xi_T + \xi_N$,
$x = x_0 + u$ with $u \in T$.  The forward marginal is
$X_t = \sqrt{\bar\alpha_t}(x_0 + u) + \sigma_t(\xi_T + \xi_N)$, whose
normal component
\[
    X_t^N = \sqrt{\bar\alpha_t}\, x_0^N + \sigma_t \xi_N
\]
does not involve $u$.  So $\xi_N$ is a measurable function of $X_t$ and
$\E[\xi_N \mid X_t] = \xi_N$: the normal part of the noise is fully
identified, and contributes nothing to the residual.  Everything turns
on the tangent part, where the observation is
\begin{equation}
    X_t^T = \sqrt{\bar\alpha_t}\, u + \sigma_t \xi_T ,
    \label{eq:tangent-obs}
\end{equation}
a single sum of the unknown position $u$ and the unknown noise
$\xi_T \sim \mathcal{N}(0, I_T)$.  How much of $\xi_T$ that sum reveals
depends entirely on the prior for $u$.

\paragraph{(ii) Translation-invariant prior.}
If the law of $u$ is invariant under translations along $T$, observing
Eq.~\eqref{eq:tangent-obs} says nothing about $\xi_T$: any value of
$\xi_T$ is consistent with a suitably shifted $u$, with the same prior
weight.  The posterior of $\xi_T$ is therefore its prior,
$\E[\xi_T \mid X_t] = 0$, and
\[
    r = \xi - \epsilon^\ast = \xi_T,
    \qquad \Cov(r) = \Cov(\xi_T) = \PT,
\]
independently of $t$.  The invariance is what does the work, and it is
also what makes this an idealisation: Lebesgue measure on $T$ is
translation-invariant but infinite, so no proper probability
distribution has this property on a non-compact subspace.  Read (ii)
as the limit of (iii), not as a claim about data.

\paragraph{(iii) Gaussian prior.}
Take $u \sim \mathcal{N}(0, \tau^2 \PT)$ independent of $\xi$.  Then
$(u, \xi_T)$ is jointly Gaussian with $X_t^T$, and the standard
conditioning formula applied to Eq.~\eqref{eq:tangent-obs} gives
\[
    \E[\xi_T \mid X_t^T]
    = \frac{\sigma_t}{\bar\alpha_t \tau^2 + \sigma_t^2}\, X_t^T .
\]
Evaluating the probe at $x_0$, that is at $u = 0$, gives
$X_t^T = \sigma_t \xi_T$ and hence
\[
    r_T = \xi_T - \E[\xi_T \mid X_t^T]
        = \frac{\bar\alpha_t \tau^2}{\bar\alpha_t \tau^2 + \sigma_t^2}\, \xi_T ,
\]
so $\Cov(r) = \bigl(\bar\alpha_t \tau^2 /
(\bar\alpha_t \tau^2 + \sigma_t^2)\bigr)^2 \PT$, which is
Eq.~\eqref{eq:flat-gauss}.  The eigenspace is $T$ for every $\tau$ and
every $t$; the common eigenvalue is one only in the limit
$\tau \to \infty$ or $\sigma_t \to 0$, recovering (ii).
\end{proof}


\section{Proof of Theorem~\ref{thm:curved}}

\begin{theorem}[Curved case]
\label{thm:curved}
Assume (H1)--(H4) of App.~\ref{app:curved}.  Let $x^\ast \in K$,
$x^\ast = m^\ast + n^\ast$ with $m^\ast = \pi_\M(x^\ast)$, and write
$n^\ast = h\,\nu$ with $\nu$ a unit normal.  Let $W_\nu :=
\mathrm{D}\nu|_{T_{m^\ast}\M}$ be the Weingarten map, with eigenvalues
the principal curvatures $\kappa_1,\dots,\kappa_d$ in the direction
$\nu$.  Then the nearest-point projection satisfies, \emph{exactly in
$h$},
\begin{equation}
    \mathrm{D}\pi_\M(x^\ast) \;=\; (I + h\,W_\nu)^{-1}\,\PT ,
    \label{eq:dpi}
\end{equation}
and in the regime $\sigma_t \to 0$ with $h = O(\sigma_t)$,
\begin{equation}
    C_t(x^\ast)
    \;=\; \mathrm{D}\pi_\M(x^\ast)\,\mathrm{D}\pi_\M(x^\ast)^{\!\top}
        \;+\; \mathcal{O}(\sigma_t^2)
    \;=\; (I + h\,W_\nu)^{-2}\,\PT \;+\; \mathcal{O}(\sigma_t^2),
    \label{eq:curved}
\end{equation}
so the eigenvalues of the leading term of $C_t$ restricted to
$T_{m^\ast}\M$ are
\begin{equation}
    \frac{1}{(1 + h\,\kappa_i)^2}, \qquad i = 1,\dots,d.
    \label{eq:kappa}
\end{equation}
Proof in App.~\ref{app:curved}.
\end{theorem}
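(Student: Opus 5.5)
The plan has two parts: an exact tubular-neighbourhood computation for $\mathrm{D}\pi_\M$, then a small-noise Laplace (Gaussian) approximation of the posterior that reduces the curved case to the flat case of Prop.~\ref{prop:pt}.

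\emph{Step 1: the projection differential.} Work in the reach of $\M$, which I take (H1)--(H4) to guarantee on the compact set $K$. There every $x$ has a unique nearest point, and $x = \pi_\M(x) + n(x)$ with $n(x) \perp T_{\pi_\M(x)}\M$. Parametrise a neighbourhood of $x^\ast$ by the normal-bundle map $\Phi(m, h\nu) = m + h\,\nu(m)$, with $\nu(m)$ a unit normal field extending $\nu$ and $h$ held fixed. Differentiate along a tangent direction $v \in T_{m^\ast}\M$: $\mathrm{D}\Phi[v] = v + h\,\mathrm{D}\nu[v] = (I + hW_\nu)v$, up to a normal component. That normal component I absorb by choosing $\nu$ parallel in the normal connection; in codimension one it vanishes automatically. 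Inverting on the tangent block gives $\mathrm{D}\pi_\M(x^\ast)\,\delta = (I+hW_\nu)^{-1}\PT\delta$. Perturbations purely in the normal direction do not move $\pi_\M$, so they are killed by $\PT$. The inverse exists because $h|\kappa_i| < 1$ inside the reach. No expansion in $h$ is used, so Eq.~\eqref{eq:dpi} is exact.

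\emph{Step 2: the residual as a pushed-forward noise.} Write $\epsilon^\ast(X_t) = (X_t - \sqrt{\bar\alpha_t}\,\hat x_0(X_t))/\sigma_t$, with $\hat x_0 = \E[X_0\mid X_t]$. Then the residual is
\[
r = \xi - \epsilon^\ast = \sqrt{\bar\alpha_t}\,\bigl(\hat x_0(X_t) - x^\ast\bigr)/\sigma_t .
\]
As $\sigma_t \to 0$, the posterior concentrates at the nearest point of $X_t/\sqrt{\bar\alpha_t}$, so to leading order $\hat x_0 \approx \pi_\M(X_t/\sqrt{\bar\alpha_t})$. Now $X_t/\sqrt{\bar\alpha_t} = x^\ast + (\sigma_t/\sqrt{\bar\alpha_t})\,\xi$. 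Linearising gives $r \approx \mathrm{D}\pi_\M(x^\ast)\,\xi$, hence $C_t \approx \mathrm{D}\pi_\M\,\mathrm{D}\pi_\M^{\top}$. The factors of $\sqrt{\bar\alpha_t}$ cancel at leading order, since $\bar\alpha_t \to 1$ in this regime. Because $W_\nu$ is self-adjoint on $T$ and commutes with $\PT$ there, the product is $(I+hW_\nu)^{-2}\PT$. Diagonalising $W_\nu$ then gives the eigenvalues in Eq.~\eqref{eq:kappa}.

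\emph{Step 3, the main obstacle: controlling the remainder by $\mathcal{O}(\sigma_t^2)$.} Two error sources must be bounded. The first is the Taylor remainder of $\pi_\M$, which is quadratic in the perturbation $\sigma_t\xi$. After division by $\sigma_t$ it is $\mathcal{O}(\sigma_t)$ in $r$, and its cross term with the leading Gaussian is odd, so it has zero mean; the covariance error is therefore $\mathcal{O}(\sigma_t^2)$. The second is the gap between $\hat x_0$ and $\pi_\M$. For this I would use a Laplace expansion of the posterior restricted to a tubular neighbourhood, assuming (H1)--(H4) supply a smooth density on $\M$ bounded below and a $C^3$ manifold. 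This gives $\hat x_0 = \pi_\M + \mathcal{O}(\sigma_t^2)$ uniformly on $K$, plus a mass term outside the neighbourhood that is exponentially small. The curvature correction to the Laplace mean is also $\mathcal{O}(\sigma_t^2)$, and it enters $r$ only at $\mathcal{O}(\sigma_t)$ after division, so it again perturbs the covariance at order $\sigma_t^2$. Gaussian tail bounds control the region where $\xi$ is large, and square-integrability yields dominated convergence for the covariance. The bookkeeping here is delicate. I would first try to dispatch it by a uniform-in-$m$ Laplace lemma, and fall back on stating the remainder as the assumed rate in (H4) if the uniformity fails.
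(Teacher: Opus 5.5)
Your architecture matches the paper's: the exact identity $r=(\hat x_0-x^\ast)/\sigma$ with $\sigma=\sigma_t/\sqrt{\bar\alpha_t}$, a Taylor expansion of $\pi_\M$ at $x^\ast$, a Laplace expansion of the posterior mean around $\pi_\M$, and parity to kill cross-terms. Your Step~1 is sound, and it derives Eq.~\eqref{eq:dpi}, which the paper only quotes and checks on the circle. The normal component of $\mathrm{D}\Phi[v]$ is irrelevant anyway, because normal displacements inside the reach lie in the kernel of $\mathrm{D}\pi_\M$.

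The gap is in Step~3, in the claim that an $O(\sigma_t^2)$ error in $\hat x_0$ ``enters $r$ only at $O(\sigma_t)$ after division, so it again perturbs the covariance at order $\sigma_t^2$''. Set $\Delta(y):=\hat x_0(y)-\pi_\M(y)$ at the random point $y=x^\ast+\sigma\xi$. Then $\Delta(y)/\sigma$ is a random $O(\sigma)$ vector, and its contribution to $\Cov(r)$ includes the cross-covariance $\Cov(A\xi,\Delta(y)/\sigma)$ with the order-one leading term. A bare uniform bound $\sup\|\Delta\|\le C\sigma^2$ controls that cross-term only at $O(\sigma)$, by Cauchy--Schwarz. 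Parity does not help: it works for the quadratic term $\tfrac{\sigma}{2}B[\xi,\xi]$ of $\pi_\M$, but $\Delta(x^\ast+\sigma\xi)$ has no definite parity in $\xi$.

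An $O(\sigma)$ error is fatal here. With $h=O(\sigma_t)$ the curvature content of the leading term is itself first order, since $(I+hW_\nu)^{-2}\PT=\PT-2hW_\nu\PT+O(\sigma_t^2)$. Your argument as written therefore yields only $C_t=\PT+O(\sigma_t)$, which is the flat statement, and does not establish Eq.~\eqref{eq:kappa}. The paper flags exactly this: a bare $O(\sigma^2)$ ``would not do''. Your fallback of writing the rate into (H4) is not available either. (H4) only asserts that the denoiser is the exact conditional mean and carries no rate, so doing so would assume the conclusion.

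What is missing is the structure of the posterior-mean correction, which is the content of Lemma~\ref{lem:postproj}. On an enlargement $K^{+}$ of $K$, needed so the lemma can be applied at the random point, with the complement handled by a Gaussian tail bound, the lemma gives $\hat x_0(y)=\pi_\M(y)+\sigma^2 b(y)+R_\sigma(y)$. Here $b$ is a $C^1$ field independent of $\sigma$ and $\sup_{K^{+}}\|R_\sigma\|\le C\sigma^3$. Then $\Delta(y)/\sigma=\sigma b(x^\ast)+O(\sigma^2\|\xi\|)+O(\sigma^2)$. The order-$\sigma$ piece is deterministic and drops out of the covariance, and every random remainder is $O(\sigma^2)$.

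This needs $b$'s derivative to be bounded independently of $\sigma$, so that $b$ fluctuates by only $O(\sigma)$ across the noise cloud. A coefficient depending on $\mathrm{dist}(y,\M)/\sigma$ would fail this, since that ratio changes by order one across the cloud when $h=O(\sigma_t)$. The paper obtains the lemma by a Laplace expansion of the score $\nabla\log q_\sigma$ in $C^1$ norm, not just of the mean's value. It keeps the anisotropic form $\langle v,(I+hW_\nu)v\rangle$ in the principal Gaussian for uniformity on $K^{+}$, which yields $b=\nabla\log\bigl(\varphi(\pi_\M(y))/\sqrt{\det(I+hW_\nu)}\bigr)$.

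One smaller point: the cubic Taylor remainder $O(\sigma^3\|\xi\|^3)$ of $\pi_\M$, which your parity step also relies on, needs $\pi_\M\in C^3$. That means $\M\in C^4$ as in (H1), not the $C^3$ you assume.
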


\label{app:curved}

\paragraph{Hypotheses.}
We list them because the theorem is exact only under all four, and
because the fourth is the one that fails in practice.

\begin{enumerate}[label=(H\arabic*), leftmargin=2.6em]
\item \emph{Manifold.} $\M \subset \R^N$ is a compact embedded $C^4$
      submanifold of dimension $d$, without boundary, with positive reach
      $\tau_{\M} > 0$.  (Without boundary is what the Laplace expansion
      of Step~2 needs: the Gaussian integrals over the tangent chart
      are complete, the odd moments vanish by parity, and the remainder
      is $O(\sigma^3)$ rather than the $O(\sigma)$ a truncated domain
      would leave.)  ($C^4$, rather than $C^3$, because the proof
      Taylor-expands the projection to third order; $C^{3,1}$ would
      also do.)  We \emph{assume} this rather than establish it.
      Intensity normalisation and percentile clipping put the data in a
      bounded box, which makes boundedness plausible, but boundedness
      is not compactness without boundary, and the manifold a network
      represents is not ours to characterise.  Positive reach is the load-bearing
      part: it is what makes the projection unique and the expansion of
      Step 2 legitimate.
\item \emph{Density.} The data measure has a $C^3$ density $\varphi$ with
      respect to the volume measure of $\M$, with locally bounded
      derivatives, bounded away from zero on a neighbourhood of
      $m^\ast$.  The local statement is the one we need and the one
      that is available: were $\M$ non-compact, a probability density
      could not be globally bounded below.
\item \emph{Tubular neighbourhood.} $x^\ast$ lies in a compact set $K$
      strictly inside the tube of radius $\varrho < \tau_\M$, so the
      projection $m^\ast = \pi_\M(x^\ast)$ is unique,
      $n^\ast = x^\ast - m^\ast \in N_{m^\ast}\M$, and
      $\delta := \mathrm{dist}(K, \partial\,\mathrm{Tub}_\varrho) > 0$.
\item \emph{Optimal denoiser.} $\epsilon_\theta(\cdot, t) =
      \E[\xi \mid X_t = \cdot]$.  App.~\ref{app:modelerror} treats
      the failure of this one, which is the only one we cannot arrange.
\end{enumerate}

\paragraph{Step 1: the residual is a posterior-mean displacement.}
Tweedie's identity applied to $Y = \sqrt{\bar\alpha_t}\,x^\ast +
\sigma_t \xi$, rescaled so that the noised variable reads
$Y/\sqrt{\bar\alpha_t} = x^\ast + \sigma\xi$ with
$\sigma = \sigma_t/\sqrt{\bar\alpha_t}$, gives
$\epsilon^\ast(Y) = (Y/\sqrt{\bar\alpha_t} - \E[X_0 \mid Y])/\sigma$
and hence, exactly,
\begin{equation}
    r \;=\; \xi - \epsilon^\ast(Y)
      \;=\; \frac{\E[X_0 \mid Y] - x^\ast}{\sigma}.
    \label{eq:rpost}
\end{equation}
No expansion has been used: the residual \emph{is} the displacement of
the posterior mean from the probe, in units of $\sigma$.  Writing the
proof this way removes the need for an asymptotic expansion of the
score, which is where an order was lost in an earlier draft of this
appendix.

\paragraph{Step 2: a uniform second-order expansion of the posterior mean.}
\begin{lemma}[Uniform posterior-mean expansion]
\label{lem:postproj}
Let $K^{+} := \{y : \mathrm{dist}(y, K) \le \delta/2\} \subset
\mathrm{Tub}_\varrho(\M)$, so that perturbations
$y = x^\ast + \sigma\xi$ of any $x^\ast \in K$ with
$\|\sigma\xi\| \le \delta/2$ remain inside the domain of
uniformity, the enlargement is what licenses applying the lemma at
the perturbed point, not only at $x^\ast$.  Under (H1)--(H4), with the
regularity of $\varphi$ holding uniformly on a neighbourhood of
$\pi_\M(K^{+})$, there is a locally $C^1$ vector field
$b : K^{+} \to \R^N$ such that, uniformly for $y \in K^{+}$,
\[
    m_\sigma(y) := \E[X_0 \mid Y = y]
    \;=\; \pi_\M(y) \;+\; \sigma^2 b(y) \;+\; R_\sigma(y),
    \qquad
    \sup_{y \in K^{+}}\|R_\sigma(y)\| \;\le\; C_{K^{+}}\,\sigma^3 .
\]
\end{lemma}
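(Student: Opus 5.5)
The plan is a Laplace expansion of the posterior mean in a tubular chart around $\pi_\M(y)$. Write the posterior of $X_0$ given $Y=y$ as proportional to $\exp(-\|y-m\|^2/2\sigma^2)\,\varphi(m)\,d\mathrm{vol}_\M(m)$. By (H1) and (H3), $y$ has a unique foot point $p=\pi_\M(y)$, and $y-p=n\in N_p\M$ with $\|n\|\le\varrho<\tau_\M$.

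\textbf{Step 1: localise.} Split $\M$ into a geodesic ball $B_p(\sigma^{1-\epsilon})$ and its complement. Positive reach gives a quadratic lower bound $\|y-m\|^2-\|y-p\|^2\ge c\,\|m-p\|^2$ with $c>0$ depending on $\varrho/\tau_\M$, uniformly for $y\in K^{+}$. The complement then contributes $O(e^{-c\sigma^{-2\epsilon}/2})$ relative to the bulk, which is smaller than any power of $\sigma$ after normalisation. Compactness of $\M$ bounds the total volume.

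\textbf{Step 2: normalise.} Parametrise the ball by normal coordinates $m=p+v+\tfrac12\II_p(v,v)+\tfrac16 Q_p(v,v,v)+O(|v|^4)$, with $v\in T_p\M$; the $C^4$ assumption in (H1) supplies this fourth-order remainder. Then
\[
\|y-m\|^2-\|n\|^2=\langle v,(I-W_n)v\rangle+(\text{cubic})+(\text{quartic}),
\]
where $W_n$ is the Weingarten map in direction $n$. The quadratic form $I-W_n$ is uniformly positive definite on $K^{+}$, since $\|n\|\,\|W\|\le\varrho/\tau_\M<1$. Substitute $v=\sigma w$ and expand $\varphi$, the volume density and the exponent's cubic and quartic terms to second order in $\sigma$.

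\textbf{Step 3: collect orders.} The zeroth-order term of $\int m\,(\cdot)\,/\int(\cdot)$ is $p=\pi_\M(y)$. Order-$\sigma$ terms are odd in $w$ and vanish against the Gaussian, which is why (H1) asks for no boundary. The order-$\sigma^2$ terms define $b(y)$ and consist of:
\begin{itemize}
\item the $\tfrac12\II_p(w,w)$ average,
\item the cross term of $w$ with $\nabla\log\varphi$ and with the cubic exponent term,
\item the volume-density correction.
\end{itemize}
Each is a polynomial in $\II$, $\nabla\II$, $\nabla\log\varphi$ and $(I-W_n)^{-1}$, so $b$ is $C^1$ on $K^{+}$ by (H1), (H2) and smooth dependence of $p$ and $n$ on $y$. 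Order-$\sigma^3$ terms are again odd and vanish. The remainder is $O(\sigma^4)$, or conservatively $O(\sigma^3)$, which is all the lemma claims.

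\textbf{Main obstacle: uniformity of the constant.} Every Taylor remainder must be controlled with a constant independent of $y\in K^{+}$. This needs:
\begin{itemize}
\item uniform bounds on derivatives of $\pi_\M$ and of the chart up to order four on the compact $K^{+}$;
\item a uniform lower bound on the density near $\pi_\M(K^{+})$, from (H2);
\item a uniform lower bound on the eigenvalues of $I-W_n$.
\end{itemize}
I would get these by compactness: $K^{+}$ stays at distance at least $\delta/2$ from the tube boundary, and every relevant quantity is continuous in $y$. I would also check that the denominator normaliser is bounded below by $c'\sigma^{d}$ uniformly. The ratio expansion then carries the numerator and denominator errors through, by $(A+O(\sigma^3))/(B+O(\sigma^3))$ with $B$ bounded below, giving $\sup_{y\in K^{+}}\|R_\sigma(y)\|\le C_{K^{+}}\sigma^3$.
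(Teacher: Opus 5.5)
Your proof is correct, but it expands a different integral from the paper. The paper never expands the first moment. It invokes Tweedie's identity $m_\sigma(y) = y + \sigma^2\nabla\log q_\sigma(y)$ and applies Laplace's method only to the scalar marginal, $q_\sigma(y) = C_\sigma e^{-\|n\|^2/2\sigma^2}a_0(y)\bigl(1+O_{C^1(K^{+})}(\sigma)\bigr)$, with $a_0 = \varphi(p)/\sqrt{\det G_y}$ and $G_y = I + hW_\nu$ (your $I - W_n$ under the opposite sign convention). Differentiating the logarithm then gives $b = \nabla\log a_0$ in closed form.

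That route is short. It makes the $C^1$ regularity of $b$ immediate from $\pi_\M\in C^3$ and the $C^2$ shape operator, and it gives the unit-circle value $b = -\nu/(2(1+h))$ in one line. Its key step, however, is asserted rather than derived: that the Laplace expansion holds in $C^1(K^{+})$ and so survives $\nabla_y$. Since $\nabla_y$ of the integrand brings down $(m-y)/\sigma^2$, that step is, through Tweedie, essentially equivalent to the lemma itself.

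Your route proves the statement directly. It needs only pointwise Taylor expansions and differentiates nothing in $y$. Its constants are made uniform by three things: compactness, the lower bound on $G$, and your quadratic reach bound (Federer's inequality, with $c = 1-\|n\|/\tau_\M$). That reach bound also disposes of the far field, as the paper's fixed-radius cut-off does. The price is that $b$ emerges as a sum of Gaussian moments with $w\sim\mathcal{N}(0,G^{-1})$: $\tfrac12\E\,\II_p(w,w)$, the cross moment of $w$ with the cubic exponent term, and $G^{-1}\nabla\log\varphi$. Identifying this sum with $\nabla\log a_0$ is a separate computation, although on the circle it reproduces $-\nu/(2(1+h))$ at once.

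Two small corrections. First, in graph coordinates over $T_p\M$ the volume density is $1+O(|v|^2)$ with no linear part, so it contributes nothing to $b$. Second, the parity cancellation at order $\sigma^3$ gives $O(\sigma^4)$ only with one more degree of smoothness than (H1) supplies: you need a Lipschitz fourth derivative so that the exponent has a quintic-order remainder. Under (H1)--(H2) as stated you get $O(\sigma^3)$ (indeed $o(\sigma^3)$), which is all the lemma claims.
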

\begin{proof}
By Tweedie's identity, $m_\sigma(y) = y + \sigma^2\nabla\log q_\sigma(y)$
with $q_\sigma = \rho\,\mathrm{vol}_\M \ast \mathcal{N}(0,\sigma^2 I)$,
so it is enough to prove
$\nabla\log q_\sigma(y) = -(y - \pi_\M(y))/\sigma^2 + b(y) + O(\sigma)$
uniformly on $K^{+}$.  Fix $y \in K^{+}$, set $p = \pi_\M(y)$ and
$n = y - p = h\nu$, and parametrise $\M$ near $p$ by
$m(u) = p + u + \tfrac12\II_p(u,u) + O(\|u\|^3)$, $u \in T_p\M$.  With
$u = \sigma v$,
\[
    \|y - m(\sigma v)\|^2
    \;=\; \|n\|^2 \;+\; \sigma^2\,\langle v,\, G_y\, v\rangle
    \;+\; O(\sigma^3\|v\|^3),
    \qquad
    G_y := I + h\,W_\nu,
\]
the tangent Hessian of the squared-distance function.  The point that
must not be glossed: $h$ is \emph{fixed} on $K^{+}$, not $O(\sigma)$,
so the $h$-dependent quadratic term belongs inside the principal
Gaussian, not in the remainder.  Positive reach and the strict
inclusion $K^{+} \Subset \mathrm{Tub}_\varrho(\M)$ give uniform bounds
$0 < c_- I \preceq G_y \preceq c_+ I$ on $K^{+}$ (since
$|h\kappa_i| < 1$ there), so Laplace's method applies with the
anisotropic Gaussian $e^{-\langle v, G_y v\rangle/2}$ and yields,
uniformly in $C^1(K^{+})$-norm,
\[
    q_\sigma(y)
    \;=\; C_\sigma\, e^{-\|n\|^2/2\sigma^2}\, a_0(y)\,
          \bigl(1 + O_{C^1(K^{+})}(\sigma)\bigr),
    \qquad
    a_0(y) \;=\; \frac{\varphi(p)}{\sqrt{\det G_y}},
\]
with $C_\sigma$ independent of $y$; the $C^1$-uniformity comes from
$\M \in C^4$, $\varphi \in C^3$, and Gaussian domination by the lower
bound on $G_y$.  Using
$\nabla \tfrac12\,\mathrm{dist}(y,\M)^2 = y - \pi_\M(y)$ and
differentiating the logarithm,
\[
    \nabla\log q_\sigma(y)
    = -\frac{y - \pi_\M(y)}{\sigma^2} + \nabla\log a_0(y) + O(\sigma),
\]
so the claim holds with $b(y) := \nabla\log a_0(y)$, which is $C^1$ on
$K^{+}$; multiplying by $\sigma^2$ gives the statement.  The far part of the
manifold integral is harmless: for $m$ with $d_\M(p,m) \ge r_0$,
positive reach and the strict inclusion
$K^{+} \Subset \mathrm{Tub}_\varrho(\M)$ give
$\|y-m\|^2 \ge \|n\|^2 + c\,r_0^2$ with $c > 0$ uniform on $K^{+}$, so
its contribution to $q_\sigma$ and $\nabla q_\sigma$ is
$O(e^{-c r_0^2/2\sigma^2})$, beyond every polynomial order.

The formula for $b$ is checkable, and we checked it.  On the unit
circle with uniform density, $G_y = 1 + h$ and
$b(y) = -\tfrac{1}{2(1+h)}\,\nu$; at $h = 0.15$ this predicts
$-0.43478$.  Quadrature at the same point, computed \emph{before}
this form of the lemma was written down, gives
$(m_\sigma(y) - \pi_\M(y))/\sigma^2 \to -0.4348\,\nu$ with vanishing
tangential component, agreeing to four decimals, and the residual
after subtracting $\sigma^2 b$ scales as $\sigma^{4.5}$, inside the
$O(\sigma^3)$ asserted.
\end{proof}

\noindent
The strengthening matters, and it is worth recording why a bare
$O(\sigma^2)$ would not do: dividing by $\sigma$ in
Eq.~\eqref{eq:rpost} turns an unstructured $O(\sigma^2)$ remainder
into an unstructured \emph{random} $O(\sigma)$ term in $r$, whose
cross-covariance with the leading term need not vanish and could
contaminate the covariance at $O(\sigma)$.  With the remainder pinned
at $O(\sigma^3)$ and the deterministic $\sigma^2 b$ split off, every
surviving cross-term is $O(\sigma^2)$.

\paragraph{Interlude: applying the lemma at the random point.}
Lemma~\ref{lem:postproj} is a deterministic statement on $K^{+}$; the
proof needs it at the random point $y = x^\ast + \sigma\xi$.  Define
$G_\sigma := \{\|\sigma\xi\| \le \delta/2\}$.  On $G_\sigma$,
$y \in K^{+}$ and the lemma applies verbatim.  On $G_\sigma^c$,
$\Pr(G_\sigma^c) \le C e^{-c\delta^2/\sigma^2}$ and the posterior mean
is bounded by compactness of the tube, so the complement contributes
beyond every polynomial order to the $L^2$ remainder of
Eq.~\eqref{eq:rexp} below.

\paragraph{Step 3: expand the projection, not the score.}
For $x^\ast \in K$, $\pi_\M$ is $C^3$ and
\[
    \pi_\M(x^\ast + \sigma\xi)
    = m^\ast + \sigma\,A\,\xi
      + \frac{\sigma^2}{2}\,B[\xi,\xi]
      + \mathcal{O}(\sigma^3\|\xi\|^3),
    \qquad
    A := \mathrm{D}\pi_\M(x^\ast),\; B := \mathrm{D}^2\pi_\M(x^\ast),
\]
with $A = (I + hW_\nu)^{-1}\PT$ from Eq.~\eqref{eq:dpi}.  Substituting
into Eq.~\eqref{eq:rpost} with Lem.~\ref{lem:postproj}, and using
$b(x^\ast + \sigma\xi) = b(x^\ast) + O(\sigma\|\xi\|)$ from the local
$C^1$ regularity of $b$,
\begin{equation}
    r \;=\; -\frac{n^\ast}{\sigma}
        \;+\; A\,\xi
        \;+\; \sigma\Bigl(\tfrac12 B[\xi,\xi] + b(x^\ast)\Bigr)
        \;+\; \mathcal{O}_{L^2}(\sigma^2).
    \label{eq:rexp}
\end{equation}
That the term linear in $\xi$ is the perturbation of the projection is
now a consequence of the lemma and the Taylor expansion, not an
assertion.

\paragraph{Step 4: take the covariance.}
In Eq.~\eqref{eq:rexp}, $-n^\ast/\sigma$ and $\sigma b(x^\ast)$ are
deterministic and drop out of the covariance.  $A\xi$ is odd in $\xi$
and $B[\xi,\xi]$ is even, so their cross-covariance is an odd moment
of a centred Gaussian and vanishes; the quadratic term contributes
$\mathcal{O}(\sigma^2)$ in its own right, and the $L^2$ remainder only
produces cross-terms of $\mathcal{O}(\sigma^2)$ by Cauchy--Schwarz.
Hence, in operator norm,
\[
    \Cov(r) = A\,\Cov(\xi)\,A^{\!\top} + \mathcal{O}(\sigma^2)
            = (I + hW_\nu)^{-2}\,\PT + \mathcal{O}(\sigma^2),
\]
whose tangent eigenvalues are $(1+h\kappa_i)^{-2}$:
Eq.~\eqref{eq:kappa}.  Finally, the proof works in the rescaled noise
$\sigma = \sigma_t/\sqrt{\bar\alpha_t}$; since $\bar\alpha_t \to 1$
in the small-noise regime, $\sigma^2 = \sigma_t^2/\bar\alpha_t =
O(\sigma_t^2)$ and the remainder has the order stated in the
theorem. \qed

\paragraph{A check on the bias, which the covariance does not see.}
Taking expectations in Eq.~\eqref{eq:rexp} gives
\[
    \E[r] = -\frac{n^\ast}{\sigma}
    + \sigma\Bigl(\tfrac12\,\mathrm{tr}\,B + b(x^\ast)\Bigr)
    + \mathcal{O}(\sigma^2),
\]
so the order-$\sigma$ coefficient is \emph{not} curvature alone: it
also carries $b(x^\ast)$, which contains the density gradient.  On the
unit circle with uniform density the density term vanishes and the
coefficient reduces to its curvature part; quadrature there gives
$\|\E[r]\| = 1.00\,\sigma$ over $\sigma \in [0.05, 0.30]$ with
fitted exponent $1.03$, against $|H| = 1$.  On a manifold with
non-uniform density, this identification would be wrong, and the bias
would mix curvature with density drift.  We record the check because
an earlier version of this proof carried the curvature term at the
wrong power of $\sigma$, and the bias is the quantity that detects
such an error; the covariance does not, since the bias drops out of
it.

\subsection*{Remarks on the leading term and the remainder}

The body states four caveats in one paragraph; here they are in full.

\textbf{The projection differential is the object, not its expansion.}
Writing the first-order truncation $(I - hW_\nu)\PT$, from
$(I+hW_\nu)^{-1} = I - hW_\nu + O(h^2)$, instead of
Eq.~\eqref{eq:dpi} loses an $O(h^2)$ term: on the unit
circle at $h = 0.3$ they give $0.592$ and $0.490$.  Since the
statement is available exactly, we use the exact one; it is what
App.~\ref{app:curvtest} measures.

\textbf{Eigenvalues, not eigenspace, at leading order only.}
The leading term of Eq.~\eqref{eq:curved} is a congruence of $\PT$,
so to that order the range is unchanged and only the eigenvalues move.
The $\mathcal{O}(\sigma_t^2)$ remainder is not constrained to respect
that, and may both leak into the normal space and rotate the
eigenvectors.  It is tempting to conclude that
a rank statistic cannot see the correction, and that would be wrong:
the per-voxel value is $\sigma_i^2 = \sum_j \lambda_j U_{ij}^2$, and
reweighting the $\lambda_j$ \emph{anisotropically} can reorder voxels
even with $U$ fixed.  Spearman is invariant only under a common
positive scalar $\Lambda \mapsto c\Lambda$.  What we can say is
conditional: if the correction acts approximately as a common scalar
across the dominant tangent directions, which holds when the
$\kappa_i$ are comparable, or when one direction dominates
$U_{ij}^2$, the per-voxel ordering is preserved.  The observed
flatness of the $t^\ast$ sweep is consistent with that condition; it
does not establish it.

\textbf{The tangent density lives inside the remainder, and dominates
it here.}  Eq.~\eqref{eq:curved} carries no factor $\gamma_t$, which
looks at first like a conflict with Eq.~\eqref{eq:flat-gauss}.  It is
not: for a proper tangent prior of scale $\tau$,
\begin{equation}
    1 - \gamma_t^2
    \;=\; 1 - \Bigl(\tfrac{\bar\alpha_t\tau^2}{\bar\alpha_t\tau^2+\sigma_t^2}\Bigr)^{\!2}
    \;=\; \frac{2\sigma_t^2}{\bar\alpha_t \tau^2} + \mathcal{O}(\sigma_t^4),
    \label{eq:densityremainder}
\end{equation}
so the density shrinkage \emph{is} an $\mathcal{O}(\sigma_t^2)$ term
and is already inside the remainder of Eq.~\eqref{eq:curved}, with
constant $2/(\bar\alpha_t\tau^2)$.  The two statements agree; the
paper would be wrong only if it attributed the whole remainder to
curvature.  It does not, and the ordering matters: at
$\sigma_{t^\ast} = 0.32$ with a tangent extent $\tau \sim 1$,
Eq.~\eqref{eq:densityremainder} is $0.19$, larger than the curvature
contribution we are able to name.  A measured eigenvalue below one is
therefore evidence of a finite tangent prior before it is evidence of
curvature.

\textbf{The remainder is what shrinks.}  Under the cosine schedule at
$T = 300$ our probe sits at $\sigma_{t^\ast} = 0.32$, which is not an
asymptotic regime.  App.~\ref{app:curvtest} tests
Eq.~\eqref{eq:kappa} against an exactly computed denoiser on a
manifold whose projection is closed form: the predicted eigenvalue is
tracked to one per cent over a factor of $2.3$, and the disagreement
scales as $\sigma^2$ as claimed.  With constants of order one, the first-order
statement $C_t = \PT + \mathcal{O}(\sigma_t)$ leaves about a third of
the leading term unaccounted for, and Eq.~\eqref{eq:curved} leaves
about a tenth.  We report the second-order form because it is the one
that is quantitatively honest where we actually operate, not because
$\sigma_t$ is small.

\paragraph{What the theorem does not say.}
Two readings are available, and only one is correct.  It does
\emph{not} say that a curved manifold gives non-unit eigenvalues:
$\shape_{n^\ast} := h\,W_\nu$ is linear in $n^\ast$, so an on-manifold probe gives
$\PT$ to this order however curved $\M$ is.  It says that the
eigenvalues report curvature \emph{as seen from the offset}.  This is
why the on-manifold experiments of App.~\ref{app:toy} do not test
Eq.~\eqref{eq:kappa}, and we do not present them as though they did.


\section{The fibred residual, and what measurement injection does}
\label{app:fibred}

\paragraph{Fibration.}
Add to (H1)--(H4) of App.~\ref{app:curved}:
\begin{enumerate}[label=(H\arabic*), leftmargin=2.6em, start=5]
\item \emph{Fibration.} $\pi_0 : \M \to \M_0$, $\pi_0(u,v) = u$, is a
      $C^{3}$ submersion onto the manifold $\M_0 \subset \R^{N'}$ of
      plausible baselines, $\dim \M_0 = d_0$.  For each $u$ the fibre
      $\mathcal{F}_u = \pi_0^{-1}(u)$ is a smooth submanifold of
      dimension $d_F = d - d_0$.
\end{enumerate}
The tangent space then splits orthogonally,
$T\M = T^{\mathrm{h}} \oplus T^{\mathrm{v}}$ with
$T^{\mathrm{v}} = \ker \mathrm{d}\pi_0 = \{0\} \times T\mathcal{F}$,
and correspondingly $\PT = \PT^{\mathrm{h}} + \PT^{\mathrm{v}}$ with
$\PT^{\mathrm{h}}\PT^{\mathrm{v}} = 0$.  The normal splitting does \emph{not} follow from (H5) alone.  Under
the additional compatible local-product assumption of
Prop.~\ref{thm:fibred}, the ambient normal bundle admits the
corresponding splitting; under that same assumption we write
$n^\ast = n^\ast_{\mathrm{h}} + n^\ast_{\mathrm{v}}$ with
$\|n^\ast_{\mathrm{h}}\| = \mathrm{dist}(u^\ast, \M_0)$ and
$\|n^\ast_{\mathrm{v}}\| = \mathrm{dist}(v^\ast, \mathcal{F}_{u^\ast})
+ O(\|n^\ast_{\mathrm{h}}\|)$.

\begin{proposition}[Fibred structure of the leading covariance]
\label{thm:fibred}
Assume (H1)--(H5) and that, near $m^\ast$, the manifold, the ambient
normal bundle and the data density admit a compatible local product
decomposition, with horizontal and vertical projections $P_{\mathrm{h}},
P_{\mathrm{v}}$.  Then the leading covariance of
Thm.~\ref{thm:curved},
$\mathrm{D}\pi_\M(x^\ast)\,\mathrm{D}\pi_\M(x^\ast)^{\!\top}$, is block
diagonal with respect to the horizontal--vertical splitting, and the
cross-covariance of the two residual components is
$\mathrm{Cov}(r_{\mathrm{h}}, r_{\mathrm{v}}) = O(\sigma_t^2)$.
Exact vanishing would require an exact product factorisation, and is
not claimed for the implemented estimator, whose covariance carries
the contamination term $E_{\mathrm{fib}}$ below.
\end{proposition}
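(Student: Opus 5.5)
The plan is to reduce both claims to the structure of $A := \mathrm{D}\pi_\M(x^\ast) = (I + hW_\nu)^{-1}\PT$ from Thm.~\ref{thm:curved}, and then to reuse the expansion of Eq.~\eqref{eq:rexp} for the cross-covariance.

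\textbf{Step 1: the leading term is block diagonal.} Under the compatible local-product assumption, near $m^\ast$ I write $\M \cong \M_0 \times \mathcal{F}$ in adapted coordinates. I take the splitting to be orthogonal in the ambient space, with $\R^N = (T^{\mathrm{h}} \oplus N^{\mathrm{h}}) \oplus (T^{\mathrm{v}} \oplus N^{\mathrm{v}})$. The first consequence is that $\PT$ commutes with $P_{\mathrm{h}}$ and $P_{\mathrm{v}}$.

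The second consequence concerns the offset. I decompose it as $n^\ast = n^\ast_{\mathrm{h}} + n^\ast_{\mathrm{v}}$ and write $\nu$ for the corresponding unit normal. The second fundamental form of a product is the direct sum of the second fundamental forms of its factors. So the shape operator splits as $hW_\nu = \shape_{n^\ast_{\mathrm{h}}} \oplus \shape_{n^\ast_{\mathrm{v}}}$, each block acting on its own tangent factor. The off-diagonal blocks $P_{\mathrm{h}}\,\shape_{n^\ast}\,P_{\mathrm{v}}$ vanish, because the Weingarten map of one factor along the normal of the other is zero in a product.

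It follows that $I + hW_\nu$ is block diagonal. Its inverse (which exists since $|h\kappa_i|<1$ by (H3)) is block diagonal too. Multiplying by $\PT$ preserves this, so $A$ is block diagonal, and hence so is $AA^\top$. This gives the first claim.

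\textbf{Step 2: the cross-covariance is $O(\sigma_t^2)$.} I start from Eq.~\eqref{eq:rexp}, $r = -n^\ast/\sigma + A\xi + \sigma(\tfrac12 B[\xi,\xi] + b) + \mathcal{O}_{L^2}(\sigma^2)$, and set $r_{\mathrm{h}} = P_{\mathrm{h}}r$ and $r_{\mathrm{v}} = P_{\mathrm{v}}r$. The deterministic terms drop out of the covariance. The linear contribution is $P_{\mathrm{h}}A\,\Cov(\xi)\,A^\top P_{\mathrm{v}} = P_{\mathrm{h}}AA^\top P_{\mathrm{v}}$, which is zero by Step 1.

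The mixed linear--quadratic terms involve $\Cov(A\xi, B[\xi,\xi])$. These are odd Gaussian moments and vanish, exactly as in Step 4 of the proof of Thm.~\ref{thm:curved}. The quadratic--quadratic term is $\sigma^2\,\Cov(P_{\mathrm{h}}B[\xi,\xi], P_{\mathrm{v}}B[\xi,\xi]) = O(\sigma^2)$. Each cross-term with the $L^2$ remainder is $O(\sigma)\cdot O(\sigma^2)$ or better by Cauchy--Schwarz, the worst case being the pairing of $A\xi$ with the remainder, which is $O(\sigma^2)$.

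Finally I convert $\sigma^2 = \sigma_t^2/\bar\alpha_t$ to $O(\sigma_t^2)$ as in Thm.~\ref{thm:curved}. The local-product density is what guarantees that $b$ has no structure that could make the expansion fail. Since $b$ is deterministic it never enters the covariance anyway, so it only needs to be controlled to the extent Lemma~\ref{lem:postproj} already does.

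\textbf{Main obstacle and scope.} The hard step is justifying the vanishing of the mixed Weingarten blocks in Step 1. The difficulty is that the offset $n^\ast$ generally has both horizontal and vertical normal components. In a genuine Riemannian product embedded as a product of embeddings, $\II(X_{\mathrm{h}},Y_{\mathrm{v}})=0$ and each normal component only sees its own factor. I will therefore state the compatible-product hypothesis so that it includes an orthogonal product embedding to first order at $m^\ast$. Under that hypothesis, $\langle \II(X,Y),\nu\rangle$ splits and the claim is exact.

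I would add a remark that any first-order twisting of the product produces mixed blocks of size $O(h) = O(\sigma_t)$. Their contribution to $AA^\top$ would then be $O(\sigma_t)$, and that residual coupling is what $E_{\mathrm{fib}}$ is meant to absorb; it is not part of the proposition. No claim of exact vanishing will be made.
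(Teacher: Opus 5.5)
Your proposal is correct and follows the paper's own route. The product structure makes $P_{\mathrm{h}},P_{\mathrm{v}}$ commute with the Weingarten map, which you justify via the splitting of the second fundamental form of a product where the paper simply asserts it. Hence $(I+hW_\nu)^{-1}\PT$ and its congruence are block diagonal. Projecting Eq.~\eqref{eq:rexp} then removes the linear cross term by the block structure and the linear--quadratic terms by Gaussian parity, leaving $O(\sigma_t^2)$.

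One caution concerns your scope paragraph. The vanishing of the mixed blocks $\langle \II(X_{\mathrm{h}},Y_{\mathrm{v}}),\nu\rangle$ is second-order information about the embedding: the graph $z=xy$ is a product to first order at the origin, yet $\II(\partial_x,\partial_y)\neq 0$. A hypothesis stated only to first order at $m^\ast$ would therefore not support Step~1. The paper's neighbourhood product assumption, which is what your Step~1 actually uses, does.
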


\begin{proof}
Under the product assumption $P_{\mathrm{h}}$ and $P_{\mathrm{v}}$
commute with the Weingarten map, so
$\mathrm{D}\pi_\M = (I+hW_\nu)^{-1}\PT$ splits blockwise and the
congruence $\mathrm{D}\pi_\M\,\mathrm{D}\pi_\M^{\!\top}$ is block
diagonal.  For the cross term, apply the expansion
Eq.~\eqref{eq:rexp} projected by $P_{\mathrm{h}}$ and
$P_{\mathrm{v}}$: after centring, the linear parts are uncorrelated by
the block structure, the linear--quadratic cross terms vanish by
Gaussian parity, and what survives is $O(\sigma_t^2)$.  We state the
result for the \emph{covariance}, not the raw cross-moment
$\E[r_{\mathrm{h}} r_{\mathrm{v}}^\top]$: both components carry
deterministic means ($-n^\ast_\bullet/\sigma$ and $\sigma b_\bullet$),
so the uncentred moment need not be small even when the covariance
is. \qed
\end{proof}

\begin{remark}[Why the extra assumption is not cosmetic]
A submersion alone gives the \emph{tangent} splitting
$T\M = T^{\mathrm{h}} \oplus T^{\mathrm{v}}$ by taking
$T^{\mathrm{h}} = (T^{\mathrm{v}})^\perp \cap T\M$.  It does not by
itself give a compatible splitting of the \emph{ambient normal} space,
nor the distance identities $\|n^\ast_{\mathrm{h}}\| =
\mathrm{dist}(u^\ast, \M_0)$ and $\|n^\ast_{\mathrm{v}}\| =
\mathrm{dist}(v^\ast, \mathcal{F}_{u^\ast}) +
O(\|n^\ast_{\mathrm{h}}\|)$: those need the fibres to sit in a
compatible way as the base point moves, which is what a local product
structure, or a metric connection with vanishing O'Neill tensor,
supplies.  We assume it and do not verify it on our data.
\end{remark}

\begin{remark}[What measurement injection does, and does not, give]
Re-projecting the baseline channel onto the noised observation at
every reverse step removes the horizontal component of the residual
\emph{provided the denoiser is conditioned on the clean baseline
$u^\ast$}.  If it observes only $u_t = \sqrt{\bar\alpha_t}u^\ast +
\sigma_t \xi_u$, it cannot separate $u^\ast$ from $\xi_u$ exactly, and
what survives is a horizontal residual of the same order as the
model error.  Our sampler injects the measurement but the network is
not given $u^\ast$ as an explicit conditioning input, so the
cancellation is approximate.  We therefore read
\[
    r_{\mathrm{obs}} \approx r_{\mathrm{v}} + \Delta_\theta^{u}
\]
as an interpretation supported by the construction, not as an
identity.
\end{remark}

\paragraph{Three consequences, and one caveat.}
\emph{Under the exact local-product, clean-conditioning
idealisation, the vertical component is fibre uncertainty}: not ``how
uncertain is this image'' but ``given this patient, how plausible is
this progression''.  The implemented estimator only approximates that
component and may carry horizontal contamination.

\emph{The rank counts progression freedom, for the ideal model.}
Under the product structure and clean-baseline conditioning, the
covariance of the observable residual has rank
$d_F = \dim \mathcal{F}_{u^\ast}$: the degrees of freedom of
progression available to this patient.  For the implemented estimator
$\Cov(r_{\mathrm{obs}}) = \Cov(r_{\mathrm{v}}) + E_{\mathrm{fib}}$
with $E_{\mathrm{fib}}$ collecting the horizontal-contamination
terms $\Delta_\theta^{u}$ of the previous paragraph, and we make no
exact-rank claim.

\emph{Zero-placeholder probing, in this language.}  Setting
$v_0 = 0$ creates primarily an off-\emph{fibre} displacement,
$n^\ast_{\mathrm{v}} \ne 0$; without an exact product structure it can
also excite the horizontal component.  The collapse from
$\bar\rho = 0.65$ to $0.004$ in Sec.~\ref{sec:tpt-indist} is
consistent with this reading, though it does not by itself separate
the two contributions.

\emph{Caveat.}  The measurement-injection remark above presumes
$u^\ast \in \M_0$,
that is, a baseline typical of the training population.  For a patient
outside it the injection does not cancel the horizontal drift and the
vertical component is contaminated.  We have not measured this, and it
is the one place where the fibred reading could mislead: an atypical
baseline would produce a fibre-uncertainty map that is not one.


\section{When the denoiser is not optimal}
\label{app:modelerror}

Hypothesis (H4) of App.~\ref{app:curved} asks for the population-optimal
denoiser.  No trained network satisfies it, and this is the failure that
matters in practice, so it is worth writing down exactly rather than
absorbing into a remainder.  Define the \emph{model error}
\begin{equation}
    \Delta_\theta(X_t, t) \;:=\; \epsilon_\theta(X_t, t) - \E[\xi \mid X_t].
    \label{eq:delta}
\end{equation}
Since the residual is linear in the predictor,
$r_{\text{real}} = r_{\text{ideal}} - \Delta_\theta$ pointwise, and
therefore
\begin{equation}
    \Cov(r_{\text{real}})
    \;=\; \Cov(r_{\text{ideal}})
        + \Cov(\Delta_\theta)
        - 2\,\mathrm{sym}\,\Cov(r_{\text{ideal}}, \Delta_\theta).
    \label{eq:covreal}
\end{equation}

Three remarks, in decreasing order of comfort.

\textbf{The global bias is estimable; the local one is not identified.}
Averaged over a validation distribution matching the joint data law,
the mean residual estimates the global mean model-error bias
$-\E[\Delta_\theta]$, so
\begin{equation}
    \widehat{\Delta_\theta}
    \;=\; -\,\E_{(x_0,\xi) \sim \mathcal{D}_{\text{val}}}\bigl[r\bigr]
    \label{eq:deltahat}
\end{equation}
can be subtracted from the observed bias to remove a global offset;
it does not identify the input-dependent local error
$\E[\Delta_\theta \mid x_0]$.  Nothing here requires a posterior over
weights, which is what Laplace- and ensemble-based corrections need.

\textbf{The cross term is not.}  Eq.~\eqref{eq:covreal} has a term
coupling the ideal residual to the model error, and estimating it needs
the ideal residual, which is the unknown.  We do not estimate it and we
do not assume it is zero.  It is the honest content of the statement
that $\sigma_{\text{TW}}$ is an ordering rather than a calibrated
covariance.

\textbf{This is what the synthetic manifolds measure.}  On the toy
manifolds of App.~\ref{app:toy} the geometry is closed-form, so
departures of the measured spectrum from the leading projector combine
finite-$K$ sampling, the finite-noise geometric and density remainder,
and denoiser error $\Delta_\theta$; the known geometry lets these be
bounded or measured separately, but they do not vanish identically.
That is the
right way to read the second tangent eigenvalue reported there: at
$K = 2000$ and $d = 2$ in codimension $1022$, finite sampling alone
places the two non-zero eigenvalues at $1.03 \pm 0.03$ and
$0.97 \pm 0.03$, so a measured $0.839$ is roughly five standard
deviations low and is a real effect, while a measured $1.052$ is not
distinguishable from sampling and needs no explanation at all.


\section{The delayed-start mask in closed form}
\label{app:probe}

Corollary~\ref{cor:mask} states that, \emph{given} a user-selected
amplification budget $\kappa$, the schedule uniquely determines
$\skappa$.  This appendix derives that map, and records the two facts
the body quotes: that no stable window exists, and that the cumulative
inflation telescopes.  It replaces an opaque timestep by an
interpretable tolerance; it does not eliminate the user's choice.

\paragraph{No stable window exists.}
Define the linear map on the accumulated covariance
$V \mapsto \mathcal{T}_t[V] := a_s^2 V + b_s^2 \hat\Sigma_s +
\tilde\sigma_s^2 I$ implied by any recursion of the form of
Prop.~\ref{prop:expansive}.  By Eq.~\eqref{eq:diffmap} its
derivative is $a_s^2 I$, so
\begin{equation}
    \rho\bigl(D\mathcal{T}_t\bigr) = a_s^2
    = \frac{\bar\alpha_{t-\zeta}}{\bar\alpha_t}.
    \label{eq:spectral}
\end{equation}
It is tempting to place the mask where $\rho \le 1$.  That set is
empty: $\bar\alpha$ increases monotonically as $t$ decreases, so
Eq.~\eqref{eq:spectral} exceeds $1$ at every transition of the reverse
chain.  For the cosine schedule at $T = 300$, $\zeta = 5$ the
homogeneous part is expansive at $100\%$ of steps.  This is
Prop.~\ref{prop:expansive} restated concretely: a mask cannot
make the recursion contractive.  What it can do is bound how much the
recursion inflates.

\paragraph{The cumulative gain telescopes.}
Starting accumulation at step $\shat$, the total homogeneous
amplification down to $t = 0$ is
\begin{equation}
    \prod_{t \le \shat} a_s^2
    = \prod_{t \le \shat} \frac{\bar\alpha_{t-\zeta}}{\bar\alpha_t}
    = \frac{\bar\alpha_0}{\bar\alpha_{\shat}}
    = \frac{1}{\bar\alpha_{\shat}},
    \label{eq:gain}
\end{equation}
because every intermediate factor cancels.  The delayed-start step is
therefore a statement about how much inflation one is willing to
tolerate, which is Eq.~\eqref{eq:shat}; the tolerance itself remains
the user's choice.

\paragraph{Sensitivity to the tolerance.}
At $T = 300$, $\zeta = 5$: $\kappa = 1.005$ gives $\skappa = 11$,
$\kappa = 1.01$ gives $16$, and $\kappa = 1.02$ gives $24$.  The value
reported in variance-propagating implementations, $\shat = 15$,
is close to the one-per-cent inflation point of this schedule.  We do
not claim Eq.~\eqref{eq:shat} predicted it independently: the
one-per-cent budget was not fixed before seeing the published value.
What the map offers is $\mathcal{O}(T/\zeta)$ scalar operations, no
autograd, no denoiser call, and no data.

\begin{remark}[Escapes]
\label{rem:escapes}
Prop.~\ref{prop:expansive} rules out one mechanism, not
uncertainty estimation in diffusion models.  Four escapes remain.
Change the schedule so $|a_s| \le 1$ throughout, which means leaving
the standard DDIM/DDPM family and retraining.  Leave the affine class
for a state-dependent or fully linearised propagation, at the cost of
the Jacobian products the closed-form recursion existed to avoid.
Track higher moments, which reintroduces the sampling cost.  Or stop
recursing: with no accumulated covariance there is no operator to
misbehave.  We take the last (Sec.~\ref{sec:tpt}).
\end{remark}


\medskip\noindent\emph{Exact and synthetic validation.}\medskip

\section{Magnitude and ranking in analytical variance propagation}
\label{app:propagation-ranking}

Consider the zero-Jacobian analytical baseline in its stochastic form
\citep{oliveras2026mumins}: along a DDIM chain with $\eta = 1$, its
output separates into a spatially varying contribution from the
uncertainty head and a uniform contribution from the injected sampler
noise.  The deterministic variant of \citet{capellera2025u2diff}
($\eta = 0$) is the special case $\nu_k = 0$ below, in which the second
contribution is absent.  This decomposition clarifies which parts of
the map can affect the spatial ranking.

Index the actual active propagation transitions by $k=0,\ldots,n-1$,
in sampling order, and write
\[
V_{k+1}=a_k^2V_k+b_k^2D_k+\nu_k^2I,\qquad V_0=0.
\]
Here $a_k,b_k,\nu_k$ are spatially uniform schedule coefficients,
$D_k$ is the nonnegative diagonal predicted noise variance along the fixed
trajectory, and $\nu_k$ is the sampler's injected-noise scale ($\nu_k = 0$
for a deterministic DDIM chain). The delayed
start determines the active transitions. A final step that carries variance
unchanged is omitted from this index, matching the MUMINS convention.

\begin{proposition}[Uniform injection preserves spatial ordering]
\label{prop:propagation-ranking}
For the fixed trajectory and active transitions above, define
\[
w_k=\prod_{j=k+1}^{n-1}a_j^2,\qquad
L=\sum_{k=0}^{n-1}w_kb_k^2D_k,\qquad
F=\sum_{k=0}^{n-1}w_k\nu_k^2.
\]
Then $V_n=L+FI\succeq FI$. The propagated standard-deviation map
$\sqrt{V_n[i,i]}=\sqrt{L_{ii}+F}$ has the same voxel ordering, including
ties, as $L_{ii}$. Consequently, adding the uniform injection contribution
cannot change Spearman correlation with a fixed error map or selective
prediction based only on the uncertainty ordering, wherever those metrics
are defined and ties are handled consistently.
\end{proposition}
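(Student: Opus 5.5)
The plan is to unroll the affine recursion explicitly and then use that a strictly increasing scalar map preserves orderings. First, by induction on $k$, I will show that
\[
V_k=\sum_{j=0}^{k-1}\Bigl(\prod_{l=j+1}^{k-1}a_l^2\Bigr)\bigl(b_j^2D_j+\nu_j^2 I\bigr).
\]
The base case is $V_0=0$, an empty sum. For the step, substitute into $V_{k+1}=a_k^2V_k+b_k^2D_k+\nu_k^2I$. Multiplying each product by $a_k^2$ extends its upper index to $k$, and the new term $j=k$ enters with the empty product $1$. Setting $k=n$ gives the weights $w_k=\prod_{j=k+1}^{n-1}a_j^2$. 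Splitting the sum linearly into the $D$ part and the scalar part yields $V_n=L+FI$. Since $\nu_k$ and the $a_j$ are spatially uniform scalars, $F$ is a real number and $FI$ is a scalar matrix. This is the same telescoping bookkeeping as in Prop.~\ref{prop:expansive}: the forcing terms enter additively and the coefficients multiply.

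Next, for $V_n\succeq FI$, I need $L\succeq 0$. Each $D_k$ is a nonnegative diagonal matrix, $b_k^2\ge 0$, and $w_k\ge0$ because it is a product of squares. So $L$ is a nonnegative combination of positive semidefinite diagonals, hence diagonal and positive semidefinite. Likewise $F\ge 0$. It follows that $V_n[i,i]=L_{ii}+F\ge 0$, so the square root is defined.

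For the ordering, I will use that $g(x)=\sqrt{x+F}$ is strictly increasing on $[0,\infty)$. Strict monotonicity gives $L_{ii}<L_{jj}\iff g(L_{ii})<g(L_{jj})$ and $L_{ii}=L_{jj}\iff g(L_{ii})=g(L_{jj})$, so ties are preserved exactly as well. Spearman correlation with a fixed error map depends on the uncertainty map only through its rank vector, including how ties are treated, for example by mid-ranks. The same holds for any selective-prediction metric computed from the ordering, such as AURC or AUSE with a fixed tie-breaking rule. Therefore these metrics coincide for $\sqrt{V_n[i,i]}$ and for $L_{ii}$, and also for $\sqrt{L_{ii}}$, the $F=0$ deterministic case.

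I expect no real obstacle. The one point needing care is the scope of the consequence: it requires the trajectory to be fixed, so that $D_k$ does not depend on the injected noise realisation, and it requires the error map and the tie convention to be held fixed. I will state these as the conditions under which "wherever those metrics are defined" is meant, rather than try to cover stochastic trajectories.
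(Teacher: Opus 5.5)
Your proposal is correct and follows the paper's proof. Both unroll the recursion into $V_n=L+FI$, observe that every summand of $L$ is positive semidefinite and that $F\ge 0$, and conclude because $x\mapsto\sqrt{x+F}$ is strictly increasing on $[0,\infty)$; your explicit induction and your separate handling of ties and of the fixed-trajectory scope spell out what the paper's three-line proof leaves implicit.
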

\begin{proof}
Repeated substitution in the recursion gives
$V_n=\sum_k w_k(b_k^2D_k+\nu_k^2I)=L+FI$.
All summands of $L$ are positive semidefinite. Since $F\ge0$ is common
to all coordinates, $x\mapsto\sqrt{x+F}$ is strictly increasing on
$x\ge0$ and preserves the ranks.
\end{proof}

This result concerns rank-based evaluation, not calibrated intervals:
$F$ can change coverage and interval width. A common positive conversion
back to image units likewise preserves ranks. The statement does not
compare different samplers or start times: changing $\eta$, the active
window, or the trajectory can change $D_k$, $b_k$, and the temporal weights,
and therefore change the ordering. It provides no guarantee that the
spatially varying term $L$ ranks reconstruction error well. A spatially
constant injection floor by itself cannot explain a loss of ranking
performance; that explanation must involve other parts of the estimator
or evaluation.

\section{Testing the curvature term against an exact denoiser}
\label{app:curvtest}

Theorem~\ref{thm:curved} is the one claim in this paper that our
manifold experiments do not touch, because those probe \emph{on} the
manifold where $\shape_{n^\ast} = 0$ and the second-order statement
collapses to the first-order one.  This appendix tests it directly,
and does so without a trained network: the optimal denoiser is
computed by quadrature, so hypothesis (H4) holds by construction,
$\Delta_\theta = 0$, and any disagreement is the theorem's own
remainder rather than model error.

\paragraph{Setup.}
$\M$ is the unit circle in $\R^2$ with uniform density, chosen because
its projection is closed form: $\pi_\M(y) = R\,y/\|y\|$, whose
differential at $x^\ast = (R+h)\hat n$ is $[R/(R+h)]\PT$.  With the
outward unit normal, $W_\nu = R^{-1} I_T$ and $\kappa = 1/R$, so
Eq.~\eqref{eq:dpi} reads $(1 + h/R)^{-1}\PT$ and reproduces that
exactly.  The prediction of Eq.~\eqref{eq:kappa} therefore carries no
free parameter and no sign convention to argue about:
\begin{equation}
    \lambda_{\text{tangent}}
    \;=\; \frac{1}{(1 + h\kappa)^2}
    \;=\; \Bigl(\frac{R}{R+h}\Bigr)^{\!2}.
    \label{eq:circlepred}
\end{equation}
This is also the test that distinguishes the exact covariance
eigenvalue $(1+h\kappa)^{-2}$ from the value obtained by squaring the
first-order projection $(I-hW_\nu)\PT$, namely $(1-h\kappa)^2$: at
$h = 0.3$ they predict $0.592$ and $0.490$, a $17\%$ gap that the
measurement resolves cleanly in favour of the exact form.

The residual is formed exactly as in the toy scripts: fix $x^\ast$,
draw $\xi$, set $X_s = \sqrt{\bar\alpha}\,x^\ast +
\sqrt{1-\bar\alpha}\,\xi$, take $r = \xi - \epsilon(X_s)$, with
$\epsilon$ the exact conditional mean, and both expectations are
quadratures rather than Monte Carlo, so no sampling error enters.

\begin{table}[h]
\centering
\small
\begin{tabular}{rrrrr}
\toprule
$h$ & $h/\sigma$ & predicted & measured & rel.\ error \\
\midrule
$-0.20$ & $-2.0$ & $1.5625$ & $1.5425$ & $-1.28\%$ \\
$-0.10$ & $-1.0$ & $1.2346$ & $1.2206$ & $-1.13\%$ \\
$\phantom{-}0.00$ & $0.0$ & $1.0000$ & $0.9898$ & $-1.02\%$ \\
$+0.10$ & $+1.0$ & $0.8264$ & $0.8188$ & $-0.92\%$ \\
$+0.20$ & $+2.0$ & $0.6944$ & $0.6886$ & $-0.84\%$ \\
\bottomrule
\end{tabular}
\caption{Tangent eigenvalue of $\Cov(r)$ against normal offset, at
$\sigma = 0.10$; ``predicted'' is Eq.~\eqref{eq:circlepred}.  The
prediction moves over a factor of $2.3$ and the measurement follows it
to about one per cent throughout, with the sign the theorem requires: probing inside the circle inflates the eigenvalue
above one, outside deflates it, and $h = 0$ recovers the flat result.
The normal component stays below $1.3 \times 10^{-2}$, confirming that
the normal direction is identified as the decomposition claims.}
\label{tab:curvtest}
\end{table}

\paragraph{The remainder is $O(\sigma^2)$, tested not asserted.}
Holding $h/\sigma = 1$ so the regime $\|n^\ast\| = O(\sigma)$ is
respected, the relative disagreement divided by $\sigma^2$ is
$-0.87$, $-0.90$, $-0.92$, $-0.94$, $-0.96$ at
$\sigma = 0.20, 0.14, 0.10, 0.07, 0.05$.  A sixteen-fold change in
$\sigma^2$ moves the ratio by nine per cent, so the error really does
scale as the theorem's remainder, with a constant near $-0.9$.

\paragraph{Control.}
The residual could have been quadrature error rather than physics, and
at small $\sigma$ the kernel narrows so discretisation would
\emph{grow} as $\sigma$ falls, not shrink.  It shrinks.  Refining both
rules eightfold, $n_\theta$ from $1024$ to $8192$, Gauss--Hermite
order from $48$ to $128$, leaves the measured eigenvalue unchanged at
$0.818837$ to six figures.  The quadrature is converged and the one
per cent is the second-order term.


\section{Synthetic validation on manifolds with known tangent spaces}
\label{app:toy}

Everything in Sec.~\ref{sec:tpt} rests on Proposition~\ref{prop:pt},
and on medical data that proposition cannot be checked directly:
$\M$ is unknown, so $\PT$ has no ground truth to be compared against.
The results of Sec.~\ref{sec:results} are
therefore consistent with the geometric reading but do not establish
it.  A flat $t^\ast$ curve, in particular, is equally consistent with
normalisation effects, denoiser saturation, or spatial correlation
induced by the U-Net.

This appendix removes the ambiguity by running the same estimator on
manifolds where $\PT$ is available in closed form: an analytic
manifold of dimension one, and an analytic manifold of dimension two
with position-dependent curvature.  Both use the same
$32 \times 32$ embedding ($d = 1024$), the same 2-D U-Net \citep{ho2020ddpm} with adaptive group normalisation (AdaGN) \citep{dhariwal2021diffbeatgans}, $T = 100$ timesteps, and the same hybrid objective
$\mathcal{L}_{\text{simple}} + \lambda \mathcal{L}_{\text{vlb}}$ at
$\lambda = 10^{-3}$, so what changes between them is the geometry and
not the recipe.  The objective is of the same family as the medical
checkpoints', though trained here from scratch at a much smaller
scale; we do not claim the two share hyper-parameters.

\begin{table}[h]
\centering
\small
\begin{tabular}{llcc}
\toprule
Stage & Manifold & $\dim \M$ & codim \\
\midrule
S1 & $S^1$, one Gaussian blob on a circular orbit & $1$ & $1023$ \\
S2 & $T^2$, two independent blobs                 & $2$ & $1022$ \\
\bottomrule
\end{tabular}
\caption{The two synthetic regimes.  Codimension is severe by design:
a rank-$1$ covariance inside $\R^{1024}$ is not something a
normalisation artefact produces by accident.}
\label{tab:toy-setup}
\end{table}

\subsection{$\Cov(r)$ recovers the tangent eigenspace and effective rank}
\label{app:toy-projector}

The prediction of Proposition~\ref{prop:pt}, read in its broad-prior
limit, is that the residual covariance is tangent-supported with the
leading eigenspace of the projector onto $T_{x_0}\M$: its spectrum should consist of $\dim \M$ eigenvalues
near one and $d - \dim \M$ eigenvalues near zero, and its leading
eigenspace should coincide with the analytic tangent space.  Both are
directly testable in S1 and S2.

\begin{table}[h]
\centering
\small
\begin{tabular}{lccccc}
\toprule
& $\lambda_1$ & $\lambda_2$ & $\lambda_3$ & subspace alignment with $T_{x_0}\M$ \\
\midrule
S1 ($\dim = 1$) & $0.9931$ & $0.0118$ & --- & $0.9998$ \\
S2 ($\dim = 2$) & $1.052$  & $0.839$  & $0.018$ & $0.9967$ \\
\midrule
Prop.~\ref{prop:pt} (broad-prior limit) & $1$ & $1$ if $\dim \ge 2$, else $0$ & $0$ & $1$ \\
\bottomrule
\end{tabular}
\caption{Spectrum and leading eigenspace of $\Cov(r)$ against the
analytic prediction.  Alignment is the mean squared canonical
correlation between the leading $\dim \M$ eigenvectors and the
analytic tangent span.  On S1 the eigengap $\lambda_1/\lambda_2$ is
$84.2$; on S2 the gap between the second and third eigenvalues is a
factor of $47$, so the intrinsic dimension is read off the spectrum
without a threshold being tuned.}
\label{tab:toy-projector}
\end{table}

The spectrum exhibits the correct effective rank in both cases
without a threshold being tuned, and the leading
eigenspace agrees with the analytic tangent space to four decimal
places on S1 and three on S2.  We take this as validation of tangent
support, effective rank and leading-eigenspace recovery.  It is not
evidence that the finite-noise, trained-denoiser covariance equals the
exact projector, Prop.~\ref{prop:pt}(iii) itself forbids that at
finite $\tau$.

\subsection{The correction terms are measurable, and not zero}
\label{app:toy-correction}

Proposition~\ref{prop:pt} is a statement about the limit
$K \to \infty$ \emph{and} $\sigma_{t^\ast} \to 0$ under an optimal
denoiser.  At finite $K$, finite noise, and with a trained network, the
estimator carries three separable errors,
\begin{equation}
    \widehat C_{K,t}(x_0) - \PT
    \;=\; E_K(t) \;+\; E_{\mathrm{geom}}(t) \;+\; E_\theta(t),
    \label{eq:toy-correction}
\end{equation}
where $E_K$ is sample-covariance error, $E_{\mathrm{geom}} =
\mathcal{O}(\sigma_t^2)$ for an on-manifold probe by
Thm.~\ref{thm:curved}, and $E_\theta$ is the gap between the trained
and the population denoiser.  We do not quote an operator-norm rate
for $E_K$, in this codimension that would require a dependence on
effective rank we have not derived, and instead measure it by
repetition (App.~\ref{app:modelerror} gives the finite-$K$ eigenvalue
spread).  On the toys, all three can be measured rather than assumed.

The geometric term is visible in the \emph{mean} residual, not the
covariance: on the manifold, $\E[r] = \sigma\,b(x_0) +
\mathcal{O}(\sigma^2)$, so the posterior-mean displacement
$\sigma\E[r]$ offsets along the mean curvature vector at order
$\sigma^2$.  Sweeping the probe timestep on S1
gives a ratio between the measured and predicted offset that is linear
in $\sigma$,
\[
    \frac{\sigma\,|\langle \bar r, \hat H\rangle|}{\sigma^2 \kappa}
    \;\approx\; 1.171 - 0.138\,\sigma,
    \qquad \kappa = 0.497,
\]
across $\sigma \in [0.04, 0.81]$.  On S2, at a single probe step, the
measured offset is $0.0508$ against a predicted $\sigma^2 |H| =
0.0358$, a ratio of $1.42$.

Two things follow.  The curvature scaling is correct; the offset
tracks $\sigma^2 |H|$ over more than a decade of $\sigma$, but the
extrapolation to $\sigma \to 0$ lands at $1.17$ rather than $1$.  A
residual of roughly $17\%$ therefore survives the small-noise limit
and is not curvature.  We do \emph{not} identify it with $E_\theta$:
the experiment cannot separate denoiser approximation from
discretisation of the schedule, higher-order geometric terms, and
finite-sample error in the covariance, and we interpret the
discrepancy as a combination of these.  What the measurement does
establish is a magnitude; the estimator approximates $\PT$ to within
tens of per cent at the noise scales used, not to within per cent,
and that magnitude is now on record.

\subsection{Naive variance propagation, against ground truth}
\label{app:toy-varprop}

Proposition~\ref{prop:expansive} argues that the affine variance
recursion inflates.  On S1, the argument can be checked against a
quantity that is otherwise never available: the true per-pixel
variance of the reverse chain, obtained by Monte-Carlo over the toy
model, where nothing is approximated, and the manifold is known.

Eq.~\eqref{eq:overshoot} is obtained here, and the protocol matters
enough to give in full.  We draw $J = 20$ unconditional DDPM samples
from the S1 model (seeds $100$--$119$, $T = 100$ ancestral steps).
Along each chain, we propagate the diagonal variance with the affine
additive recursion of App.~\ref{app:recursion} in its stochastic
form, as those implementations do along a DDIM chain with $\eta = 1$
(the ancestral DDPM step is that case), using the model's
own $\sigma_\theta$ head, under the independence assumption those
implementations make.
The empirical reference is the per-pixel variance across the $J$
samples; the analytic quantity is the per-pixel propagated variance
averaged over the $J$ runs.  Eq.~\eqref{eq:overshoot} is the ratio of
their means over the full $32 \times 32$ frame.

The single most important detail is that accumulation is
\emph{delayed-started at $t = 60$ of $100$}, so the recursion is
already running with the mask that published implementations apply.
The overshoot of $13.7\times$ is therefore not what happens if one
ignores the accepted remedy; it is what remains after applying it.
Cor.~\ref{cor:mask} says the mask bounds the inflation rather than
removing it, and this is the size of what it fails to remove.

Three caveats.  $J = 20$ is a small Monte-Carlo estimate: its standard
error is immaterial against a factor of $13.7$, but it is not zero.
The S1 model is unconditional, so the empirical variance is the spread
of the generative distribution over the orbit rather than the
predictive variance of a conditional forecast, the same object the
recursion claims to track in this setting, but not identical to the
conditional quantity of Sec.~\ref{sec:tpt-geom}.  And this is one
manifold with one seed set; we report the magnitude, not a
distribution over regimes.

\subsection{On- and off-manifold behaviour}
\label{app:toy-separation}

The probe input matters, and the toys say why.  Displacing the query
point off the manifold along the normal direction and measuring the
residual drift $\|\bar r\|$ separates cleanly in both stages: by
$4.54\times$ on S1 and $2.79\times$ on S2, with no overlap between the
on- and off-manifold histograms in either case.

This is the toy version of a failure we later met on real data.
Probing at $[x_{\text{baseline}}, 0]$ rather than at a reconstruction
places the query far off the learned manifold, and the resulting map
carries essentially no information about the error
($\bar\rho = 0.004$).  Probing at the model's own reconstruction
restores it.  The predicate in Prop.~\ref{prop:pt} that $x_0$ lie on
$\M$ is not a formality; it is the whole method, and S1--S2 show the
degradation is monotone in the distance to $\M$ rather than a cliff.

We note, as the medical sections do, that a DDIM reconstruction is
only approximately on $\M$: the sampler is approximate, the score
imperfect, and $\zeta = 5$ skips.  The claim we make is the weaker and
correct one, that it lands near enough for the projector reading to
survive.

\subsection{Reproducing}
\label{app:toy-repro}

Both stages (S1 and S2 in App.~\ref{app:toy}) are self-contained scripts of a few hundred lines, train
their own model from scratch, run on a single consumer GPU or on CPU
in under half an hour, and depend only on \texttt{torch} and
\texttt{numpy}.  Seeds are fixed at $42$ throughout.  Scripts and
logs accompany the submission.

\medskip\noindent\emph{Experimental design and implementation.}\medskip

\section{Cost accounting}
\label{app:budgets}
The $\sqrt{\bar\alpha}$ in the diagonal of Sec.~\ref{sec:exact}
follows from $\partial \hat{x}_0/\partial x_t =
\bar\alpha^{-1/2}\bigl(I + (1-\bar\alpha)\nabla^2 \log p_t\bigr)$
and $\operatorname{Cov}(x_0 \mid x_t) = \tfrac{1-\bar\alpha}{\bar\alpha}
\bigl(I + (1-\bar\alpha)\nabla^2 \log p_t\bigr)$.
Budgets are counted in forward-equivalent network evaluations and were
fixed before any score was read.  A Jacobian--vector product carries a
primal and a tangent, so it costs two forwards of compute in
principle; the conversion actually used is the measured wall-time
ratio given at the end of this section.  One DDIM
reconstruction at skip $\zeta$ costs $T/\zeta + 1$ evaluations, $61$ at
$T=300$, $\zeta=5$.  The MC-DDIM reference with $J$ chains therefore
costs $J(T/\zeta+1) = 1{,}220$ at $J=20$.  \tpt{} costs
$(T/\zeta+1) + K$: $76$ at $K=15$ and $91$ at $K=30$, or $16\times$ and
$13.4\times$ fewer.  The row-sum surrogate is one JVP on top of the
reconstruction; Hutchinson at $M$ probes is $M$ JVPs; the exact
diagonal is $d$ JVPs, one per input coordinate, $12{,}288$ at $64^2$
RGB.  JVPs are converted to forward-equivalents with a wall-time
ratio \emph{measured on each architecture} rather than the
theoretical factor of two, which is right about memory and wrong
about time, because forward-mode kernels are not the fused ones:
$6.5\times$ on the $64^3$ clinical model and about $4\times$ on the
natural-image checkpoint, which is where the roughly $49{,}000$
forward-equivalents of Sec.~\ref{sec:res-fidelity} come from.

\begin{table}[h]
\centering\small
\begin{tabular}{lrrl}
\toprule
estimator & peak memory & wall time & one $128^3$ volume \\
\midrule
\tpt{}, $K{=}30$            & $33.8$~GiB & $66.7$~s & completes \\
row-sum surrogate (one JVP) & $>40$~GB   & ---      & out of memory \\
Hutchinson, $M{=}50$        & $>40$~GB   & ---      & out of memory \\
Hutchinson, $M{=}500$       & $>40$~GB   & ---      & out of memory \\
\bottomrule
\end{tabular}
\caption{Memory feasibility at $128^3$ on a $40$~GB accelerator
($39.5$~GiB usable), one estimator per invocation in a clean process,
released implementations with activation checkpointing disabled for
the tangent pass (App.~\ref{app:checkpointing}).  The three
Jacobian-based estimators exhaust memory before completing a single
volume; the probe differentiates nothing and completes.}
\label{tab:memprobe}
\end{table}

\section{Why forward-mode differentiation cannot traverse the public codebases}
\label{app:checkpointing}
\begingroup\sloppy
Forward-mode differentiation through the released implementations fails
before it reaches the denoiser.  All three wrap their residual blocks
in PyTorch's activation checkpointing, whose
\texttt{CheckpointFunction} is an old-style autograd function: it defines \texttt{forward} and
\texttt{backward} but no \texttt{setup\_context}, and
\texttt{torch.func.jvp} requires the latter to push a tangent through a
custom function.  The call raises rather than silently returning a wrong
tangent, which is the good failure.  We disable checkpointing for the
probe pass; nothing else about the network changes, and the primal
outputs are identical to the released ones.  This is an implementation
obstacle, not a mathematical one, but it is the reason none of the three
can be run as released.\par\endgroup

\section{Protocol for the checkpoint without a variance head}
\label{app:nohead}

This checkpoint supports the head-free-compatibility claim, so its
provenance should not live in a table footnote.  Everything below is
read from the training script and configuration shipped with the code.

Architecture and data are identical to the hybrid PNG checkpoint \citep{oliveras2026mumins},
same 3D U-Net (base width $64$, multipliers $1,2,4,8$, conditioning
dimension $256$), same two-channel joint state at $64^3$, same
training split and preprocessing, same $T = 300$ cosine schedule,
with one change: \texttt{learned\_variance = false}, so the network
emits $C$ output channels instead of $2C$ and the
$\mathcal{L}_{\text{vlb}}$ term vanishes from the loss, which reduces
to the $\ell_1$ noise-prediction objective.  Training ran $150$k steps
at batch size $4$ on two GPUs with the repository's default optimiser
settings; the training log confirms the loss dictionary contains no
NLL term.  At inference, the sampler is the one used everywhere else
($\zeta = 5$, $\eta = 1$); its $J = 20$ Monte-Carlo reference and
the probe's chain both carry the observed blur difference in the
conditioning vector, so this experiment also shows the probe under a
second conditioning of the same models.


\section{Operational definitions of masks and metrics}
\label{app:defs}

Every quantity below is computed per volume, inside the stated mask,
exactly as implemented in the released code; nothing here is a
post-hoc idealisation.  The three thresholds (foreground fraction,
change percentile, edge percentile) are the defaults fixed when the
masking module was written, before any masked result was seen, and
were not tuned afterwards.

\paragraph{Conditioning arm.}
The conditioning vector carries a blur difference computed from both
scans of the pair.  The Monte-Carlo reference and the analytical
baseline were generated with that component zeroed, the value a
forecasting deployment supplies, and so was the probe's chain on both
tasks; every estimator in Table~\ref{tab:clinical} is scored against
the same common error map, built from the zero-blur ensemble, so the
three are compared under one conditioning.

\paragraph{Clamping.}
Every estimate of the Tweedie diagonal is clamped at zero before
scoring, since the implied variance can be negative; the rate on
medical data is reported in App.~\ref{app:functional}.

\paragraph{Masks.}
\emph{Tissue (foreground):} voxels whose baseline intensity exceeds
$2\%$ of the volume's dynamic range, computed from the baseline only.
On skull-stripped brain MRI, this recovers the brain ($\approx 24\%$ of
voxels); on cropped lung CT, it removes only the air surrounding
($99.5\%$ retained), which is why the tissue row there is a
consistency check.  If the threshold leaves fewer than $100$ voxels,
the mask falls back to the full volume.  The threshold is not
load-bearing: rescored at $1\%$ and $5\%$ over the same $211$
volumes, the participant-weighted primary contrast moves by
$+0.0008$ and $-0.0017$ from its value at the prespecified $2\%$,
and the gradient-controlled contrast by $+0.0009$ and $-0.0017$,
with intervals of the same width: sign and magnitude are stable
across a five-fold change of threshold.
\emph{Edges:} tissue voxels in the top $20\%$ of $|\nabla u_0|$, the
gradient magnitude by central differences, percentile per volume
within tissue.
\emph{Interior:} tissue minus edges.
\emph{Regions of true change:} tissue voxels in the top $10\%$ of
$|x_{\text{target}} - u_0|$, percentile per volume within tissue.
This uses the target and is therefore an \emph{oracle evaluation
slice}, not a mask available at deployment.

\paragraph{Partial Spearman.}
Within the mask, rank-transform $\sigma$, $e$ and $|\nabla u_0|$;
linearly regress the ranks of $\sigma$ and of $e$ on the ranks of
$|\nabla u_0|$; report the Pearson correlation of the two residual
vectors.  This controls for the monotone component explained by
gradient magnitude, no more.  A synthetic map built from
$|\nabla u_0|$ itself scores $0$ under this control, which is the
sense, and the only sense, in which ``a pure edge tracer scores
zero''.  If either variable is a deterministic function of the control,
the statistic is reported as $0$ rather than NaN.

\paragraph{Selective-prediction metrics.}
\emph{AUSE:} remove voxels in decreasing $\sigma$ in $20$ steps over
removal fractions $0$ to $0.95$; at each step record the mean error of
the retained set, normalised by the mean error at zero removal; AUSE
is the trapezoidal area between this curve and the oracle curve that
removes by true error, with the random-ordering area reported as the
scale.
\emph{AURC:} coverage from $0.05$ to $1$ in $20$ steps retaining
voxels in increasing $\sigma$; risk is the mean error of the retained
set; the oracle ordering is the attainable floor.
\emph{AUROC:} positives are the top $5\%$ of error voxels within the
mask, computed by the Mann--Whitney identity.

\paragraph{Configuration index.}
One place for every number needed to re-run the medical experiments;
each entry is the value in the released launch scripts and configs.

\begin{table}[h]
\centering
\small
\begin{tabular}{lll}
\toprule
Field & PNG (hybrid / head-free) & OASIS \\
\midrule
Input resolution & $64^3$ & $128^3$ \\
Scored split & $130$ pairs, $47$ participants & $211$ pairs, $63$ participants \\
Forward schedule & cosine, $T=300$ & cosine, $T=300$ \\
Reverse sampler & DDIM, $\zeta=5$, $\eta=1$ & DDIM, $\zeta=5$, $\eta=1$ \\
MC reference & $J{=}20$, seeds $42$--$61$ & $J{=}20$, seeds $42$--$61$ \\
Self-chain probe & chain seed $42$ & chain seed $42$ \\
Conditioning $c$ & \multicolumn{2}{l}{$(\Delta\tau,\ \text{age},\ \Delta b)$: interval, baseline age, blur difference of the pair} \\
$t^\ast$, $K$ & $60$, $30$ & $60$, $15$ \\
$\sigma_\theta$ baseline & propagated, $\shat{=}15$; none for the head-free variant & propagated, $\shat{=}15$ \\
Preprocessing & source pipeline & source pipeline \\
Head-free variant & PNG only, retrained without the head (App.~\ref{app:nohead}) & --- \\
Code, commit, versions & \multicolumn{2}{l}{pinned in the released repository} \\
\bottomrule
\end{tabular}
\caption{Configuration index for the medical experiments.
The unscored OASIS pairs are duplicated split entries or corrupted
files, excluded before any scoring.  Masks and
metrics are defined operationally above; the statistical protocol is
below and in App.~\ref{app:controlled}.}
\label{tab:configindex}
\end{table}

\paragraph{Statistical plan.}
The primary endpoint on OASIS is the participant-weighted paired
difference of tissue Spearman: correlation per volume, mean over each
participant's volumes, mean over participants, paired bootstrap
resampling participants with $B = 2\times10^4$ and percentile
intervals.  The gradient-controlled partial Spearman is the
robustness endpoint.  Edges, interior, regions of change, AUROC, AUSE
and AURC are secondary and unadjusted for multiplicity; their role is
to show the direction is consistent, and any one alone should be read
as exploratory.


\medskip\noindent\emph{Additional results and robustness checks.}\medskip

\section{The fourteen conditions}
\label{app:conditions}
\begingroup\sloppy
Five natural-image checkpoints against three corpora would be fifteen
combinations, and two are absent: \texttt{church/real} and
\texttt{ffhq/real}.  That leaves thirteen: \texttt{cond},
\texttt{inet} and \texttt{cifar} on all three corpora each, and
\texttt{church} and \texttt{ffhq} on \texttt{self} and \texttt{coco}.
The fourteenth is \texttt{brats/self}.

\paragraph{The medical checkpoint.}
\texttt{brats} is the MONAI model-zoo bundle
\texttt{brats\_mri\_\allowbreak axial\_slices\_\allowbreak generative\_diffusion}
\citep{pinaya2023monaigen}: a latent diffusion model over axial
slices of the BraTS glioma MRI collection \citep{menze2015brats},
with a KL autoencoder and a $1{\times}64{\times}64$ latent, so the exact
diagonal is $4{,}096$ one-hot Jacobian--vector products per slice.
It is read exactly as the bundle ships it: its own configuration
instantiates both networks, the released weights load with no missing
or unexpected key, the bundle carries a single weight set and no EMA
copy, and it states no latent scale factor, so latents are taken at
the autoencoder's native scale.  The corpus is $100$ latents drawn
from the model's own prior by deterministic DDIM (seed $20260813$),
scored in latent space without decoding, since a decode--encode round
trip would fold the autoencoder's reconstruction error into a
measurement that concerns the denoiser's Jacobian.  The generator is
unconditional: there is no follow-up to predict, so it enters
Table~\ref{tab:conditions} and nowhere else.  On it, the exact
diagonal correlates with the error at $+0.320\;[+0.298, +0.342]$,
with no clamped voxel; Hutchinson at $M{=}200$ reproduces it at rank
agreement $0.995$; \tpt{} reads $+0.315\;[+0.292, +0.337]$; and the
row-sum $+0.015$.  It is the one medical checkpoint in this paper
whose weights the reader can download.\par\endgroup

\begin{table}[h]
\centering\footnotesize
\begin{tabular}{llrrrrr}
\toprule
checkpoint & corpus & $\rho(\nabla,e)$ & exact & Hutch.\ $M{=}200$ & row-sum & \tpt{} $K{=}30$\\
\midrule
cond & coco & 0.389 & \textbf{0.474} & 0.465 & 0.036 & \textbf{0.484}\\
cond & real & 0.382 & \textbf{0.476} & 0.466 & 0.063 & 0.470\\
cond & self & 0.340 & \textbf{0.448} & 0.435 & 0.004 & \textbf{0.451}\\
brats & self & 0.297 & 0.320 & 0.320 & 0.015 & 0.315\\
church & self & 0.130 & 0.224 & 0.224 & $-0.009$ & 0.218\\
church & coco & 0.064 & 0.165 & 0.165 & $-0.007$ & 0.167\\
inet & self & 0.022 & 0.058 & 0.056 & $-0.017$ & 0.013\\
ffhq & coco & 0.006 & 0.054 & 0.054 & $-0.027$ & 0.051\\
cifar & self & $-0.007$ & 0.029 & 0.023 & $-0.003$ & 0.009\\
ffhq & self & $-0.024$ & 0.002$^{\dagger}$ & 0.002$^{\dagger}$ & $-0.059$ & $-0.003$\\
\midrule
cifar & real & 0.037 & $\mathbf{-0.077}$ & $\mathbf{-0.071}$ & $-0.169$ & $+0.018$\\
cifar & coco & 0.054 & $\mathbf{-0.091}$ & $\mathbf{-0.085}$ & $-0.167$ & $+0.004$\\
inet & real & 0.040 & $\mathbf{-0.131}$ & $\mathbf{-0.129}$ & $-0.215$ & $-0.014^{\ddagger}$\\
inet & coco & 0.066 & $\mathbf{-0.132}$ & $\mathbf{-0.131}$ & $\mathbf{-0.232}$ & $-0.016^{\ddagger}$\\
\bottomrule
\end{tabular}
\caption{Spearman correlation with the denoising error, per condition,
mean over $100$ images.  Rows are ordered by $\rho(\nabla,e)$, how much
of the error is spatial structure.  The block below the rule is where the
exact diagonal is \emph{anti}-correlated with the error.  The
Hutchinson column carries the same four harmful conditions as the
exact column, with worst value $-0.131$ and mean over the fourteen
$0.128$.  Every interval in both columns
excludes zero except $\dagger$ (\texttt{ffhq/self}, $[-0.006,
0.010]$ for each).  For \tpt{}, the only
intervals that exclude zero on the negative side are the two marked
$\ddagger$: \texttt{inet/real} $[-0.021, -0.007]$ and
\texttt{inet/coco} $[-0.023, -0.009]$, on the checkpoint where the
diagonal reverses hardest; neither lies entirely below the $-0.02$
threshold, whereas four exact-diagonal intervals do.
Over the ten conditions above the rule, where the diagonal is not
reversed, the exact column means $0.225$, Hutchinson $0.221$ and
\tpt{} $0.218$, and eight of the ten \tpt{}--exact differences lie
within $0.010$; the two exceptions are \texttt{inet/self} and
\texttt{cifar/self}, where the exact diagonal itself reaches only
$0.058$ and $0.029$.
The exact column is clamped at zero wherever the implied variance is
negative; the clamped fraction is zero on both pixel-space
unconditional checkpoints, below $0.2\%$ on the latent ones, and at
most $5.7\%$ on the class-conditional one.}
\label{tab:conditions}
\end{table}

\begin{table}[h]
\centering\footnotesize
\begin{tabular}{lrr}
\toprule
estimator & $\bar\rho(\sigma,e)$ & forward-equivalents \\
\midrule
Hutchinson, $M{=}5$   & $0.298$ & $\approx 20$ \\
Hutchinson, $M{=}15$  & $0.359$ & $\approx 60$ \\
Hutchinson, $M{=}50$  & $0.410$ & $\approx 200$ \\
Hutchinson, $M{=}200$ & $0.435$ & $\approx 800$ \\
exact diagonal        & $0.448$ & $\approx 49{,}000$ \\
\tpt{}, $K{=}30$      & $0.451$ & $30$ \\
\bottomrule
\end{tabular}
\caption{The $M$ sweep behind Sec.~\ref{sec:res-fidelity}, on
\texttt{cond/self} (the same $100$ images as Table~\ref{tab:conditions}):
Spearman correlation with the denoising error as the Hutchinson
estimator converges to the exact diagonal.  Forward-equivalents follow
App.~\ref{app:budgets}.  The pattern is the same in all fourteen
conditions; at $M{=}200$ Hutchinson tracks the exact column of
Table~\ref{tab:conditions} to within $0.013$ here, and at least $0.92$
in rank agreement in every one of the fourteen.}
\label{tab:mgrid}
\end{table}

\section{The diagonal reference against the probe, in full}
\label{app:functional}
\begin{table}[h]
\centering
\small
\begin{tabular}{llcc}
\toprule
Estimator & functional of $J$ & $\rho(\cdot,e)$ & partial, given $J_{ii}$ \\
\midrule
Hutchinson, $M{=}50$ & $J_{ii}$, unbiased & $0.321$ & $+0.056\;[+0.041, +0.070]$ \\
Diagonal reference ($M{=}500$) & $J_{ii}$ & $0.361$ & --- \\
Row-sum (Boys)       & $\sum_j J_{ij}$    & $0.474$ & $+0.353\;[+0.283, +0.420]$ \\
\tpt{}, $K{=}30$     & $J_{ii}$ and off-diag.\ mass & $\mathbf{0.539}$ & $\mathbf{+0.409}\;[+0.322, +0.489]$ \\
\midrule
\multicolumn{3}{l}{Diagonal reference, given \tpt{}} & $+0.056\;[+0.032, +0.085]$ \\
\bottomrule
\end{tabular}
\caption{Partial Spearman correlations with the error map, twenty
$64^3$ lung-CT volumes from twelve subjects, interior mask, clamped
voxels excluded.  The diagonal reference is Hutchinson at $M{=}500$;
the one-hot exact diagonal is not affordable at this size.  Intervals
are a bootstrap over subjects.  The last row is the reverse
conditioning.}
\label{tab:functional}
\end{table}
Table~\ref{tab:functional} reports the interior mask with clamped voxels
excluded, which is the condition least favourable to our claim.  The
ordering is unchanged in the other two.  On the whole volume the probe
retains $+0.450\;[+0.367, +0.529]$ conditioned on the diagonal
reference,
against $+0.059\;[+0.033, +0.085]$ the other way, with the row-sum at
$+0.347$ and Hutchinson at $+0.024$.  Inside the interior with clamped
voxels retained the four figures are $+0.420$, $+0.064$, $+0.367$ and
$+0.025$.  Excluding the clamped voxels moves the reference's own
correlation with the error from $0.364$ to $0.361$, so the clamping is
not what holds it back.  The clamped fraction is $7.5\%$ of interior
voxels on average and $17.4\%$ on the worst of the twenty volumes,
with estimation noise contributing ($10.3\%$ at $M{=}50$);
$\sigma_{\text{TW}}$ never clamps.  Twenty volumes from twelve subjects
throughout; intervals resample subjects.  On the full cohort of $130$
volumes, Hutchinson at $M{=}200$ on the probe's own chain (the
Table~\ref{tab:clinical} column) has a split-half agreement of
$0.79$, clamps $5.5\%$ of voxels, and trails the probe by
$+0.189\;[+0.158, +0.222]$ inside tissue and $+0.142\;[+0.123, +0.161]$
on the gradient-controlled endpoint, participant-level intervals;
its peak memory at $64^3$ is $2.1\times$ the probe's and its wall time
per volume $20\times$.

\begin{table}[h]
\centering\scriptsize
\setlength{\tabcolsep}{2.5pt}
\begin{tabular}{lcccc}
\toprule
condition & exact vs.\ $e$ & \tpt{} vs.\ $e$ & \tpt{}, given exact & exact, given \tpt{} \\
\midrule
inet/coco  & $-0.132\,[-0.164,-0.101]$ & $-0.016\,[-0.023,-0.009]$ & $+0.005\,[+0.002,+0.008]$ & $-0.131\,[-0.162,-0.100]$ \\
inet/real  & $-0.131\,[-0.165,-0.098]$ & $-0.014\,[-0.021,-0.007]$ & $+0.005\,[+0.001,+0.008]$ & $-0.130\,[-0.164,-0.097]$ \\
cifar/coco & $-0.091\,[-0.117,-0.064]$ & $+0.004\,[-0.002,+0.011]$ & $+0.007\,[+0.001,+0.013]$ & $-0.091\,[-0.117,-0.065]$ \\
cifar/real & $-0.077\,[-0.105,-0.049]$ & $+0.018\,[+0.012,+0.024]$ & $+0.020\,[+0.014,+0.025]$ & $-0.077\,[-0.106,-0.049]$ \\
\bottomrule
\end{tabular}
\caption{The same partial analysis on the four conditions where the
exact diagonal is anti-correlated with the error, $100$ images each,
full image, intervals, a bootstrap over images.  Here the conditioning
variable is the one-hot exact diagonal itself, not a Hutchinson
reference, and no voxel clamps.  The first two columns reproduce
Table~\ref{tab:conditions}.  Removing the probe from the diagonal
leaves the diagonal's reversal where it was; removing the diagonal
from the probe leaves a small positive partial that excludes zero in
every condition.  These partials describe the finite-budget maps and do not, by themselves, establish complementary information in the population estimand.  They are also not comparable in reliability with the clinical partials of Table~\ref{tab:functional}: the attenuation bound $|\rho| \le \sqrt{\mathrm{rel}}$ places the clinical map's reliability at $\ge 0.45$ and these four at $\le 0.02$, so the contrast between the two tables is a probe budget and not two verdicts on the functional.}
\label{tab:functional-natural}
\end{table}

\begin{table}[h]
\centering\scriptsize
\setlength{\tabcolsep}{2.5pt}
\begin{tabular}{lccccc}
\toprule
condition & $a\overline{\|J_{i:}\|^2}/\overline{|J_{ii}|}$ & $\rho(\|J_{i:}\|^2, e)$ & $\rho(\sigma_{\text{T2}}, e)$ & $\rho(\sigma_{\text{TW}}, \sigma_{\text{T2}})$ & \tpt{}, given $\sigma_{\text{T2}}$ \\
\midrule
inet/coco   & $0.34$ & $+0.135\,[+0.104,+0.167]$ & $-0.131\,[-0.163,-0.099]$ & $0.141$ & $+0.005\,[+0.002,+0.007]$ \\
inet/real   & $0.34$ & $+0.134\,[+0.100,+0.167]$ & $-0.130\,[-0.163,-0.096]$ & $0.143$ & $+0.005\,[+0.001,+0.008]$ \\
cifar/coco  & $0.28$ & $+0.093\,[+0.066,+0.119]$ & $-0.091\,[-0.117,-0.064]$ & $0.023$ & $+0.007\,[+0.001,+0.013]$ \\
cifar/real  & $0.28$ & $+0.079\,[+0.051,+0.106]$ & $-0.077\,[-0.105,-0.050]$ & $0.013$ & $+0.020\,[+0.014,+0.026]$ \\
\midrule
cond/self   & $0.98$ & $-0.442\,[-0.471,-0.413]$ & $+0.431\,[+0.402,+0.460]$ & $0.840$ & $+0.174\,[+0.159,+0.189]$ \\
\bottomrule
\end{tabular}
\caption{The row energy, read directly.  The same exact-Jacobian pass
accumulates $\|J_{i:}\|^2$ alongside $J_{ii}$ on the four reversed
conditions and, for contrast, on the self-conditioned ImageNet model
(\texttt{cond/self}, the best-behaved condition of
Table~\ref{tab:conditions}); $100$ images each, full image, intervals a
bootstrap over images.  $\sigma_{\text{T2}} = (1 - 2aJ_{ii} +
a^2\|J_{i:}\|^2)^{1/2}$ is Eq.~\eqref{eq:expansion} evaluated exactly,
with $a = \sigma_{t^\ast}$.  The first column is the quadratic term
against $\overline{|J_{ii}|}$; the linear term of
Eq.~\eqref{eq:expansion} carries a further factor of two, so the
quadratic term matches it at a column value of $2$.  On the reversed conditions
the row energy is the mirror image of the diagonal ($\rho(\|J_{i:}\|^2,
\sigma_{\text{exact}})$ between $-0.984$ and $-0.999$), because
$\|J_{i:}\|^2$ is dominated there by $J_{ii}^2$; the second-order map
ranks identically to the exact diagonal ($\rho = 1.000$ on all four,
$0.952$ on \texttt{cond/self}) and reverses with it; and the probe
retains conditioned on $\sigma_{\text{T2}}$ exactly what it retains
conditioned on $J_{ii}$ (Table~\ref{tab:functional-natural}).  The
$K{=}30$ map correlates with that combination at only $0.01$ to
$0.14$ on the reversed conditions, and Table~\ref{tab:kbudget} says
why: at that budget the map is barely reproducible there. The low agreement is consistent with this poor reproducibility and cannot, by itself, establish a difference in population rankings.  On
\texttt{cond/self}, where the map is reproducible, it agrees at $0.840$
and still retains $+0.174$ beyond it.}
\label{tab:rowenergy}
\end{table}

\paragraph{Is the neutrality the functional, or the probe budget?}
A near-zero correlation has two readings: the functional is
uninformative where the diagonal reverses, or the map has not been
resolved.  Table~\ref{tab:kbudget} separates them with three
budgets, $K = 30$, $120$ and $480$, on the four reversed conditions
and on \texttt{cond/self} as the informative control, and with a
direct measure of resolution.  A \emph{$g$-probe map} is the
statistic of Eq.~\eqref{eq:tpt} computed from $g$ of the $K$ draws
of one run; the \emph{reliability at $g$} is the Spearman
correlation between the $g$-probe maps built from draws $1$ to $g$
and from draws $g{+}1$ to $2g$, which share the image, the chain and
the timestep and nothing else, averaged over the $100$ images.  It
is the reproducibility of the map the body reports when $g = 30$.
Under a null of independent coordinates that correlation has
standard deviation $1/\sqrt{d}$, $0.018$ at $32^2$ and $0.009$ at
$64^2$.  Real maps are spatially dependent, so this is an order of
magnitude for comparison and not an exact threshold; the values
below are read as very low reproducibility rather than as tests
against it.

The reading is the second.  On \texttt{cond/self} the $30$-probe map
is already reproducible ($0.86$) and the correlation barely moves
with budget.  On the two \texttt{inet} conditions, reliability grows
monotonically with $g$, from $0.014$ at $15$ to $0.118$ at $240$,
and the correlation follows it towards the exact second-order map's
$-0.131$: $-0.016$, $-0.041$, $-0.079$, each step's interval
excluding zero, and the harmful threshold is crossed at $K \ge 120$ on
\texttt{inet/coco} and at $480$ on both.  On the two \texttt{cifar}
conditions, reliability stays at the noise floor through $g = 240$
and the correlation wanders within $0.02$ of zero; at $g = 15$ and
$30$ the two-group agreement is slightly negative, which is
what an unreproducible map gives and which we truncate at zero
wherever a reliability enters a formula.  As a consistency check,
classical attenuation, $\rho(\sigma_K, e) \approx
\rho(\sigma_{\text{T2}}, e)\,\sqrt{\mathrm{rel}(K)}$ with
$\mathrm{rel}(K)$ the Spearman--Brown extrapolation of the two-group
agreement at $K/2$, reproduces the twelve points of the four reversed
conditions within $0.02$ and the three control points within $0.05$.
It is an approximate model of attenuation, and we do not read an
identification of the population functional from its fit.
Hutchinson at $M = 5$, about $20$ forward-equivalents, is already
reproducible enough ($0.07$--$0.34$) to be harmful on all four.  The
$K = 30$ map is therefore not neutral about the reversal; it is too
unstable to say anything about it, and Hutchinson at a smaller
budget is not.

\begin{table}[h]
\centering\scriptsize
\setlength{\tabcolsep}{1pt}
\begin{tabular}{@{}lr|rrr|r@{}}
\toprule
 & $\sigma_{\text{T2}}$ & \multicolumn{3}{c|}{\tpt{}, $\rho(\sigma_K, e)$ [95\% CI]} & Hutch.\ $M{=}5$ \\
condition & vs.\ $e$ & $K{=}30$ & $K{=}120$ & $K{=}480$ & [95\% CI] \\
\midrule
cifar/coco & $-0.091$ & $+0.004\,[-0.002,+0.010]$ & $-0.014\,[-0.020,-0.008]$ & $-0.007\,[-0.016,+0.001]$ & $-0.043\,[-0.057,-0.029]$ \\
cifar/real & $-0.077$ & $+0.018\,[+0.012,+0.024]$ & $-0.015\,[-0.021,-0.009]$ & $-0.009\,[-0.017,-0.001]$ & $-0.042\,[-0.057,-0.028]$ \\
inet/coco  & $-0.131$ & $-0.016\,[-0.023,-0.010]$ & $-0.041\,[-0.052,-0.030]$ & $-0.079\,[-0.099,-0.059]$ & $-0.104\,[-0.131,-0.077]$ \\
inet/real  & $-0.130$ & $-0.014\,[-0.021,-0.007]$ & $-0.031\,[-0.043,-0.020]$ & $-0.067\,[-0.087,-0.047]$ & $-0.101\,[-0.129,-0.074]$ \\
\midrule
cond/self  & $+0.431$ & $+0.451\,[+0.422,+0.479]$ & $+0.465\,[+0.436,+0.493]$ & $+0.471\,[+0.443,+0.499]$ & $+0.298\,[+0.272,+0.326]$ \\
\midrule
\multicolumn{6}{l}{\emph{Reliability: rank agreement of two disjoint $g$-probe maps (reference $1/\sqrt{d}$: $0.018$ cifar, $0.009$ inet, cond)}} \\
 & & $g{=}15$ / $30$ & $g{=}60$ / $120$ & $g{=}240$ & Hutch.\ $M{=}5$ \\
cifar/coco & & $-0.026$ / $-0.013$ & $-0.001$ / $0.015$ & $0.002$ & $0.070$ \\
cifar/real & & $-0.026$ / $-0.013$ & $-0.002$ / $0.014$ & $0.000$ & $0.067$ \\
inet/coco  & & $0.014$ / $0.017$ & $0.027$ / $0.064$ & $0.118$ & $0.340$ \\
inet/real  & & $0.014$ / $0.017$ & $0.027$ / $0.064$ & $0.118$ & $0.340$ \\
cond/self  & & $0.755$ / $0.858$ & $0.918$ / $0.955$ & $0.976$ & $0.121$ \\
\bottomrule
\end{tabular}
\caption{Probe budget on the reversed conditions, with
\texttt{cond/self} as the informative control.  Upper block:
correlation with the error at three budgets from one run, each
with its own bootstrap interval over the $100$ images
($B = 2\times10^4$); a bracket belongs to the number on its left.
The $K = 30$ column reproduces the exact-Jacobian pass to the last
digit.  Probe sets at different $K$ are drawn independently.  Lower block: reliability of a
$g$-probe map, the Spearman agreement between the maps of two
disjoint groups of $g$ probes from the $K = 480$ draws, so
$g = 30$ is the reproducibility of the map the body reports.  The
last column is Hutchinson at $M = 5$, about $20$ forward-equivalents,
level above and split-half reliability below.  Two facts carry the
appendix: at matched budget Hutchinson is harmful on all four and the
probe on none, and given $4$--$16\times$ the probes the probe's
\texttt{inet} maps become more reproducible and harmful with them.}
\label{tab:kbudget}
\end{table}

\section{The open question, and two things that are not criteria}
\label{app:openq}
Candidate explanations of the two failure modes, all refuted.
Accounts (i), (ii) and the fourth below concern the natural-image sign;
account (iii) concerns the clinical change region.  (i)~That the Tweedie
conversion assigns low variance where the denoiser is most responsive,
at edges and texture: $\operatorname{diag}(J)$ is in fact \emph{lower}
where the intensity gradient is high in thirteen of the fourteen
conditions, the opposite sign, and removing the gradient from both
variables leaves the correlation essentially unchanged.  (ii)~That the
natural-image sign was a domain artefact: refuted by the real-image
corpora (Sec.~\ref{sec:res-sign}).  (iii)~That the error inside the
change region is less predictable from image structure, so no variance
estimate could track it: on lung CT this holds, as $\rho(\nabla,e)$
falls from $0.598$ to $0.071$; on brain MRI it does not ($0.230$ to
$0.197$, interval covering zero), and $\sigma$ collapses anyway.  The
drop is not a property of the $10\%$ threshold that defines the
region: on lung CT, at the top $5\%$, $10\%$ and $20\%$ of true
change, the probe reads $0.054$, $0.100$ and $0.233$ against
MC-DDIM's $0.169$, $0.265$ and $0.388$, paired gaps $-0.092$,
$-0.134$ and $-0.117$, every interval excluding zero; on brain MRI
the two read $0.101$, $0.128$, $0.106$ and $0.075$, $0.133$, $0.129$,
all far below the tissue-wide $0.349$ and $0.337$, with the ordering
between them changing sign across thresholds ($+0.028$, $-0.003$,
$-0.029$).
Two mechanisms remain consistent with the change-region drop and are
untested: conditioning on the baseline scan may suppress variance
exactly where the anatomy moves, since the model is asked for a
follow-up that stays close to what it was shown; and the model may be
confident on progression trajectories it has not seen, in which case
a confident prediction receives low variance whether or not it is
right.  The second is not specific to the probe: every sampling-based
estimator here, the reference included, reads the model's
uncertainty and not its truthfulness (App.~\ref{app:probe}).  The
volume-level coupling between the two drops is positive on both tasks
but its interval covers zero.

A fourth candidate fails as well: the reversal does not coincide with
the region where the implied diagonal becomes negative.  On the four
reversed conditions, the exact diagonal is never clamped (the clamped
fraction is $0.000$ on every image), whilst the conditions that do
clamp are the well-behaved ones, and within those the clamped fraction
associates with better ranking, not worse.

A principled reading of the row energy is available, and on natural
images it has now been tested.  The row energy
$\|J_{i:}\|^2$ measures the total sensitivity of output voxel $i$ to
perturbations of the input; where the denoiser amplifies its input, a
small mismatch between model and data becomes a large reconstruction
error, so voxels of high row energy would be voxels where errors
concentrate.  Read directly (Table~\ref{tab:rowenergy}), the row
energy does correlate positively with the error on the four reversed
conditions, $+0.079$ to $+0.135$, but only because it mirrors the
diagonal there; on the self-conditioned model it reads $-0.442$ where
the diagonal reads $+0.448$.  This matters beyond our own estimator,
because the row energy is what a probe that takes the variance of the
\emph{scores} rather than of the residual reads to second order, as
\citet{devita2025pixelwise} do: such a probe does not escape the
reversal; inverts the diagonal on every condition we measured it, and so ranks the error correctly on the conditions where the diagonal fails and incorrectly on those where it works.  Their estimate is moreover identified with
the Fisher information of the \emph{marginal} noising distribution,
an object with no $x_0$ in it, while the object a per-image map
requires is the conditional $C_t(x_0)$ of Sec.~\ref{sec:tpt-geom}.  The row energy is not what the probe reads on either:
the probe's partial given the row energy equals its partial given the
diagonal in every condition.  On lung CT, where the diagonal is only
available as a Hutchinson reference, the direct test remains open.

Nor is the ordering of Table~\ref{tab:conditions} a criterion.  That
table is ordered by $\rho(\nabla,e)$, which costs one forward pass, and
the ordering is visibly related to the column beside it: on the five
conditions where we first looked, the rank correlation between the two
was $1.000$; on all fourteen it is $0.640$, and on real images alone
$0.405$.  It does not predict the sign either; \texttt{inet/coco} has
a middling $\rho(\nabla,e)$ and the most negative correlation in the
table.  We report it as an observation about these fourteen conditions
and specifically not as a criterion, because the perfect ordering was
an artefact of the five points that produced the hypothesis.

\section{Qualitative comparison}
\label{app:qual}

\begin{figure}[t]
\centering
\includegraphics[width=0.49\linewidth]{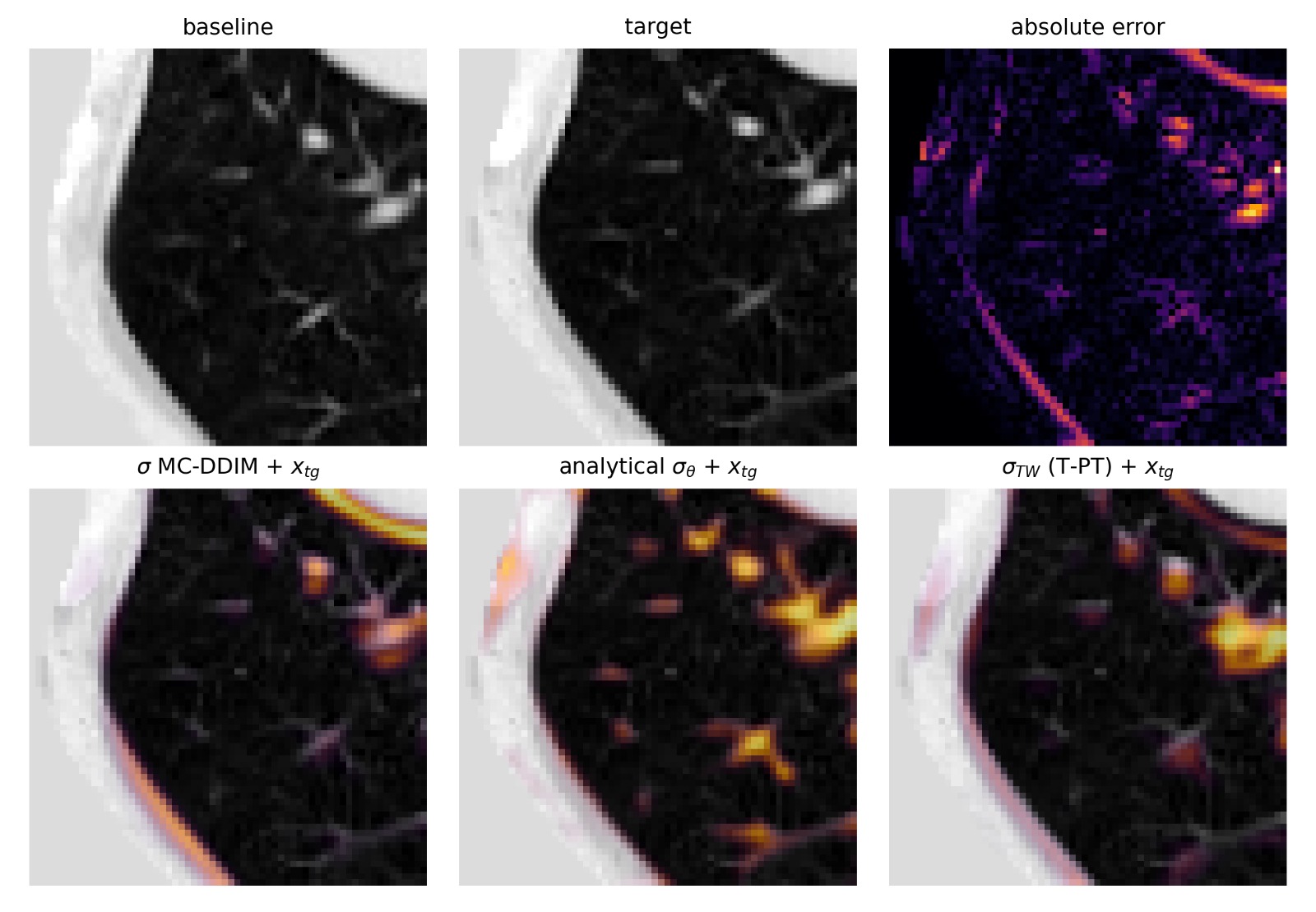}\hfill
\includegraphics[width=0.49\linewidth]{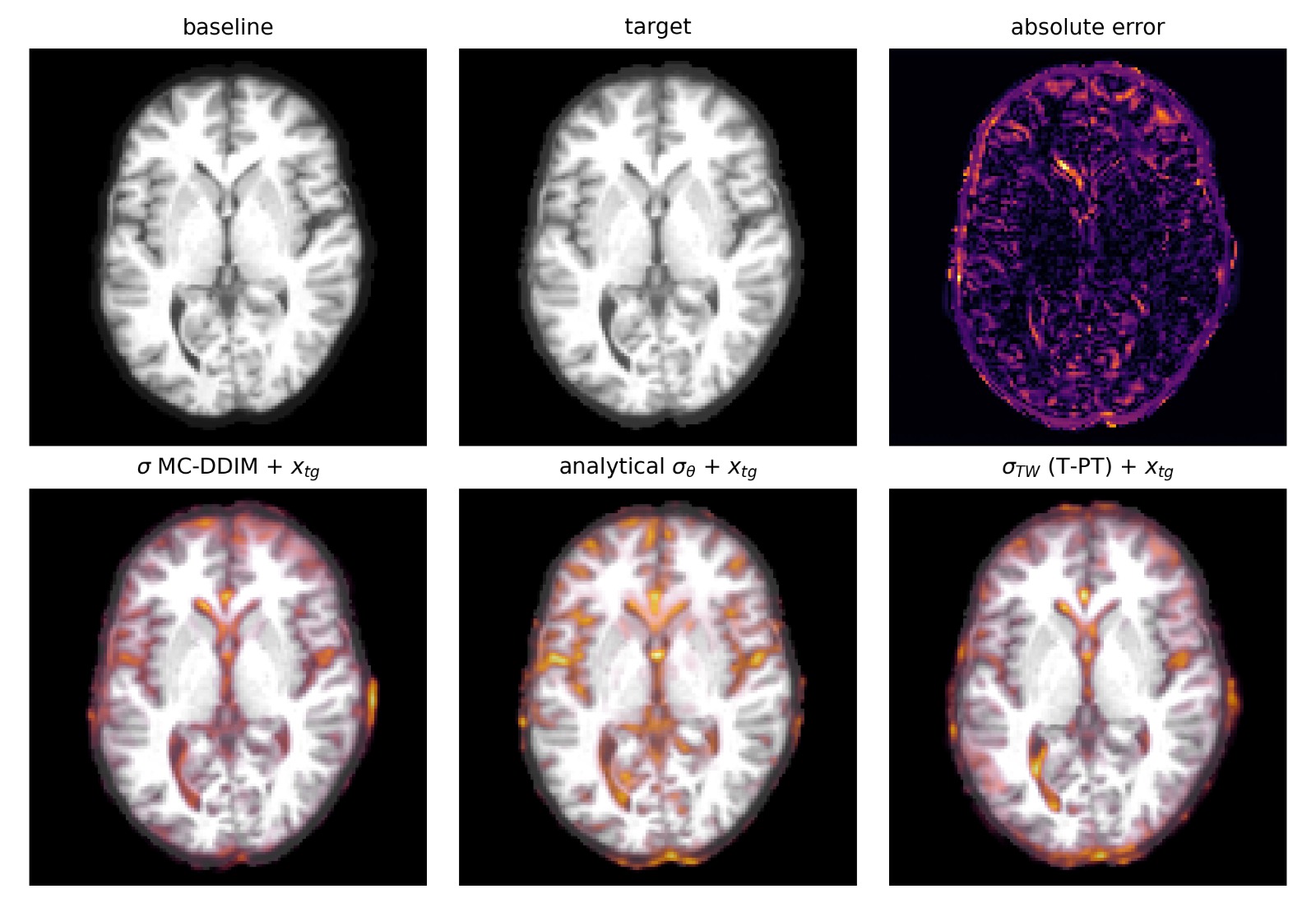}
\caption{Representative lung-CT sample (\textbf{left}) and OASIS-3
transverse slice (\textbf{right}).  Top row of each: baseline, target,
absolute error.  Bottom row, overlaid on the target: MC-DDIM $\sigma$,
the analytical $\sigma_\theta$, and \tpt{} $\sigma_{\text{TW}}$.  Each
overlay is normalised to its own quantiles, so the panels show each
map's spatial ordering and not comparable magnitudes.  On lung CT all
three concentrate on the nodule region and \tpt{} reproduces MC-DDIM's
spatial pattern (cross-agreement $\rho = 0.79$ on this volume) at
$13.4\times$ fewer network evaluations.  On brain MRI $\sigma_{\text{TW}}$
tracks the ventricular and cortical boundaries consistently with MC-DDIM
($\rho = 0.70$ here, $0.709$ over the cohort) and inside brain tissue it
leads the ensemble on five of the eight endpoints of
Table~\ref{tab:clinical}, the rest unresolved.  What
no panel shows, and Sec.~\ref{sec:limitations} reports, is that both
maps lose most of their information inside the region that actually
changed between the two visits.}
\label{fig:qual}
\end{figure}

Aggregate correlations hide what a clinician would actually look at.
Figure~\ref{fig:qual} shows one test case per
dataset: baseline, target, and absolute error on top, the three
$\sigma$ maps as alpha-blended overlays on the target anatomy below,
each normalised to its own quantiles.
The overlay is deliberate.  A $\sigma$ map is only useful when read
\emph{against} the anatomy it annotates; side-by-side heatmaps make
that harder than it needs to be.

On lung CT (Fig.~\ref{fig:qual}, left) all three estimators put their mass on
the growing nodule and the vessels next to it, and leave the static
parenchyma dark.  Without the labels, $\sigma_{\text{MC}}$ and
$\sigma_{\text{TW}}$ are hard to tell apart, which is what
cross-agreement $\rho = 0.79$ looks like on a single volume.  On brain
MRI (Fig.~\ref{fig:qual}, right) everything is more diffuse: uncertainty
follows the cortical ribbon and the ventricular boundaries, where
skull-stripping and atrophy both introduce ambiguity.  Again, the two
maps pick out the same structures.

Both samples have per-volume error correlations below their dataset
means ($\rho_{\text{TW}} = 0.601$ lung, $0.499$ brain), and per-volume
cross-agreements close to the dataset means ($0.79$ vs $0.815$;
$0.70$ vs $0.709$): representative volumes, not favourable ones.  We picked
them for anatomical clarity, not for favourable numbers.  Read the
figures for spatial behaviour; Table~\ref{tab:masked} carries the common-error-map comparison;
Table~\ref{tab:aggregate} reports the deployment variant.


\section{The deployment comparison, and paired tests}
\label{app:controlled}

Table~\ref{tab:masked} in the body scores every estimator against a
single common error map (Table~\ref{tab:fullmasked} below gives the
full per-region and selective-prediction rows), that of the $J=20$ ensemble mean, so that
differences reflect the uncertainty estimator alone.  That is the
scientific comparison.  It is not, however, the comparison a
practitioner faces: at deployment \tpt{} annotates the single chain
that produced its probe input, while MC-DDIM annotates its $J$-chain
mean, and a $J$-chain mean is the better prediction.  Table~\ref{tab:aggregate}
reports that operating comparison for completeness.  It conflates
estimator quality with prediction quality, which is why it is here and
not in the body.  \emph{Deployment} qualifies the scoring and nothing
else: each map is ranked against the error of the prediction it would
accompany rather than against a common reference no practitioner
holds; the conditioning is the deployable one on both tasks
(App.~\ref{app:defs}).

\begin{table}[h]
\centering\small
\begin{tabular}{llr}
\toprule
probe input $x_0$ (lung CT, $N{=}130$) & source & $\bar\rho(\sigma_{\text{TW}}, e)$ \\
\midrule
$[x_{\text{baseline}}, 0]$, zero placeholder in the residual channel & first implementation & $0.004$ \\
one DDIM reconstruction (deployment probe)   & Table~\ref{tab:aggregate} & $0.648$ \\
\bottomrule
\end{tabular}
\caption{The probe point is load-bearing (Sec.~\ref{sec:tpt-indist}):
the same estimator, at the same $t^\ast = 60$ and $K = 30$, scored
against the error of the chain each map accompanies --- the deployment
scoring defined above, not the common error map --- with only the
probe input changed.  Both rows are per-volume means, so the two are
read against each other; the participant-weighted figure for the
deployment probe is the $0.613\;[0.576, 0.650]$ of
Table~\ref{tab:aggregate}.  A
placeholder that no training pair ever visited yields a smooth,
non-negative map that carries no information about the error; any
model-supported reconstruction restores it.}
\label{tab:probeinput}
\end{table}

\begin{table}[h]
\centering
\small
\begin{tabular}{llrccl}
\toprule
Dataset & Estimator & Fwd. & $\bar\rho(\sigma, e)$ [95\% CI] & cross $\leftrightarrow$ MC & Requires \\
\midrule
PNG        & $\sigma_{\text{MC}}$ ($J{=}20$)
           & $1220$ & $0.642\;[0.596, 0.687]$ & $\equiv 1$    & --- \\
$N{=}130$  & $\sigma_\theta$ (analytical, $\shat{=}15$)
           & $61$   & $0.597\;[0.551, 0.641]$ & --- & hybrid loss \\
           & $\sigma_{\text{TW}}$ (\tpt)
           & $\mathbf{91}$ & $0.613\;[0.576, 0.650]$ & $0.790\;[0.765, 0.812]$ & --- \\
\midrule
OASIS      & $\sigma_{\text{MC}}$ ($J{=}20$)
           & $1220$ & $\mathbf{0.552}\;[0.538, 0.564]$ & $\equiv 1$ & --- \\
$N{=}211$  & $\sigma_\theta$ (analytical, $\shat{=}15$)
           & $61$   & $\mathbf{0.551}\;[0.539, 0.562]$ & --- & hybrid loss \\
           & $\sigma_{\text{TW}}$ (\tpt)
           & $\mathbf{76}$ & $0.520\;[0.511, 0.529]$ & $0.709\;[0.705, 0.712]$ & --- \\
\bottomrule
\end{tabular}
\caption{Predictive discrimination in the deployment setting: each
estimator is scored against the error of the prediction it would
actually accompany, so \tpt{} is measured against a single chain and
MC-DDIM against its $J$-chain mean.  ``Fwd.''\ is network evaluations
per volume; CI is confidence interval. $\sigma_\theta$ accumulates along its own single chain,
so its $61$ are that chain itself, with no evaluations beyond it.
$\sigma_{\text{MC}}$ is the reference, so its
cross-agreement with itself is $1$ by construction; entries are
participant-weighted with participant-level bootstraps on both tasks.  The two $\sigma_\theta$ rows are scored
against the common error map (their cross-agreement with MC-DDIM was
not computed); on lung CT the analytical baseline sits below both
estimators, and \tpt{} needs no variance head to run, at
$13.4\times$ fewer network evaluations than the reference.  On brain
MRI, over the whole volume, MC-DDIM is ahead
of \tpt{} by $3.2$ Spearman points in this self-scored setting (about
four against the common error map); Table~\ref{tab:masked} shows the
ordering reversing inside tissue, which is where the background stops
carrying the correlation.  The tissue rows of both analytical
baselines are in Table~\ref{tab:analytic-oasis}.}
\label{tab:aggregate}
\end{table}

\paragraph{Paired tests.}
Overlapping intervals do not establish that two estimators differ, and
non-overlapping ones do not establish by how much.  Since every
estimator is scored on the same volumes, we bootstrap the per-volume
difference directly, preserving the pairing ($2 \times 10^4$
resamples), and we resample participants rather than volumes as
specified in Sec.~\ref{sec:exp-setup} and validated below on
simulated data.

\begin{table}[h]
\centering
\footnotesize
\setlength{\tabcolsep}{2.5pt}
\begin{tabular}{lccc}
\toprule
Metric & participant-wtd.\ $\bar\Delta$ [CI] & volume-wtd.\ $\bar\Delta$ [CI] & \tpt{} wins \\
\midrule
\multicolumn{4}{l}{\emph{\textbf{Lung CT} (PNG, $N{=}130$, $G{=}47$)}} \\
$\rho$, full volume      & $-0.0215\;[-0.0373, -0.0070]$ & $-0.0242\;[-0.0376, -0.0104]$ & $36/130$ \\
$\rho$, tissue           & $-0.0215\;[-0.0368, -0.0069]$ & $-0.0242\;[-0.0375, -0.0103]$ & $36/130$ \\
$\rho$, edges               & $-0.1465\;[-0.1865, -0.1096]$ & $-0.1760\;[-0.2102, -0.1295]$ & $17/130$ \\
$\rho$, interior            & $-0.0234\;[-0.0435, -0.0048]$ & $-0.0210\;[-0.0413, -0.0062]$ & $50/130$ \\
$\rho$, regions of change   & $-0.1337\;[-0.1821, -0.0860]$ & $-0.1656\;[-0.1945, -0.1171]$ & $30/130$ \\
$\rho$, partial, $|\nabla u_0|$ ctl.\ & $-0.0595\;[-0.0900, -0.0307]$ & $-0.0696\;[-0.0930, -0.0417]$ & $33/130$ \\
AUROC, worst $5\%$        & $-0.0274\;[-0.0388, -0.0167]$ & $-0.0376\;[-0.0505, -0.0199]$ & $28/130$ \\
AUSE (lower better)       & $+0.0140\;[+0.0041, +0.0239]$ & $+0.0188\;[+0.0076, +0.0275]$ & $41/130$ \\
AURC (lower better)       & $+0.0013\;[+0.0003, +0.0023]$ & $+0.0017\;[+0.0006, +0.0025]$ & $41/130$ \\
\midrule
\multicolumn{4}{l}{\emph{\textbf{Brain MRI} (OASIS, $N{=}211$, $G{=}63$)}} \\
$\rho$, full volume      & $-0.0380\;[-0.0415, -0.0345]$ & $-0.0412\;[-0.0448, -0.0372]$ & $1/211$ \\
$\rho$, tissue           & $+0.0165\;[+0.0057, +0.0282]$ & $+0.0120\;[+0.0032, +0.0215]$ & $117/211$ \\
$\rho$, edges               & $+0.0027\;[-0.0047, +0.0102]$ & $+0.0030\;[-0.0053, +0.0115]$ & $103/211$ \\
$\rho$, interior            & $+0.0221\;[+0.0089, +0.0369]$ & $+0.0164\;[+0.0064, +0.0274]$ & $120/211$ \\
$\rho$, regions of change   & $-0.0025\;[-0.0132, +0.0082]$ & $-0.0049\;[-0.0122, +0.0029]$ & $99/211$ \\
$\rho$, partial, $|\nabla u_0|$ ctl.\ & $+0.0212\;[+0.0090, +0.0350]$ & $+0.0167\;[+0.0070, +0.0270]$ & $123/211$ \\
AUROC, worst $5\%$        & $-0.0014\;[-0.0065, +0.0041]$ & $-0.0004\;[-0.0062, +0.0058]$ & $82/211$ \\
AUSE (lower better)       & $-0.0108\;[-0.0188, -0.0036]$ & $-0.0084\;[-0.0151, -0.0024]$ & $106/211$ \\
AURC (lower better)       & $-0.0010\;[-0.0016, -0.0004]$ & $-0.0009\;[-0.0015, -0.0004]$ & $106/211$ \\
\bottomrule
\end{tabular}
\caption{Paired participant-level bootstrap of
$\sigma_{\text{TW}} - \sigma_{\text{MC}}$ against the common error
map, on both tasks: the $130$ lung-CT pairs from $47$ participants and
all $211$ OASIS volumes from the $63$ participants of the published
test split.  $\Delta$ is \tpt{} minus MC-DDIM on every row, so on
AUSE and AURC, a positive value favours MC-DDIM.  The
participant-weighted column averages the
difference within a participant before averaging across participants,
so a participant contributing $21$ of the $211$ pairs counts once
rather than $21$ times; the volume-weighted column averages over
volumes directly.  Both bootstraps resample participants.  The two
agree in sign on every row of both tasks.  The last column counts the volumes on which \tpt{} is better,
with the direction of each metric taken into account.  Tissue Spearman is the pre-designated
primary endpoint and the gradient-controlled row its robustness check
(App.~\ref{app:defs}); the remaining rows are secondary, unadjusted
for multiplicity, and any one of them alone should be read as
exploratory. CI is the confidence interval.}
\label{tab:paired-oasis}
\end{table}

\begin{table}[t]
\centering
\small
\begin{tabular}{lcc c cc}
\toprule
& \multicolumn{2}{c}{PNG ($N{=}130$)} & & \multicolumn{2}{c}{OASIS ($N{=}211$)} \\
\cmidrule(lr){2-3}\cmidrule(lr){5-6}
& $\sigma_{\text{TW}}$ & $\sigma_{\text{MC}}$ & &
  $\sigma_{\text{TW}}$ & $\sigma_{\text{MC}}$ \\
\midrule
$\rho$, full volume         & $0.669$ & $\mathbf{0.693}$ & & $0.518$ & $\mathbf{0.559}$ \\
$\rho$, tissue              & $0.670$ & $\mathbf{0.694}$ & & $\mathbf{0.349}$ & $0.337$ \\
$\rho$, edges                  & $0.352$ & $\mathbf{0.528}$ & & $0.248$ & $0.245$ \\
$\rho$, interior               & $0.612$ & $\mathbf{0.633}$ & & $\mathbf{0.334}$ & $0.318$ \\
$\rho$, regions of true change & $0.100$ & $\mathbf{0.265}$ & & $0.128$ & $0.133$ \\
\midrule
$\rho$, partial, $|\nabla u_0|$ controlled
                             & $0.339$ & $\mathbf{0.409}$ & & $\mathbf{0.290}$ & $0.274$ \\
AUROC, worst $5\%$ voxels    & $0.884$ & $\mathbf{0.922}$ & & $0.782$ & $0.782$ \\
AUSE (lower is better)       & $0.152$ & $\mathbf{0.133}$ & & $\mathbf{0.306}$ & $0.315$ \\
AURC (lower is better)       & $0.030$ & $\mathbf{0.029}$ & & $\mathbf{0.053}$ & $0.054$ \\
\bottomrule
\end{tabular}
\caption{Full per-region and selective-prediction results against
the common error map of the $J=20$ ensemble mean, expanding the
compact Table~\ref{tab:masked} of the body.  OASIS entries are
per-volume means, shown for scale, following the convention of
Table~\ref{tab:masked}; the pre-declared participant-weighted
differences and their intervals for OASIS are in
Table~\ref{tab:paired-oasis}; the PNG paired contrasts are the
$\Delta$ column of Table~\ref{tab:masked}.  Bold marks the better of $\sigma_{\text{TW}}$ and
$\sigma_{\text{MC}}$ where the $95\%$ paired interval excludes zero
(participant-level resampling on both tasks);
intervals are in Table~\ref{tab:paired-oasis}.  ``Tissue'' is the foreground mask.  On lung
CT it covers $99.5\%$ of the volume; the scan is already cropped to
the thorax, so foreground and full volume coincide, and the row is a
consistency check rather than a result.  On brain MRI it covers
$24\%$, and there the two rows differ sharply.
The $\sigma_\theta$ columns for both tasks are in
Table~\ref{tab:analytic-oasis}.}
\label{tab:fullmasked}
\end{table}

\paragraph{The analytical baseline.}
$\sigma_\theta$ is produced by a single reverse chain: the sampler
accumulates the variance head along the chain with the delayed-start
mask at $\shat = 15$, so one chain per volume (seed $42$, the first
reference seed) is the whole computation.  It is scored against the
same common error map as every other estimator
(Table~\ref{tab:analytic-oasis}); Table~\ref{tab:aggregate} gives
its full-volume figure next to the deployment-scored rows.  On lung
CT the ordering is simple: the variance head is below both estimators
on every endpoint, \tpt{} by $+0.024\;[+0.015, +0.034]$ in tissue
Spearman and the reference by $+0.046$.  On brain MRI, it is not.  The result is the paper's thesis in
one row.  Over the full volume the propagated variance is
not distinguishable from the $J{=}20$ ensemble ($0.553$ vs $0.559$;
paired $\Delta = +0.001\;[-0.004, +0.006]$) while sitting \emph{above}
\tpt{} ($0.518$).  Inside tissue it is last by a wide margin, and the
gradient control, AUROC and AUSE agree.  Its median voxel sits at $3.0\times10^{-3}$, of the same order as the
schedule's own accumulated injection at the mask, with the anatomy
modulating scarcely a factor of two above it. Those magnitudes are a
description and not an explanation of the ranking: the injected term
enters the recursion as a schedule scalar added identically at every
voxel, so it cannot reorder them: no constant additive term
across voxels can move a rank correlation
(Prop.~\ref{prop:propagation-ranking}, App.~\ref{app:propagation-ranking}).  What the tissue restriction removes is the agreement
that background supplies for free, where map and error are both near
zero; whether what remains is dominated by Monte-Carlo noise is a
question a split-half of the seeds inside tissue would answer, and we
have not run it.  At $61$ network evaluations against \tpt{}'s $76$,
the fifteen additional probe forwards buy a participant-weighted
$+0.050$ tissue-Spearman contrast.

\begin{table}[h]
\centering
\footnotesize
\setlength{\tabcolsep}{4pt}
\setlength{\tabcolsep}{3pt}
\begin{tabular}{lccc}
\toprule
 & $\sigma_\theta$ (analyt.) & $\sigma_{\text{TW}}$ & $\sigma_{\text{MC}}$ \\
\midrule
\multicolumn{4}{l}{\emph{\textbf{Lung CT} (PNG, $N{=}130$, $G{=}47$)}} \\
$\rho$, full volume  & $0.643$ & $0.669$ & $\mathbf{0.693}$ \\
$\rho$, tissue       & $0.644$ & $0.670$ & $\mathbf{0.694}$ \\
$\rho$, partial, $|\nabla u_0|$ ctl. & $0.295$ & $0.339$ & $\mathbf{0.409}$ \\
Network evaluations / volume & $61$ & $91$ & $1220$ \\
\midrule
\multicolumn{4}{l}{\emph{Paired contrasts against $\sigma_\theta$ (participant-weighted, $B = 2\times10^4$):}} \\
$\Delta\rho$, tissue & --- & $+0.0241\;[+0.0148, +0.0336]$ & $+0.0456\;[+0.0307, +0.0607]$ \\
$\Delta\rho$, partial (grad.\ rm.) & --- & $+0.0574\;[+0.0405, +0.0747]$ & $+0.1169\;[+0.0908, +0.1453]$ \\
$\Delta\rho$, full volume & --- & $+0.0238\;[+0.0142, +0.0333]$ & $+0.0453\;[+0.0306, +0.0610]$ \\
$\Delta$AUROC, worst $5\%$ & --- & $+0.0501\;[+0.0446, +0.0557]$ & $+0.0775\;[+0.0639, +0.0923]$ \\
$\Delta$AUSE (lower better) & --- & $-0.0260\;[-0.0317, -0.0203]$ & $-0.0400\;[-0.0518, -0.0283]$ \\
\midrule
\midrule
\multicolumn{4}{l}{\emph{\textbf{Brain MRI} (OASIS, $N{=}211$, $G{=}63$)}} \\
$\rho$, full volume  & $0.553$ & $0.518$ & $\mathbf{0.559}$ \\
$\rho$, tissue       & $0.299$ & $\mathbf{0.349}$ & $0.337$ \\
$\rho$, partial, $|\nabla u_0|$ ctl. & $0.253$ & $\mathbf{0.290}$ & $0.274$ \\
Network evaluations / volume & $61$ & $76$ & $1220$ \\
\midrule
\multicolumn{4}{l}{\emph{Paired contrasts against $\sigma_\theta$ (participant-weighted, $B = 2\times10^4$):}} \\
$\Delta\rho$, tissue & --- & $+0.0496\;[+0.0370, +0.0627]$ & $+0.0331\;[+0.0234, +0.0433]$ \\
$\Delta\rho$, partial (grad.\ rm.) & --- & $+0.0355\;[+0.0229, +0.0486]$ & $+0.0142\;[+0.0060, +0.0224]$ \\
$\Delta\rho$, full volume & --- & $-0.0372\;[-0.0401, -0.0341]$ & $+0.0008\;[-0.0040, +0.0058]$ \\
$\Delta$AUROC, worst $5\%$ & --- & $+0.0274\;[+0.0223, +0.0325]$ & $+0.0288\;[+0.0229, +0.0350]$ \\
$\Delta$AUSE (lower better) & --- & $-0.0391\;[-0.0480, -0.0307]$ & $-0.0283\;[-0.0355, -0.0215]$ \\
\bottomrule
\end{tabular}
\caption{The analytical variance-propagation baseline on both
tasks, from one seed-$42$ chain per volume and scored against the
common $J{=}20$ ensemble-mean error map.  Estimator rows are
per-volume means, following the convention of Table~\ref{tab:masked};
contrast rows are the participant-weighted paired difference (column
estimator minus $\sigma_\theta$), with brackets giving 95\% confidence
intervals (CI), where positive favours the column
estimator except for AUSE, where negative does.
AUROC is reported
for the worst-$5\%$-error voxels.  On lung CT, both
estimators beat the variance head on every endpoint, the reference by
more.  On brain MRI, inside tissue both estimators beat it and \tpt{}
by more; over the whole volume neither does, the \tpt{} contrast
being negative and the MC one crossing zero.  In the OASIS regions of
change (not shown) no contrast against $\sigma_\theta$ excludes zero
in either direction.}
\label{tab:analytic-oasis}
\end{table}

\paragraph{Deployment without a reference ensemble.}
The paired comparisons of Tables~\ref{tab:masked}
and~\ref{tab:paired-oasis} score $\sigma_{\text{TW}}$ against the
common ensemble-mean error map, which a deployed system does not have.  The
honest deployment quantity scores the probe against the error of its
own reconstruction.  Doing so on the same $211$ volumes moves the
whole-volume Spearman \emph{up}, from $0.518$ to $0.523$
(per-volume means; the participant-weighted figure of
Table~\ref{tab:aggregate} is $0.520$; $+0.0063\;[+0.0018, +0.0110]$), and the worst-$5\%$ AUROC up from
$0.782$ to $0.793$ ($+0.0092\;[+0.0061, +0.0125]$), because the
probe and the error map now share a reconstruction and agree on the
background.  Inside tissue it moves \emph{down}, from $0.349$ to
$0.343$ ($-0.0092\;[-0.0137, -0.0045]$), and it moves down by a
similar amount in the interior ($-0.0101$), in the regions of change
($-0.0138$) and on the gradient-controlled endpoint ($-0.0089$).  The
primary endpoint therefore costs about one Spearman point when the
reference is removed, and the whole-volume figure is the one that
should not be trusted: it improves for the same reason it was
uninformative to begin with.  The probe-seed rerun makes the same
point from the other side: changing the probe seed from $42$ to $777$
moves the self-scored whole-volume figure by $+0.08$ and the tissue
figure by $0.006$ (Table~\ref{tab:seedsens}; the number we say not
to trust is also the only one sensitive to the probe seed.

\paragraph{Probe-seed sensitivity, and the no-residual baseline.}
The deployment probe chain uses seed $42$, which is also one of the
twenty reference seeds ($42$--$61$).  To rule out any dependence
between the probe and the reference, we reran the full masked pass
with a probe seed outside the reference set, $777$ on brain MRI,
$123$ on lung CT, on both datasets
(Table~\ref{tab:seedsens}).  Every primary contrast moves by at most
$0.002$ on brain MRI and $0.006$ on lung CT, with near-identical
intervals, and cross-agreement is unchanged ($0.709$ under both seeds
on OASIS, $0.790$ vs $0.784$ on PNG), so
the shared seed neither drives the tissue advantage nor inflates the
agreement with the reference.  The same rerun carries the
no-residual control requested by the construction itself: the
per-voxel variance of the raw noise predictions over the same $K$
perturbations, identical probes, no subtraction of the injected noise,
which is the statistic of \citet{devita2025pixelwise} evaluated at our
probe point and budget rather than in their pipeline.  It correlates
with the common error map at $-0.010$ (OASIS
tissue) and $-0.019$ (PNG tissue), against \tpt{}'s $0.353$ and
$0.618$; the paired gaps are $+0.363\;[+0.345, +0.381]$ and
$+0.637\;[+0.592, +0.681]$.  (The standard-deviation map of $\hat x_0$ over the same
probes needs no experiment: $\hat x_0 = x_0 + \sigma_t/\!\sqrt{\bar\alpha_t}\,(\xi - \epsilon_\theta)$,
so it is the \tpt{} map times a schedule constant, identical in
rank.)  What carries the signal is the subtraction of the known
noise, the Tweedie residual, not the local perturbations.

\begin{table}[h]
\centering
\footnotesize
\setlength{\tabcolsep}{4pt}
\begin{tabular}{llcc}
\toprule
 & & probe seed $42$ & alternative probe seed \\
\midrule
OASIS & $\Delta\rho$, tissue   & $+0.0165\;[+0.0056, +0.0282]$ & $+0.0149\;[+0.0038, +0.0268]$ \\
      & $\Delta\rho$, partial (grad.\ rm.) & $+0.0212\;[+0.0088, +0.0348]$ & $+0.0193\;[+0.0068, +0.0332]$ \\
      & $\Delta\rho$, full volume   & $-0.0380\;[-0.0415, -0.0346]$ & $-0.0374\;[-0.0408, -0.0339]$ \\
      & $\rho(\sigma_{\text{TW}}, \sigma_{\text{MC}})$ & $0.709$ & $0.709$ \\
\midrule
PNG   & $\Delta\rho$, tissue   & $-0.0215\;[-0.0372, -0.0069]$ & $-0.0256\;[-0.0428, -0.0093]$ \\
      & $\Delta\rho$, partial (grad.\ rm.) & $-0.0595\;[-0.0896, -0.0306]$ & $-0.0650\;[-0.0960, -0.0353]$ \\
      & $\Delta\rho$, full volume   & $-0.0215\;[-0.0369, -0.0072]$ & $-0.0255\;[-0.0433, -0.0095]$ \\
      & $\rho(\sigma_{\text{TW}}, \sigma_{\text{MC}})$ & $0.790$ & $0.784$ \\
\bottomrule
\end{tabular}
\caption{Probe-seed sensitivity: the masked pass run with the
deployment probe seed ($42$) and with a probe seed outside the
reference seed set ($42$--$61$): seed $777$ on OASIS, seed $123$ on
lung CT, everything else held fixed, on the deployment conditioning
and against the same references as Table~\ref{tab:masked}.  $\Delta\rho$ is the paired difference, \tpt{} minus MC-DDIM, of the
per-volume Spearman correlation with the common error map; the last
row of each block is the Spearman correlation between the two
$\sigma$ maps themselves.  The table
is read for the seed-to-seed movement only; contrasts are
participant-weighted with participant-level intervals ($B = 10^4$) on
both tasks.  Every primary contrast moves by at most $0.006$.}
\label{tab:seedsens}
\end{table}

\noindent
The pattern is the one Sec.~\ref{sec:exp-masked} describes, now with
the resampling unit corrected.  Over the whole volume, MC-DDIM is
significantly better; inside tissue the ordering reverses, in the
interior it reverses by more, and the reversal survives partialling
out the baseline intensity gradient, which is the control that would
kill a pure anatomy tracer.  Edges, the change region and AUROC on the
worst $5\%$ of voxels are the contrasts we cannot resolve.

We report the weighting we do because the split is unbalanced, and it
is worth being explicit that the imbalance was working \emph{against}
the effect rather than creating it: the participant contributing $21$
of the $211$ pairs carries a tenth of a volume-weighted estimate, and
removing that leverage strengthens the primary and robustness
contrasts rather than
weakening it.  A reader who prefers volume weighting can read the last
column and reach the same inferential conclusion on all nine rows:
the two weightings agree in sign on every one of them, and where they
differ in magnitude, both intervals cross zero.

\begin{table}[h]
\centering
\small
\begin{tabular}{llcc}
\toprule
Checkpoint & Contrast $\Delta$ & $\bar\Delta$ [95\% CI] & $P(\Delta > 0)$ \\
\midrule
PNG hybrid  & $\sigma_{\text{TW}} - \sigma_{\text{MC}}$     & $-0.022\;[-0.037, -0.007]$ & $<10^{-3}$ \\
PNG no-head & $\sigma_{\text{TW}} - \sigma_{\text{MC}}$     & $-0.020\;[-0.040, -0.001]$ & $0.02$ \\
\bottomrule
\end{tabular}
\caption{Paired bootstrap on the two lung-CT checkpoints against the
error map of each checkpoint's own $J{=}20$ ensemble, resampling the
$47$ participants ($B = 2\times10^4$); the conditioning of each
experiment is given in App.~\ref{app:defs} and
App.~\ref{app:nohead}.  MC-DDIM is ahead of \tpt{} by about two
Spearman points on both checkpoints; on the one trained without a
variance head, the interval only just excludes zero.  The
sample-level bootstrap used in an earlier draft gave
$-0.003\;[-0.016, +0.009]$ for that checkpoint, which is the
coverage failure App.~\ref{app:controlled} documents.  The OASIS
contrasts are in Table~\ref{tab:paired-oasis}.}
\label{tab:paired}
\end{table}

\paragraph{Why the resampling unit matters.}
Resampling volumes rather than participants is not conservative in one
direction; it is simply wrong, and it is wrong in the direction that
manufactures significance.  We checked this on simulated data matched
to our cluster sizes, $20$ participants with $1$ to $11$ volumes each,
a true between-estimator difference of $0.05$ carrying a
participant-level component, and repeated it $300$ times.  A
volume-level bootstrap covered the true effect in $57\%$ of the
replicates against a nominal $95\%$; the participant-level bootstrap
covered $95\%$ ($94.7\%$ measured).  On a single replicate, the two
disagree qualitatively: the volume-level interval
$[+0.0147, +0.0358]$ excludes zero, while the participant-level
interval on the same data, $[-0.0056, +0.0546]$, contains it.  A
reader checking our contrasts should confirm that the interval
excludes zero under the participant-level resampling, which is the one
we report.

\noindent
Two readings follow, and they pull in opposite directions.  \tpt{}
does \emph{not} match MC-DDIM on lung CT: it is significantly worse,
by a margin the bootstrap resolves cleanly, and any claim of parity
there would be unsupported.  On brain MRI it is behind MC-DDIM over the whole
volume and ahead of it inside tissue (Table~\ref{tab:paired-oasis}).
The defensible summary is that \tpt{} retains most of MC-DDIM's
spatial discrimination at a fraction of the cost, and that which wins
depends on the dataset \emph{and} on where in the volume one looks.

\paragraph{Head-free compatibility, with the confound removed.}
Expressed as a fraction of each model's own Monte-Carlo reference,
\tpt{} recovers $95.6\%$ $[93.9, 97.9]$ on the hybrid checkpoint
and $99.5\%$ $[96.0, 102.4]$ on the one trained without a variance
head, resampling the $47$ participants.
The difference between the two ratios is
$+3.9$ points $[+0.4, +5.5]$ in favour of the model trained without a
variance head.
\tpt{} therefore tracks its reference \emph{at least as closely}
without the variance head as with it, which is the strongest form the
training-invariance claim could take, and we flag that we reached it
only after removing a confound: scoring \tpt{} against its own
single-chain error while scoring MC-DDIM against the ensemble-mean
error produces an apparent $86.6\%$ against $92.6\%$ and reverses the
conclusion.

The mechanism is worth stating plainly, because it is not that the
estimator improves.  Removing the variance head produces a noisier
sampler: mean absolute error rises from $0.064$ to $0.069$ for the
ensemble mean and from $0.064$ to $0.086$ for a single chain, and
$\sigma_{\text{MC}}$ is $2.7\times$ larger at the same $J$.  Both
estimators sit lower on that checkpoint: the ensemble at $0.636$
and the probe at $0.633$ against the ensemble's own reference, and the
paired gap between them is $-0.020\;[-0.040, -0.001]$.  What the experiment establishes is that no part of \tpt{} depends
on the hybrid objective; it neither needs $\sigma_\theta$ to exist
nor pays a penalty when it does not.

\section{What a larger Monte-Carlo ensemble buys}
\label{app:jconv}

Sub-sampling the complete $J = 20$ ensembles, at $J' \in \{2,5,10,20\}$
on lung CT and $J' \in \{2,3,5,8,10,15,20\}$ on OASIS,
and rescoring gives two quantities: discrimination against the error
map, and the split-half agreement obtained by splitting the $J'$
chains into two disjoint halves and correlating the two $\sigma$ maps.
The first is how informative the estimate is, the second how
reproducible.

\begin{table}[h]
\centering
\small
\begin{tabular}{r cc cccc}
\toprule
 & \multicolumn{2}{c}{PNG ($N{=}130$)} & \multicolumn{4}{c}{OASIS ($N{=}211$)} \\
\cmidrule(lr){2-3}\cmidrule(lr){4-7}
$J'$ & $\bar\rho(\sigma_{J'}, e)$ & split-half & $\bar\rho(\sigma_{J'}, e)$ & agree w/ $J{=}20$ & split-half & $\bar\rho_{\text{tissue}}(\sigma_{J'}, e)$ \\
\midrule
$2$  & $0.529$ ($77\%$) & ---     & $0.485$ ($91\%$)  & $0.668$ & ---     & $0.139$ ($43\%$) \\
$3$  & ---              & ---     & $0.521$ ($98\%$)  & $0.757$ & ---     & $0.195$ ($60\%$) \\
$5$  & $0.653$ ($95\%$) & $0.496$ & $0.529$ ($99\%$)  & $0.827$ & $0.535$ & $0.248$ ($77\%$) \\
$8$  & ---              & ---     & $0.531$ ($100\%$) & $0.887$ & $0.665$ & $0.284$ ($88\%$) \\
$10$ & $0.676$ ($98\%$) & $0.768$ & $0.532$ ($100\%$) & $0.915$ & $0.684$ & $0.296$ ($91\%$) \\
$15$ & ---              & ---     & $0.532$ ($100\%$) & $0.964$ & $0.712$ & $0.314$ ($97\%$) \\
$20$ & $0.687$ ($100\%$)& $0.863$ & $0.534$ ($100\%$) & $1.000$ & $0.740$ & $0.323$ ($100\%$) \\
\bottomrule
\end{tabular}
\caption{Percentages are relative to $J' = 20$.  Each entry averages
$20$ subsets of size $J'$ drawn at random without replacement, so the
curve does not depend on the order of the seeds.  Entries are
\emph{volume-weighted} means computed by the standalone convergence
probe directly in prediction space, so its absolute values are not
comparable with the scored pipeline of
Tables~\ref{tab:masked}--\ref{tab:fullmasked}; only the percentages,
which are internal to this table, are meaningful here.
The last column is the finding: full-volume discrimination saturates
almost immediately ($J'=3$ recovers $98\%$), but that saturation is
carried by background.  \emph{Inside tissue}, the primary endpoint,
the same reference recovers only $60\%$ at $J'=3$, $91\%$ at $J'=10$,
and is still climbing at $J'=20$.  Among the tested configurations
$J' \le 20$, no smaller ensemble reaches the $J{=}20$ tissue
discrimination, which is itself still increasing.}
\label{tab:jconv}
\end{table}

\paragraph{Fifty chains: the scores are insensitive to the target,
the reference's own spread is not.}
Table~\ref{tab:jconv} extrapolates; this rescoring measures.  We
generated thirty further reverse chains per volume, seeds $62$--$91$,
under the deployment conditioning, and rescored against the resulting
$J{=}50$ ensemble: every lung-CT volume, and the first thirty brain
volumes, which is what the budget allowed.  Two things separate, and
only the second was in doubt.  The mean absolute error of the
ensemble mean is unchanged ($0.0734 \to 0.0732$ on lung CT,
$0.0260 \to 0.0261$ on brain MRI), and the statement the tables
actually need holds directly: the probe's map is fixed by
construction, and scoring that one fixed map against $e_{20}$ and
against $e_{50}$ gives $0.670 \to 0.671$ on lung CT and $0.356 \to
0.356$ inside brain tissue (Table~\ref{tab:j50}).  Equal
correlations with two targets do not make the two targets equal, so
this is not a convergence claim about the error map; it is the
narrower one the body relies on, that the discrimination it reports
for \tpt{} is unchanged when the twenty-chain error map is replaced
by the fifty-chain one.  What is
not converged at $J{=}20$ is MC-DDIM's own $\sigma$, which improves by
$+0.008$ (lung) and $+0.027$ (brain, inside tissue) when it is
estimated from fifty chains rather than twenty.  The probe's map is
unchanged by construction, so the whole movement in
Table~\ref{tab:j50} is the reference improving.

\begin{table}[h]
\centering
\footnotesize
\setlength{\tabcolsep}{3pt}
\begin{tabular}{llcccc}
\toprule
 & & \multicolumn{2}{c}{$J{=}20$} & \multicolumn{2}{c}{$J{=}50$} \\
\cmidrule(lr){3-4}\cmidrule(lr){5-6}
 & & \tpt{} & MC & \tpt{} & MC \\
\midrule
Lung CT & $\rho$, tissue & $0.670$ & $0.694$ & $0.671$ & $0.702$ \\
$N{=}130$, $G{=}47$ & $\Delta\rho$ & \multicolumn{2}{c}{$-0.022\,[-0.037,-0.007]$} & \multicolumn{2}{c}{$-0.025\,[-0.039,-0.012]$} \\
 & $\Delta\rho$, interior & \multicolumn{2}{c}{$-0.023\,[-0.043,-0.005]$} & \multicolumn{2}{c}{$-0.026\,[-0.044,-0.009]$} \\
 & $\Delta\rho$, partial (grad.\ rm.) & \multicolumn{2}{c}{$-0.060\,[-0.090,-0.031]$} & \multicolumn{2}{c}{$-0.057\,[-0.087,-0.029]$} \\
 & $\Delta$AUROC & \multicolumn{2}{c}{$-0.027\,[-0.039,-0.017]$} & \multicolumn{2}{c}{$-0.033\,[-0.044,-0.022]$} \\
\midrule
Brain MRI & $\rho$, tissue & $0.356$ & $0.318$ & $0.356$ & $0.345$ \\
$N{=}30$, $G{=}5$ & $\Delta\rho$ & \multicolumn{2}{c}{$+0.038$} & \multicolumn{2}{c}{$+0.010$} \\
 & $\Delta\rho$, interior & \multicolumn{2}{c}{$+0.046$} & \multicolumn{2}{c}{$+0.015$} \\
 & $\Delta\rho$, partial (grad.\ rm.) & \multicolumn{2}{c}{$+0.043$} & \multicolumn{2}{c}{$+0.012$} \\
 & $\Delta$AUROC & \multicolumn{2}{c}{$+0.020$} & \multicolumn{2}{c}{$+0.000$} \\
 & volumes with \tpt{} ahead & \multicolumn{2}{c}{$24/30$} & \multicolumn{2}{c}{$12/30$} \\
\bottomrule
\end{tabular}
\caption{Rescoring against a $J{=}50$ reference.  $\Delta\rho$ and $\Delta$AUROC are \tpt{}
minus MC-DDIM, as everywhere else; the lung-CT $J{=}20$ intervals are
those of Table~\ref{tab:masked}.  The lung-CT rows are the full
cohort with participant-level intervals; the brain rows are the first
thirty volumes, which come from five participants, so they are
reported as a convergence check and carry no intervals --- resampling
five clusters would not mean anything.  The brain $J{=}20$ column is
recomputed on those same thirty volumes, so the two columns are
paired; it is larger there ($+0.038$) than over the full cohort
($+0.016$, Table~\ref{tab:masked}), which is why only the within-table
comparison is read.  At fifty chains the ensemble costs $3{,}050$
network evaluations per volume against the probe's $76$, and inside
brain tissue the two are then close on every endpoint, AUROC
included.  The last row counts volumes on which \tpt{} has the
higher tissue $\rho$.}
\label{tab:j50}
\end{table}

\section{Sensitivity to the probe timestep}
\label{app:tstar}

The probe timestep is the only continuous knob in \tpt.  The sweep
below reruns the full lung-CT masked pass of Table~\ref{tab:masked}
at four values of $t^\ast$, self-chain probe, $K = 30$, deployment
conditioning, everything else fixed; the $t^\ast = 60$ column is
Table~\ref{tab:masked} itself.

\begin{table}[h]
\centering
\small
\setlength{\tabcolsep}{4pt}
\begin{tabular}{lccccc}
\toprule
 & $t^\ast{=}15$ & $t^\ast{=}30$ & $t^\ast{=}60$ & $t^\ast{=}120$ & MC-DDIM \\
\midrule
$\rho$, tissue        & $0.663$ & $0.671$ & $0.670$ & $0.669$ & $0.694$ \\
$\rho$, interior        & $0.619$ & $0.621$ & $0.612$ & $0.606$ & $0.633$ \\
$\rho$, edges           & $0.244$ & $0.296$ & $0.352$ & $0.406$ & $0.528$ \\
$\rho$, change region   & $0.042$ & $0.065$ & $0.100$ & $0.145$ & $0.265$ \\
$\rho$, partial (grad.\ rm.) & $0.317$ & $0.334$ & $0.339$ & $0.346$ & $0.409$ \\
AUROC, top-5\% error  & $0.854$ & $0.870$ & $0.884$ & $0.897$ & $0.922$ \\
\midrule
$\Delta\rho$, tissue & $-0.027$ & $-0.020$ & $-0.022$ & $-0.026$ & --- \\
$\Delta\rho$, partial (grad.\ rm.) & $-0.093$ & $-0.072$ & $-0.060$ & $-0.048$ & --- \\
\bottomrule
\end{tabular}
\caption{Sensitivity to the probe timestep on lung CT ($N{=}130$,
$G{=}47$), self-chain probe, $K = 30$, scored against the common
error map; absolutes are per-volume means, the paired contrasts are
participant-weighted, and every contrast interval excludes zero.
Tissue $\rho$ moves by less than $0.01$ between $t^\ast = 15$ and
$120$, the gradient-controlled partial by $0.03$.  The regions where the probe trails MC-DDIM, edges and true
change, improve monotonically with $t^\ast$ and at $120$ remain well
below it.  $t^\ast = 60$ was fixed before any of these runs and is
kept.}
\label{tab:tstar}
\label{tab:cost}
\end{table}

\section{How many probes?}
\label{app:kablation}

The reverse chain dominates the per-volume cost, so $K$ can be swept
from the same chain at the price of the extra probe forwards alone.
Table~\ref{tab:kablation} does this over the complete scored set of
lung CT; the OASIS value of $K$ was fixed by the same reasoning and
not swept.

\begin{table}[h]
\centering
\small
\begin{tabular}{rrcc}
\toprule
 & \multicolumn{3}{c}{PNG ($N{=}130$)} \\
\cmidrule(lr){2-4}
$K$ & Fwd. & $\bar\rho(\sigma, e)$ & \% of $K{=}30$ \\
\midrule
$5$  & $66$ & $0.567\;[0.530, 0.603]$ & $92.4\%$ \\
$10$ & $71$ & $0.595\;[0.557, 0.631]$ & $96.9\%$ \\
$15$ & $76$ & $0.604\;[0.566, 0.639]$ & $98.4\%$ \\
$30$ & $91$ & $0.613\;[0.575, 0.649]$ & $100\%$ \\
\bottomrule
\end{tabular}
\caption{Probe size, self-chain probe, $t^\ast = 60$, lung CT,
participant-weighted with participant-level intervals; percentages
relative to $K = 30$.  Returns
flatten quickly: $K = 10$ recovers $96.9\%$ and $K=15$ recovers
$98.4\%$.  The reverse chain dominates the per-volume cost, so the
sweep is obtained from the same chain and covers the whole test set
rather than a subset.  We report $K = 30$ on PNG and $K = 15$ on
OASIS, both fixed before any 63-participant score was read; nothing
in the method depends on those choices.  The maps are scored against
the self-chain (deployment) error map, as in
Table~\ref{tab:aggregate}.}
\label{tab:kablation}
\end{table}

\end{document}